\providecommand{\lin}[1]{\ensuremath{\left\langle #1 \right\rangle}}
\providecommand{\abs}[1]{\left\lvert#1\right\rvert}
\providecommand{\norm}[1]{\left\lVert#1\right\rVert}
  \providecommand{\R}{\mathbb{R}} 
  \DeclareMathOperator{\E}{{\mathbb E}}
  \providecommand{\E}[1]{{\mathbb E}\left.#1\right. }     
  \DeclareMathOperator*{\argmin}{arg\,min}
  \providecommand{\cD}{\mathcal{D}}
  \providecommand{\cN}{\mathcal{N}}
  \providecommand{\cO}{\mathcal{O}}
  \providecommand{\cW}{\mathcal{W}}
\definecolor{mydarkblue}{rgb}{0,0.08,0.45}
\newcommand{\ca}[1]{\mathcal{#1}}
\newcommand{\range}{\mathrm{rg}}
\theoremstyle{plain}
\newtheorem{theorem}{Theorem}[section]
\newtheorem{proposition}{Proposition}
\newtheorem{lemma}{Lemma}
\newtheorem{corollary}{Corollary}
\theoremstyle{definition}
\newtheorem{definition}{Definition}
\newtheorem{assumption}{Assumption}
\newtheorem{remark}{Remark}
\definecolor{lightgray}{gray}{0.5}
\title{\textbf{An Improved Algorithm for Clustered Federated Learning }}
\author{Harshvardhan, Avishek Ghosh and Arya Mazumdar  \vspace{2mm} \\
Halıcıoğlu Data Science Institute  and Department of Computer Science\\
UC San Diego \\
\vspace{1.5mm}
email: \{hharshvardhan,a2ghosh,arya\}$@$ucsd.edu
}
\date{}
\begin{document}
\maketitle

\begin{abstract}
    In this paper, we address the dichotomy between heterogeneous models and simultaneous training in Federated Learning (FL) via a clustering framework. We define a new clustering model for FL based on the (optimal) local models of the users: two users belong to the same cluster if their local models are close; otherwise they belong to different clusters. A standard algorithm for clustered FL is proposed in \cite{ghosh_efficient_2021}, called \texttt{IFCA}, which requires \emph{suitable} initialization and the knowledge of hyper-parameters like the number of clusters (which is often quite difficult to obtain in practical applications) to converge. We propose an improved algorithm, \emph{Successive Refine Federated Clustering Algorithm} (\texttt{SR-FCA}), which removes such restrictive assumptions. \texttt{SR-FCA} treats each user as a singleton cluster as an initialization, and then successively refine the cluster estimation via exploiting similar users belonging to the same cluster. In any intermediate step, \texttt{SR-FCA} uses a robust federated learning algorithm within each cluster to exploit simultaneous training and to correct clustering errors. Furthermore, \texttt{SR-FCA} does not require any  \emph{good} initialization (warm start), both in theory and practice. We show that with proper choice of learning rate, \texttt{SR-FCA} incurs arbitrarily small clustering error. Additionally, we validate the performance of our algorithm on standard  FL datasets in non-convex problems like neural nets, and we show the benefits of \texttt{SR-FCA} over baselines\footnote{The code for all experiments is available at \href{https://github.com/harshv834/sr-fca}{https://github.com/harshv834/sr-fca}.}.
\end{abstract}

\vspace{-4mm}
\section{Introduction}
\label{sec:intro}

In modern applications like recommendation systems, natural language processing, autonomous cars, image recognition, the size of data has exploded to such a point that distributed and parallel computing has become unavoidable. Furthermore, in many applications the data is actually stored at the edge---in users' personal devices like mobile phones and personal computers. Federated Learning, (FL) introduced in \citep{mcmahan2016communication,konevcny2016federated,mcmahan2017federated} is a large scale distributed learning paradigm aimed to exploit the machine intelligence in users' local devices. Owing to its highly decentralized nature, several statistical and computational challenges arise in FL, and in this paper, we aim to address one such challenge: heterogeneity.

The issue of heterogeneity is crucial for FL, since the data resides in users' own devices, and naturally no two devices have identical data distribution. There has been a rich body of literature in FL to address this problem of non iid data. A line of research assumes the \emph{degree of dissimilarity} across users are small, and hence focuses on learning a single global model~\citep{zhao2018federated,sahu-heterogeneous,li2018rsa,sattler-non-iid,mohri2019agnostic,karimireddy_scaffold_2020}. Note that learning a single model may not be sufficient in the situation where the degree of similarity is large or the users prefer to learn their personalized individual model. We direct the readers to two survey papers (and the references therein), \cite{li2020federated,kairouz2019advances} for a comprehensive list of papers on heterogeneity in FL.

As an alternative to the above, a new line of research in FL focuses on obtaining models personalized to individual users. For example  \cite{fedprox,li2021ditto} uses a regularization to obtaining individual models for users and the regularization ensures that the local models stay close to the global model. Another line of work poses the heterogeneous FL as a meta learning problem~\citep{chen2018federated,jiang2019improving,fallah_personalized_2020,fallah2020convergence}. Here, the objective is to first obtain a single global model, and then each device run some local iterations (fine tune) the global model to obtain their local models. Furthermore \cite{collins-exploit} exploits shared representation across users by running an alternating minimization algorithm and personalization. Note that all these personalization algorithms, including meta learning, work only when the local models of the users' are close to one another.

On the other spectrum, when the local models of the users may not be close to one another, \cite{sattler2019clustered,mansour2020three,ghosh_efficient_2021} propose a clustering framework, where the objective is to obtain individual models for each cluster. Note that \cite{sattler2019clustered} uses a centralized clustering scheme, where the center has a significant amount of compute load, which is not desirable for FL. Furthermore, it uses a top-down approach using cosine similarity metric between gradient norm as optimization objective. Also, the theoretical guarantees of  \cite{sattler2019clustered} are limited. Further, in \cite{duan2020fedgroup}, a data-driven similarity metric is  used extending the cosine similarity and the framework of \cite{sattler2019clustered}. Moreover, in \cite{mansour2020three}, the authors propose algorithms for both clustering and personalization. However, they provide guarantees only on generalization, not iterate convergence. In  ~\cite{mocha} the job of  multi-task learning is framed as clustering where a regularizer in the optimization problem defines clustering objective.

Very recently, in \cite{ghosh_efficient_2021}, a framework of \textit{Clustered Federated Learning} is analyzed and an iterative algorithm, namely \texttt{IFCA}, is proposed that attains (exponential) convergence guarantees under \emph{suitable} initialization. Moreover, in \cite{ghosh_efficient_2021}, all the users are partitioned into a fixed and known number of clusters, and the users' in each cluster have identical data distribution. Note that, the convergence guarantee of \texttt{IFCA} depends crucially on \emph{suitable} initialization condition (or warm start), which is also impractical for practical applications. Furthermore, it is discussed in the same paper that the knowledge about the number of clusters is quite non-trivial to obtain in applications (see \citep[Section 6.3]{ghosh_efficient_2021}). Moreover, since the machines inside a cluster have data from a same distribution, the local models of all the machines in a cluster are identical (we formalize this in Section~\ref{sec:pblm_setting}). This is a fairly strong assumption, since in FL applications, users who belong to same cluster, may have similar data. However, the data distribution may not be identical, and so, their local models are close, but not identical. 

Following \texttt{IFCA}, a number of papers attempt to extend the federated clustering framework. For example, in \cite{fedsoft}, a  soft-clustering version of \texttt{IFCA} was proposed, where each data on a device can belong (probabilistic-ally) to different cluster. In \cite{multi_center}, an \texttt{IFCA} inspired algorithm is proposed that uses neuron matching as distance metric. Moreover, the soft clustering idea is also discussed in \cite{soft_cluster}. To the best of our knowledge, although the above-mentioned papers extend \texttt{IFCA} in certain directions, the crucial shortcomings of \texttt{IFCA}, namely \emph{good} initialization and the knowledge of the number of clusters remain unanswered.

In this paper, we propose a clustering framework for Federated Learning that overcomes the above-mentioned shortcomings. Specifically, we propose and analyze an algorithm, namely \emph{Successive Refine Federated Clustering Algorithm} (\texttt{SR-FCA}), which iteratively estimates an refines the cluster identities of the machines. We show that, \texttt{SR-FCA} obtains arbitrary small clustering error. One attractive feature of \texttt{SR-FCA} is that it does not require the knowledge of the number of clusters apriori (instead it requires a weak condition on the minimum size of the cluster, as explained in Section~\ref{sec:algo}). Moreover, \texttt{SR-FCA} works with \emph{arbitrary} initialization, which is a major theoretical as well as practical novelty over existing literature. Furthermore, in \texttt{SR-FCA} we remove the requirement that all machines belonging to the same cluster possesses same local model.

To be precise, we define a novel clustering structure (see Definition~\ref{assumption:clustering}), based on the local models on the worker machines\footnote{Throughout the paper, we use nodes, users, machines, workers synonymously to denote the compute nodes in FL.}. Classically, clustering is defined in terms of distribution from which the machines sample data. However, in a federated framework, it is common to define a heterogeneous framework such as clustering in terms of other discrepancy metric; for example in \cite{mansour2020three}, a metric that depends on the local loss is used.

\textbf{Distance Metric:} In this paper, we use a distance metric across workers' local model as a discrepancy measure and define a clustering setup based on this. Our distance metric may in general  include non-trivial metric like Wasserstein distance, $\ell_q$ norm (with $q \geq 1)$ that captures desired practical properties like permutation invariance and sparsity for (deep) neural-net training. For our theoretical results, we focus on strongly convex and smooth loss for which $\ell_2$ norm of iterates turns out to be the natural choice. However, for non-convex neural networks on which we run most of our experiments, we use a cross-cluster loss metric. For two clients $i,j$, we define their cross-cluster loss metric as the average of the cross entropy loss of client $i$ on the model of client $j$ and the cross entropy loss of client $j$ on the model of client $i$. If this metric is low, we can use the model of client $i$ for client $j$ and vice-versa, implying that the clients are similar.

With the above discrepancy metric, we put the machines in same cluster if their local models are close -- otherwise they are in different clusters. We emphasize that, we do not necessarily require the local models in a cluster to be identical; a closeness condition is sufficient. Moreover, the clustering identities of the workers are unknown apriori, and \texttt{SR-FCA} iteratively learns them. We now list our contributions.
\subsection{Our Contributions}
\subsubsection{Algorithmic}
We introduce a new clustering framework based on local user models and propose an iterative clustering algorithm, namely \texttt{SR-FCA}. Our algorithm starts with simple pairwise distance based clustering, and refine those estimates (and merge in necessary) over multiple rounds. We observe that the successive refinement step exploits collaboration across users in the same cluster, and reduces the clustering error. In particular, we use the first order gradient based robust FL algorithm of \cite{pmlr-v80-yin18a} for federation. We require a robust algorithm because we treat the wrongly clustered machines as outliers. However, we do not throw the outliers away like \cite{pmlr-v80-yin18a}; rather we reassign them to their closest cluster.

When the loss is strongly convex and smooth, and  $\textsf{dist}(.,.)$  is $\ell_2$ norm, we show that, the mis-clustering error in the first stage of \texttt{SR-FCA} is given by $\mathcal{O}(md \exp(-n/\sqrt{d})$ (Theorem~\ref{thm:init}), where $m$, $n$ and $d$ denote the number of worker machines, the amount of data in each machine and the dimensionality of the problem respectively. Moreover, successive stages of \texttt{SR-FCA} further reduce the mis-clustering error by a factor of $\mathcal{O}(1/m)$ (Theorem~\ref{thm:refine}), and hence yields arbitrarily small error. In practice we require a very few refinement steps (we refine at most twice in experiments, see Section~\ref{sec:experiments}). 

Furthermore, we compare our results with \texttt{IFCA} both theoretically and experimentally. We notice that the requirement on the separation of clusters is quite mild for \texttt{SR-FCA}. We only need the separation to be $\Tilde{\Omega}(\frac{1}{n})$\footnote{Here, $\Tilde{\Omega}$ hides logarithmic dependence}. On the other hand, in certain regimes, \texttt{IFCA} requires a separation of $\Tilde{\Omega}(\frac{1}{n^{1/5}})$, which is a much stronger requirement. 

As a by-product of \texttt{SR-FCA}, we also obtain an appropriate loss minimizer for each cluster, defined in Eq.~\eqref{eq:wstar}---which in conjunction with our clustering model, is a reasonable good approximation for all the machines in that cluster (see Theorem~\ref{thm:convergence}). The clustering estimates are obtained by leveraging federation across similar users in a cluster. We notice that the statistical error we obtain here is $\Tilde{\mathcal{O}}(1/\sqrt{n})$, which is weaker than \texttt{IFCA} (statistical error of \texttt{IFCA} is $\Tilde{\mathcal{O}}(1/n)$; see \cite[Theorem 2]{ghosh_efficient_2021}). This weaker rate can be thought as the price of random initialization. For \texttt{IFCA}, a \emph{good} initialization implies that only a very few machines are mis-clustered, which was crucially required to obtain the $\Tilde{\mathcal{O}}(1/n)$ rate. But, for \texttt{SR-FCA}, we do not have such guarantees to begin with, and we necessarily take a union bound on \emph{all} machines, which results in a weaker statistical error.
\vspace{-2mm}
\subsubsection{Technical Novelty}
\vspace{-1mm}
A key requirement in any clustering problem is \emph{suitable} initialization. However, \texttt{SR-FCA} removes this requirement completely, and allows the worker machines to start arbitrarily and run some number of local iterations. We show that provided the loss function is strongly convex and smooth, and the problem is \emph{well-separated}, pairwise distance based clustering of these local iterates provide a \emph{reasonably} good initialization, and our algorithm refines this iteratively. However, this initialization imposes further complications---for example, an original cluster can split into multiple clusters or some machines may be unassigned to any clusters. To handle all these issues, we crucially leverage (a) sharp generalization guarantees for strongly convex losses with subexponential gradients and (b) robustness property of the trimmed mean estimator (of \cite{pmlr-v80-yin18a}).

\subsubsection{Experiments}
We implement \texttt{SR-FCA} on wide variety of datasets including synthetic, rotated or inverted MNIST, CIFAR10, FEMNIST and Shakespeare~\cite{leaf}). With $\ell_2$ distance metric for synthetic and cross-cluster loss for the rest, we observe that the test performance of  \texttt{SR-FCA} outperforms three baselines---(a) global (one model for all machines) and (b) local (one model per machine), (c)\texttt{IFCA}. Further, on synthetic and simulated datasets, \texttt{SR-FCA} recovers the correct clustering $\mathcal{C}^\star$. On real datasets, \texttt{SR-FCA} can figure out the correct number of clusters and thus outperforms \texttt{IFCA} by around 4\%.

\section{Related Work}
Data heterogeneity is a major challenge in FL, and various frameworks have been proposed in the recent past to address this. Where \cite{zhao2018federated,li2018rsa,mohri2019agnostic,karimireddy_scaffold_2020} uses \emph{degree of dissimilarity} to model the data heterogeneity, \cite{chen2018federated,fallah_personalized_2020,fallah2020convergence} uses meta-learning to achieve the same. 

Clustering is a canonical way to model heterogeneous data, and there is a significant interest in the community. Starting with the centralized cosine similarity based clustering of \cite{sattler2019clustered}, and multi-task based approach of \cite{mocha}, we have the decentralized algorithm (\texttt{IFCA}) of \cite{ghosh_efficient_2021}. Furthermore, there are several variations of \texttt{IFCA}; for example, in \cite{fedsoft,soft_cluster} soft-clustering versions of \texttt{IFCA} are proposed, in \cite{multi_center}, the authors use neuron matching as distance metric.

Apart from this, several personalized (local tuning) models for FL use ideas from clustering. For example, \cite{fallah_personalized_2020} proposes Hypcluster algorithm, which is similar to \texttt{IFCA}. Moreover, \cite{zhang2021parameterized} use personalized method which uses base clustering for knowledge transfer. Furthermore, \cite{hierarchy} use hierarchical top-down clustering and then propagate local update steps down the hierarchy for better trained model. 

\section{Problem Formulation}
\label{sec:pblm_setting}
We have $m$ machines that are partitioned into disjoint clusters,
denoted by the clustering map $\mathcal{C}^\star : [m] \to [C]$\footnote{We denote $[n]\equiv \{1, 2, \dots, n\}$ for the rest of the paper.}, where $C$ is the (unknown) number of clusters. 
For any clustering map $\mathcal{C}'$, let $\range(\mathcal{C}')$ denote the range of the map.
Each node $i\in[m]$  contains $n_i \geq n$
data points $\{z_{i,j}\}_{j=1}^{n_i}$ sampled from a distribution $\cD_i$. 

We define $f(\cdot ;z):\cW  \to \R$ as the loss function for the sample $z$, where $\cW \subseteq \R^d$.
Here, $\cW$ is a closed and convex set with diameter $D$. We now define the population loss, $F_i:\cW\to\R^d$, and its minimizer, $w_i^\star$,  for each node $i \in [m]$.
\begin{align*}
    F_i(w) = \E_{z \sim \cD_i}[f(w, z)],\quad
    w_i^\star = \min_{w \in \cW} F_i(w)
\end{align*}
The clustering $\mathcal{C}^\star$ is based on the population minimizers of nodes $w_i^\star$. This is defined as:
\begin{definition}[Clustering Structure]\label{assumption:clustering} For a distance metric $\mathsf{dist}(.,.)$, the local models satisfy
\begin{equation}
\begin{aligned}
    &\max_{i,j:\mathcal{C}^\star(i) = \mathcal{C}^\star(j)} \mathsf{dist}(w_i^\star,w_j^\star) \leq \epsilon_1 ,\quad \quad \min_{i,j:\mathcal{C}^\star(i)\neq \mathcal{C}^\star(j)} \mathsf{dist}(w_i^\star,w_j^\star) \geq \epsilon_2,
\end{aligned}
\end{equation}
where $\epsilon_1,\epsilon_2$, are non-negative constants with $\epsilon_2 > \epsilon_1$.
\end{definition}
\begin{remark}
The above definition serves as a measure of heterogeneity in our system wrt $\mathsf{dist}(.,.)$, with those inside having similar population minimizers and those in different clusters having different population minimizers.
\end{remark}
\begin{remark} [Relaxation over \texttt{IFCA}]
The above structure allows the population minimizers inside clusters to be close, but not necessarily equal, as opposed to ~\cite{ghosh_efficient_2021}(i.e., \texttt{IFCA} assumes $\epsilon_1=0$).
\end{remark}

In practice, we have access to neither $F_i$ nor $w_i^\star$, but only the sample mean variant of the loss, the empirical risk, $f_i(w) = \frac{1}{n_i}\sum_{j=1}^{n_i} f(w,z_{i,j})$, and its derivatives , for each node $i \in [m]$.

If $G_c \equiv \{ i : i\in [m] , \mathcal{C}^\star(i) = c \}$, denotes the set of nodes in cluster $c$.
We can then define the population loss per cluster $c\in [C]$ as $\ca{F}_c$
\begin{align}\label{eq:wstar}
    \ca{F}_c(w) = \frac{1}{\abs{G_c}}\sum_{i \in G_c}F_i(w), \,\,\, \omega^\ast_c = \argmin_{w \in \cW} \ca{F}_c(w), \,\, \forall c \in [C].
\end{align}
Our final goal is to find an appropriate population loss minimizer for each cluster $c\in [C]$, i.e., $\omega^\ast_c$. Obtaining this involves several layers of complexity: we need to find the correct clustering $\mathcal{C}^\star$ and recover the minimizer of each cluster.
The main difficulties in this setting are: (a) the number of clusters is not known beforehand. This prevents us from using most clustering algorithms like k-means; and (b) The clustering depends on $w_i^\star$ which we do not have access to. We can estimate $w_i^\star$ by minimizing $f_i$, however when $n$, the minimum number of data points per node, is small,  this estimate may be very far from $w_i^\star$.

In spite of the inherent complexity of our problem, we can overcome it by utilizing federation.
Note that estimating  $w_i^\star$ is difficult if each node does
not have enough data points. This apparent difficulty can be mitigated if instead we try to estimate $\omega_c^\star$, the population minimizer for each cluster. For each cluster, we can hope
that nodes in that cluster work together to improve our estimate of $\omega_\mathcal{C}^\star$. But, this requires us to know the clustering. The circular nature of our problem
implies that we need to use an alternating algorithm, which estimates $\omega_c^\star$ in one step and then tries to cluster based on the estimates of $\omega_c^\star$. This forms the main idea of our clustering algorithm.
\section{Algorithm-\texttt{SR-FCA}}
\label{sec:algo}
In this section, we formally present out clustering algorithm, namely \texttt{SR-FCA}. It starts with a subroutine \texttt{ONE\_SHOT} which outputs an initial clustering.  \texttt{SR-FCA} then successively calls the \texttt{REFINE()} subroutine to improve the clustering.

In each step of \texttt{REFINE()}, we first estimate the cluster iterates for each cluster. Then, based on these iterates we regroup all the nodes using \texttt{RECLUSTER()} and then if required merge the resulting clusters, using \texttt{MERGE()}. We crucially require an initial clustering to start \texttt{REFINE()}, and \texttt{ONE\_SHOT} provides that when \texttt{REFINE()} is called for the first time.

Since our goal is to end up with the correct clustering $\mathcal{C}^\star$ and its cluster models, it is important to understand how similar any arbitrary clustering  $C'$ (for instance the ones generated by \texttt{ONE\_SHOT} and \texttt{REFINE}) is to the original clustering $\mathcal{C}^\star$. To that end, consider a cluster $c \in \range(C')$. Then, we can first define its label with respect to $\mathcal{C}^\star$.
\begin{definition}[Cluster label]
\label{def:cluster_label}
    We define $c' \in [C]$, as the cluster label of cluster $c \in \range(C')$ if the majority of nodes in $c$ are originally from $c'$.
\end{definition}
This definition allows us to map each cluster  $c \in \range(C')$ to a cluster $c$ in $\mathcal{C}^\star$. Using the cluster label $c'$, we can define the impurities in cluster $c$ as the nodes which did not come from $c'$. Therefore, if a cluster has cluster label $c'$ and has very low fraction of impurities, it can serve as a good proxy for the original cluster $c'$ in $\mathcal{C}^\star$.

We now explain the different subroutines.

\begin{algorithm}[t!]
    \caption{SR-FCA}\label{alg:SR_FCA}
    \begin{algorithmic}
        \STATE {\bfseries Input:}  Threshold
        $\lambda$, Size parameter $t$
        \STATE {\bfseries Output:} Clustering $C_R$

        \STATE $C_0 \gets$ \texttt{ONE\_SHOT(}$\lambda$, $t$\texttt{)}\\

        \FOR{$r = 1$ to $R$}
        \STATE $C_r \gets$ \texttt{REFINE(}$C_{r-1}, \lambda$\texttt{)}
        \ENDFOR
        \STATE \underline{\texttt{ONE\_SHOT(}$\lambda, t$\texttt{)}}
        \FOR{all $i$ clients in parallel }
        \STATE $w_i \gets $ Train local model for client $i$ for $T$ steps
        \ENDFOR
        \STATE $G \gets$ Graph with $m$ vertices and no edges
        \FOR{all pairs of clients $i,j\in [m], i\neq j $}
        \STATE Add edge $(i,j)$ to the graph $G$ if $\mathsf{dist}(w_{i,T},w_{j,T}) \leq \lambda$
        \ENDFOR
        \STATE $C_0 \gets$ Obtain clusters from graph $G$ with size $\geq t$ by correlation clustering of \cite{Bansal02correlationclustering}.
        \STATE \underline{\texttt{REFINE($C_{r-1}, \lambda$)}}
        \FOR{all clusters $c \in C_{r-1}$}
        \STATE $w_c \gets$ TrimmedMeanGD()
        \ENDFOR
        \STATE $C_r' \gets $\texttt{RECLUSTER(} $C_{r-1}$\texttt{)}
        \STATE $C_{r} \gets$ \texttt{MERGE(}$C_r',\lambda,t$\texttt{)}
    \end{algorithmic}
\end{algorithm}

\subsection{\textbf{\texttt{ONE\_SHOT()}}}

For our initial clustering, we create edges between nodes based on the distance between their locally trained models if  $\mathsf{dist}(w_{i}, w_{j}) \leq \lambda$, for a threshold $\lambda$ and then obtain clusters from this graph by correlation clustering of \cite{Bansal02correlationclustering}.
We only keep the clusters which have at least $t$ nodes.

If our locally trained models $w_i$, were close to their population minimizers $w_i^\star$, for all nodes $i\in [m]$, then choosing the threshold $\lambda \in (\epsilon_1, \epsilon_2)$,
we obtain edges between only clients which were in the same cluster in $\mathcal{C}^\star$. However, if $n$, the number of local datapoints is small, then our estimates of
local models $w_i$ might be very far from their corresponding $w_i^\star$ and we will not be able to recover $\mathcal{C}^\star$.

However, $\mathcal{C}_0$ is still a good clustering  if it satisfies these requirements: (a) if every cluster in $\range(\mathcal{C}^\star)$ has a good proxy (in the sense of Definition~\ref{def:cluster_label}) in $\range(\mathcal{C}_0)$ , and (b) each cluster in $\mathcal{C}_0$ has at most a small fraction ($ <\frac{1}{2}$) of impurities in it.
These requirements imply that the clusters in $\mathcal{C}_0$ are mostly ``pure" and represent all clusters in $\mathcal{C}^\star$. If the loss is structured, as shown in Theorem~\ref{thm:init}, we can obtain low mis-clustering error after \texttt{ONE\_SHOT}.

\subsection{\textbf{\texttt{REFINE():}}}

\textbf{Subroutine TrimmedMeanGD()}:
The main problem with \texttt{ONE\_SHOT()}, namely, small $n$, can be mitigated if we use federation.
Since, $\mathcal{C}_0$ has atleast $t$ nodes per cluster, training a single model for each cluster will utilize $tn$ datapoints, making the estimation more accurate.
However, the presence of impurities in a cluster can hamper this, motivating the use of a robust training algorithm, in this case, TrimmedMean \citep{pmlr-v80-yin18a}.

This subroutine is similar to FedAvg \citep{mcmahan2016communication}, but instead of taking the average of local models, we take the coordinate-wise trimmed mean. We use $\beta < \frac{1}{2}$ to
define the level of trimming and tune it in our experiments.

We end up with a trained model for each cluster as an output of this subroutine. Since these models are better estimates of their population risk minimizers,
we can use them to improve $\mathcal{C}_0$.

\textbf{Subroutine RECLUSTER()}
The purpose of this subroutine is to reduce the impurity level of each cluster in $\mathcal{C}_0$.
This is done by assigning each client $i$ to its nearest cluster $c$ in terms of $\mathsf{dist}(w_c, w_i)$.
Since $w_c$ are better estimates, we hope that the each impure node will go to a cluster with its actual cluster label. If we have a cluster in $\range(\mathcal{C}^\star)$ which does not have a good proxy in $\range(\mathcal{C}_0)$, then the nodes of this cluster will always be impurities.

\textbf{Subroutine MERGE():}
Even after removing all impurities from each cluster, we can still end up with more than 1 cluster having the same cluster label.
In $\mathcal{C}^\star$, these form the same cluster, thus they should be merged. 
If two clusters were originally from the same cluster in $\mathcal{C}^\star$, then their learned models should be very close. Therefore, we use an approach similar to \texttt{ONE\_SHOT},
where instead of distances between client models, we take distances between cluster models, construct edges and then merge the clusters.

Note that the complete algorithm with formal descriptions of all  these subroutines can be found in Appendix~\ref{sec:complete_algo}.
\subsection{Discussion}
\texttt{SR-FCA} uses a bottom-up approach to construct and refine clusters. The initialization in \texttt{ONE\_SHOT} is obtained by distance-based thresholding on local models. These local models are improper estimates of their population minimizers due to small $n$, causing $\mathcal{C}_0 \neq \mathcal{C}^\star$. However, if $\mathcal{C}_0$ is not very bad, i.e., each cluster has $<\frac{1}{2}$ impurity fraction and all clusters in $\mathcal{C}^\star$ are represented, we can refine it.

\texttt{REFINE()} is an alternating procedure, where we first estimate cluster centers from impure clusters. Then, we \texttt{RECLUSTER()} to remove the impurities in each cluster and then \texttt{MERGE()} the clusters which should be merged according to $\mathcal{C}^\star$. Note that as these steps use cluster estimates which are more accurate, they should have smaller error 

This iterative procedure should recover one cluster for each cluster in $\mathcal{C}^\star$, thus obtaining the number of clusters and every cluster should be pure, so that $\mathcal{C}^\star$ is exactly recovered. 

Note that the \texttt{TrimmedMeanGD} procedure also returns iterates, however, these may not have the best performance. Once we have recovered $\mathcal{C}^\star$, we can run a FL algorithm inside each cluster if we need better cluster iterates.

In the next section, we will provide theoretical justification for several of our claims and establish the probability of clustering error and convergence rates for the cluster iterates obtained by \texttt{TrimmedMeanGD}.

\section{Theoretical Guarantees}
\label{sec:theory}
In this section, we obtain the convergence guarantees of \texttt{SR-FCA}. For theoretical tractability,  we restrict to the setting where the \textsf{dist}$(.,.)$ is the euclidean ($\ell_2$) norm. However, in experiments (see next section), we remove this restriction and work with other \textsf{dist}$(.,.)$ functions. Here, we show an example where $\ell_2$ norm comes naturally as the \textsf{dist}$(.,.)$ function.
\begin{proposition}\label{prop:lin_reg}
Suppose that there are $m$ clients, each with a local model $w_i^\star \in \R^d$ and its datapoint $(x,y_i) \in \R^d \times \R$ is generated according to $y_i = \lin{w_i^\star, x} + \epsilon_i$. If $x \sim \cN(0,I_d)$ and $\epsilon_i \stackrel{i.i.d}{\sim}\cN(0,\sigma^2)$, then $\E_x[KL(p(y_i | x)||p(y_j| x))] = \frac{d}{2\sigma^2} \norm{w_i - w_j}^2$.
\end{proposition}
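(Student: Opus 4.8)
The plan is to reduce the claim to the closed-form Kullback--Leibler divergence between two univariate Gaussians of equal variance, and then average the resulting quadratic form over the design vector $x$. First I would observe that, conditionally on $x$, the relation $y_i = \lin{w_i^\star,x} + \epsilon_i$ with $\epsilon_i \sim \cN(0,\sigma^2)$ makes $p(y_i \mid x)$ the density of $\cN(\lin{w_i^\star,x},\sigma^2)$, and likewise $p(y_j \mid x) = \cN(\lin{w_j^\star,x},\sigma^2)$. Two Gaussians with a common variance $\sigma^2$ and means $\mu_1,\mu_2$ satisfy $KL(\cN(\mu_1,\sigma^2)\,\|\,\cN(\mu_2,\sigma^2)) = (\mu_1-\mu_2)^2/(2\sigma^2)$; this is immediate from the definition, since in the log-likelihood ratio the terms quadratic in $y$ cancel (equal variances) and, after taking the expectation under $p_1$, only the squared gap between the means survives. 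Substituting $\mu_1 = \lin{w_i^\star,x}$ and $\mu_2 = \lin{w_j^\star,x}$ gives
\[
KL\!\big(p(y_i\mid x)\,\big\|\,p(y_j\mid x)\big) \;=\; \frac{\lin{w_i^\star - w_j^\star,\,x}^2}{2\sigma^2}.
\]

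Second, I would take the expectation over $x$. Writing $v = w_i^\star - w_j^\star$, linearity together with $\lin{v,x}^2 = v^\top (xx^\top) v$ reduces the task to computing $\E_x[xx^\top]$: for a centered Gaussian design this is a multiple of the identity, so $\E_x[\lin{v,x}^2]$ equals that constant times $\norm{v}^2$, and dividing by $2\sigma^2$ yields the stated right-hand side. Equivalently, one can note that $\lin{v,x}$ is itself a zero-mean Gaussian whose variance is exactly the quadratic form above, so its second moment can be read off directly. (With the literally isotropic choice $\E_x[xx^\top] = I_d$ this produces $\norm{v}^2/(2\sigma^2)$; the leading factor $d$ in the statement is precisely the normalization that $\E_x[xx^\top]$ carries under the scaling used there, and the constant scales linearly with the per-coordinate variance of $x$.)

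I do not expect a genuine obstacle here: the argument is two short deterministic steps --- the Gaussian KL identity followed by the second moment of a Gaussian quadratic form. The only place that needs care is bookkeeping: keeping the noise variance $\sigma^2$ identical across the two conditionals so the cancellation in the log-ratio goes through, and being explicit about the design covariance so that the leading constant in the final expression is reported consistently with the modeling assumptions.
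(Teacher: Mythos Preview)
Your approach is identical to the paper's: identify $p(y_i\mid x)$ and $p(y_j\mid x)$ as univariate Gaussians with common variance $\sigma^2$, apply the closed-form KL to obtain $\lin{w_i^\star-w_j^\star,x}^2/(2\sigma^2)$, and then take the expectation over $x$. Your caveat about the leading constant is well-placed: with $x\sim\cN(0,I_d)$ one has $\E_x[\lin{v,x}^2]=\norm{v}^2$, so the factor $d$ in the statement (which the paper's own proof simply asserts without comment) appears to be a slip unless the per-coordinate variance of $x$ is scaled to $d$.
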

Hence, we see that minimizing a natural distance (KL divergence) between the conditional distribution $y|x$ for different clients is equivalent to minimizing the $\ell_2$ norm of the underlying local models.

Our goal here is to recover both the clustering and clustering iterates. We first quantify the probability of not recovering the original clustering, i.e., $C_r\neq C^\star$. Here and subsequently, two clusters being not equal means they are not equal after relabeling (see Definition~\ref{def:cluster_label}). 

\begin{remark}
Recall that in Algorithm~\ref{alg:SR_FCA}, we choose $\lambda$ and $t$ as (any) input parameter to the algorithm. However, for the guarantees of this section to hold, we require $\lambda \in (\epsilon_1, \epsilon_2)$ and $t \leq c_{\min}$, where $c_{\min}$ is the minimum size of the cluster. We emphasize that, in practice (as shown in the experiments), we treat $
\lambda$ and $t$ as hyper-parameters and obtain them by tuning. Hence, we do not require the knowledge of $\epsilon_1, \epsilon_2$ and $c_{\min}$.
\end{remark}
\begin{remark}
Although in Algorithm~\ref{alg:SR_FCA}, we use correlation clustering for finding the initial clusters, in theory, we restrict ourselves to finding cliques only. Note that if there are cliques in the graph, then correlation clustering will identify them. 
\end{remark}

We start with reviewing the standard definitions of strongly convex and smooth functions $f:\mathbb{R}^d \mapsto \mathbb{R}$.

\begin{definition}\label{def:strong_cvx}
$f$ is $\mu$-strongly convex if $\forall w,w'$, $f(w') \ge f(w) + \langle \nabla f(w), w' - w \rangle + \frac{\mu}{2}\|w' - w\|^2$.
\end{definition}

\begin{definition}\label{def:smooth}
$f$ is $L$-smooth if $\forall w,w'$, $\|\nabla f(w) - \nabla f(w')\| \le L\|w -w' \|$.
\end{definition}

\begin{definition}\label{def:lipschitz}
$f$ is $L_k$ Lipschitz for every coordinate $k \in [d]$ if, $|\partial_k f(w)| \leq L_k$, where $\partial_k f(w)$ denotes the $k$-th coordinate of $\nabla f(w)$.
\end{definition}

We have the following assumptions on the loss function.
\begin{assumption}[Strong convexity]\label{assumption:str_cvx}
    The loss per sample $f(w,.)$ is $\mu$-strongly convex with respect to $w$.
\end{assumption}
\begin{assumption}[Smoothness]\label{assumption:smooth}
    The loss per sample $f(w,.)$ is also $L$-smooth with respect to $w$.
\end{assumption}

\begin{assumption}[Lipschitz]\label{assumption:lipschitz}
    The loss per sample $f(w,.)$ is $L_k$-Lipschitz for every coordinate $k\in[d]$.  Define $\Hat{L} = \sqrt{\sum_{k=1}^d L_k^2}$.
\end{assumption}
We want to emphasize that the above assumptions are standard and have appeared in the previous literature. For example, the strong convexity and smoothness conditions are often required to obtain theoretical guarantees for clustering (see \cite{ghosh_efficient_2021,lu2016statistical}, which includes \texttt{IFCA} and the classical $k$-means which assume a quadratic objective. The coordinate-wise Lipschitz assumption is also not new and (equivalent assumptions) featured in previous works (see \cite{pmlr-v80-yin18a,pmlr-v97-yin19a}.
We are now ready to show the guarantees of several subroutines of \texttt{SR-FCA}. First, we show the probability of error after the \texttt{ONE\_SHOT} step. Throughout this section, we require Assumptions~~\ref{assumption:str_cvx}, \ref{assumption:smooth} and \ref{assumption:lipschitz} to hold.
\begin{lemma}[Error after \texttt{ONE\_SHOT}]\label{thm:init}
After running \texttt{ONE\_SHOT} with $\eta\leq \frac{1}{L}$ for $T$ iterations, for the threshold $\lambda \in(\epsilon_1,\epsilon_2) $ and some constant $b_2 > 0$, the probability of error is
\begin{align*}
\Pr[C_0 \neq C^\star] \leq p \equiv  md\hspace{2mm}\exp(-n \frac{b_2\Delta}{\Hat{L} \sqrt{d}}),
\end{align*}
provided $\frac{n^{2/3}\Delta^{4/3}}{D^{2/3}\Hat{L}^{2/3}} \lesssim d$, where $\Delta = \frac{\mu}{2}(\frac{\min\{\epsilon_2- \lambda,\lambda - \epsilon_1\}}{2} - (1 - \frac{\mu}{L})^{T/2}D)$ and $n = \min_{i \in[m]}n_i$. 
\end{lemma}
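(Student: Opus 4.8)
The plan is to reduce the event $\{C_0\neq C^\star\}$ to a union over the $m$ machines of events on which the locally trained iterate $w_{i,T}$ is far from the population minimizer $w_i^\star$, and then to control each such event by separating an \emph{optimization} error (distance from $w_{i,T}$ to the empirical risk minimizer $\hat w_i$) from a \emph{statistical} error (distance from $\hat w_i$ to $w_i^\star$).

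\textbf{Step 1 (reduction to per-machine accuracy).} Set $\rho:=\tfrac12\min\{\epsilon_2-\lambda,\ \lambda-\epsilon_1\}$ and, for each $i\in[m]$, let $\mathcal{E}_i:=\{\|w_{i,T}-w_i^\star\|_2<\rho\}$. First I would check that $\bigcap_{i\in[m]}\mathcal{E}_i$ forces $C_0=C^\star$: if $\mathcal{C}^\star(i)=\mathcal{C}^\star(j)$, then the triangle inequality and Definition~\ref{assumption:clustering} give $\|w_{i,T}-w_{j,T}\|_2<2\rho+\epsilon_1\le\lambda$, so $(i,j)$ is an edge of $G$; if $\mathcal{C}^\star(i)\neq\mathcal{C}^\star(j)$, then $\|w_{i,T}-w_{j,T}\|_2>\epsilon_2-2\rho\ge\lambda$, so $(i,j)$ is not an edge. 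Hence $G$ is a disjoint union of cliques, one per true cluster $G_c$; since we restrict to clique-finding (see the remark preceding the statement) and $|G_c|\ge c_{\min}\ge t$ for every $c$, each cluster is recovered and none discarded, so $C_0=C^\star$. Therefore $\Pr[C_0\neq C^\star]\le\sum_{i=1}^m\Pr[\mathcal{E}_i^c]$, and it remains to bound $\Pr[\|w_{i,T}-w_i^\star\|_2\ge\rho]$ for a fixed $i$.

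\textbf{Step 2 (optimization vs.\ statistical error).} Let $\hat w_i:=\argmin_{w\in\mathcal{W}}f_i(w)$. Since the per-sample loss is $\mu$-strongly convex and $L$-smooth (Assumptions~\ref{assumption:str_cvx}--\ref{assumption:smooth}), so are $f_i$ and $F_i$; hence (projected) gradient descent with step size $\eta=\tfrac1L$ contracts geometrically, so $\|w_{i,T}-\hat w_i\|_2\le(1-\tfrac\mu L)^{T/2}\|w_{i,0}-\hat w_i\|_2\le(1-\tfrac\mu L)^{T/2}D$, using $\mathrm{diam}(\mathcal{W})=D$ — this is exactly where the arbitrary initialization is absorbed. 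Consequently, by the triangle inequality, on $\mathcal{E}_i^c$ we have $\|\hat w_i-w_i^\star\|_2\ge\rho-(1-\tfrac\mu L)^{T/2}D=\tfrac{2\Delta}{\mu}$, whence $\Pr[\mathcal{E}_i^c]\le\Pr[\|\hat w_i-w_i^\star\|_2\ge\tfrac{2\Delta}{\mu}]$. (The bound is informative only once $T$ is large enough that $\Delta>0$, i.e.\ $(1-\tfrac\mu L)^{T/2}D<\tfrac12\min\{\epsilon_2-\lambda,\lambda-\epsilon_1\}$.)

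\textbf{Step 3 (a uniform generalization bound).} Strong convexity of $F_i$ together with the first-order optimality of $\hat w_i$ for $f_i$ and of $w_i^\star$ for $F_i$ on $\mathcal{W}$ yields the deterministic estimate $\|\hat w_i-w_i^\star\|_2\le\tfrac1\mu\sup_{w\in\mathcal{W}}\|\nabla f_i(w)-\nabla F_i(w)\|_2$, so it suffices to show $\Pr\big[\sup_{w\in\mathcal{W}}\|\nabla f_i(w)-\nabla F_i(w)\|_2\ge 2\Delta\big]\le d\exp(-n b_2\Delta/(\hat L\sqrt d))$. I would establish this by a covering argument: take a $\delta$-net of $\mathcal{W}$ (cardinality $\le(3D/\delta)^d$ since $\mathrm{diam}(\mathcal{W})=D$), absorb the discretization error into a $2L\delta$ term (the map $w\mapsto\nabla f_i(w)-\nabla F_i(w)$ is $2L$-Lipschitz by smoothness), and at each net point combine $\|\nabla f_i(w)-\nabla F_i(w)\|_2\le\sqrt d\,\|\nabla f_i(w)-\nabla F_i(w)\|_\infty$ with a Bernstein-type tail for each coordinate $\partial_\ell f_i(w)-\partial_\ell F_i(w)=\tfrac1{n_i}\sum_{j}\big(\partial_\ell f(w,z_{i,j})-\mathbb{E}\,\partial_\ell f(w,z)\big)$, an average of mean-zero terms bounded by $L_\ell$ in magnitude (Assumption~\ref{assumption:lipschitz}), with $\hat L=\sqrt{\sum_\ell L_\ell^2}\ge L_\ell$. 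In the large-deviation regime this coordinate tail is $\exp(-c\,n\,(\Delta/\sqrt d)/\hat L)$ — linear in $\Delta$ — and a union bound over the $d$ coordinates, over the net, and over the $m$ machines, with $\delta$ chosen so that the covering term $d\log(3D/\delta)$ is dominated by the main exponent, yields $\Pr[C_0\neq C^\star]\le md\exp(-n b_2\Delta/(\hat L\sqrt d))$; the hypothesis $\tfrac{n^{2/3}\Delta^{4/3}}{D^{2/3}\hat L^{2/3}}\lesssim d$ is what the tradeoff between the net cardinality and the tail exponent requires.

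\textbf{Where the difficulty lies.} Steps 1--2 are routine. The technical heart is Step 3: obtaining \emph{uniform} (over all of $\mathcal{W}$) control of $\|\nabla f_i-\nabla F_i\|_2$ with the stated sub-exponential rate and the precise side condition. The net resolution $\delta$ must be tuned so that simultaneously (i) the Lipschitz slack $2L\delta$ does not erode the target deviation $2\Delta$ and (ii) the covering cardinality $(3D/\delta)^d$ is swamped by the Bernstein exponent $\Theta(n\Delta/(\hat L\sqrt d))$; the constraint $\tfrac{n^{2/3}\Delta^{4/3}}{D^{2/3}\hat L^{2/3}}\lesssim d$ is what makes both compatible, and it is also what keeps the deviation in the Bernstein ``linear'' regime, which is why the final exponent carries $\Delta$ rather than $\Delta^2$.
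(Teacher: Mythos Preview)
Your proposal is correct and follows essentially the same architecture as the paper: reduce $\{C_0\neq C^\star\}$ to $\max_i\|w_{i,T}-w_i^\star\|\ge\rho$ via the edge/triangle-inequality argument, then control each machine's deviation by separating an optimization term $(1-\mu/L)^{T/2}D$ from a statistical term governed by $\sup_{w\in\mathcal W}\|\nabla f_i(w)-\nabla F_i(w)\|$, and finally bound the latter by the covering-plus-coordinatewise-subexponential argument of \cite{pmlr-v80-yin18a}. The only cosmetic difference is your Step~2: you pass through the empirical minimizer $\hat w_i$ (GD contracts to $\hat w_i$, then $\|\hat w_i-w_i^\star\|\le\tfrac1\mu\sup_w\|\nabla f_i-\nabla F_i\|$), whereas the paper analyzes the recursion $\|w_{i,t+1}-w_i^\star\|$ directly and obtains $\|w_{i,T}-w_i^\star\|\le(1-\kappa^{-1})^{T/2}D+\tfrac{2}{\mu}\Lambda_i$ with $\Lambda_i$ the same uniform gradient deviation; both routes land on the identical tail event $\{\Lambda_i\gtrsim\Delta\}$ and the same final bound up to absolute constants.
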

The proof of this Theorem is presented in Appendix~\ref{sec:init_proof}. We would like to emphasize that the probability of error is exponential in $n$, yielding a \emph{reasonable} good clustering after the \texttt{ONE\_SHOT} step. Note that the best probability of error is obtained when $\lambda = \frac{\epsilon_1 + \epsilon_2}{2}$.
\begin{remark}[Separation]
In order to obtain $p < 1$, we require $\Delta = \Omega (\frac{\log m}{n})$. Since $\Delta \leq \frac{\mu}{2}\frac{\epsilon_2 - \epsilon_1}{4}$, we require $(\epsilon_2 -\epsilon_1) \geq \mathcal{O}(\frac{\log m}{n})=\Tilde{\mathcal{O}}(\frac{1}{n})$.  Note that we require a condition only on the separation $\epsilon_2 - \epsilon_1$, instead of just $\epsilon_2$ or $\epsilon_1$ individually
\end{remark}
\begin{remark}[Improved separation compared with \texttt{IFCA}]
Let us now compare the separation with that of \texttt{IFCA}. Note that for \texttt{IFCA}, $\epsilon_1 = 0$, and the separation is $\Tilde{\mathcal{O}}(\max\{\frac{\alpha^{-2/5}}{n^{1/5}}, \frac{\alpha^{-1/3}}{n^{1/3} m^{1/6}} \})$, where $\alpha > 0$ is the initialization factor. In the regime where $\alpha = \mathcal{O}(1)$, \texttt{IFCA} requires a separation of $\Tilde{\mathcal{O}}(\frac{1}{n^{1/5}})$, which is much worse compared to \texttt{SR-FCA} which requires a separation of $\Tilde{\mathcal{O}}(\frac{1}{n})$.
\end{remark}
Although we obtain an exponentially decreasing probability of error, we would like to improve this dependence. \texttt{REFINE()} step does this job. The theorem below shows the improvement for a single step of \texttt{REFINE()}.
\begin{theorem}[ One step \texttt{REFINE()}]\label{thm:refine}
Let $\beta t = \Theta(c_{\min})$, and \texttt{REFINE()} is run with \texttt{TrimmedMeanGD}($\beta$). Provided,
\begin{align*}
\min\{\frac{n^{2/3}\Delta'^{4/3}}{D^{2/3}}, \frac{n^2\Delta'^2}{\Hat{L}^2\log(c_{\min})}\} \gtrsim  \,d,
\end{align*}
with $0 < \beta < \frac{1}{2}$, where $\Delta' = \Delta - \frac{\mu B}{2} > 0$ and $B =\sqrt{\frac{2\Hat{L}\epsilon_1}{\mu}}$. Then, 
for any constant $\gamma_1 \in (1,2)$ and $\gamma_2 \in (1,2-\frac{\mu B}{2\Delta})$,  such that after running 1 step of \texttt{REFINE()} with $\eta \leq \frac{1}{L}$, we have
\begin{align*}
   & \Pr[C_1 \neq C^\star] \leq \frac{m}{c_{\min}}\exp(- a_1 c_{\min}) +  \frac{m}{t}\exp(-a_2 m) + (1-\beta)m (\frac{p}{m})^{\gamma_1}   +  m (\frac{p}{m})^{\gamma_2}+ 8d\frac{m}{t} \exp(-a_3 n\frac{\Delta'}{2\Hat{L}}),
    \end{align*}
where $c_{\min}$ is the minimum size of the cluster. Further for some small constants $\rho_1 > 0,\rho_2 \in (0,1)$, we can select $\beta, \gamma_1$ and $\gamma_2$ such that for large $m,n$ and $\Delta'$, with $B << \frac{2\Delta'}{\mu}$, we have $\Pr[C_1 \neq C^\star] \leq \frac{\rho_1}{m^{1-\rho_2}} p$.
\end{theorem}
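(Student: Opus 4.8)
I would split the event $\{C_1 \neq C^\star\}$ across the three stages of \texttt{REFINE()} --- \texttt{TrimmedMeanGD()}, then \texttt{RECLUSTER()}, then \texttt{MERGE()} --- and bound each failure mode with ingredients already available: the per-node concentration of the \texttt{ONE\_SHOT} iterates extracted from the proof of Lemma~\ref{thm:init}, the robustness/accuracy guarantee of the coordinate-wise trimmed-mean gradient method of \cite{pmlr-v80-yin18a}, and Chernoff bounds on counts of ``bad'' nodes. The five summands in the statement should correspond, in order, to: $C_0$ being unusable for two distinct reasons, misrouting of a retained node and of a trimmed node in \texttt{RECLUSTER()}, and failure of \texttt{TrimmedMeanGD()} to produce an accurate cluster model.

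First I would fix a good event for $C_0$. Call node $i$ \emph{bad} if its iterate $w_{i,T}$ violates the accuracy threshold used in the proof of Lemma~\ref{thm:init}; that proof gives $\Pr[i\text{ bad}]\le p/m$. If every true cluster contains at most $\beta c_{\min}$ bad nodes, then --- since the good nodes of a single true cluster form a clique in $G$ while good nodes of distinct true clusters are non-adjacent for $\lambda\in(\epsilon_1,\epsilon_2)$ --- $C_0$ has the structure \texttt{REFINE()} needs: every $C_0$-cluster carries a well-defined label (Definition~\ref{def:cluster_label}), has at most $\beta t$ impurities, and every true cluster is represented. Bounding the failure of this by (number of true clusters $\le m/c_{\min}$) times a Chernoff tail on $\mathrm{Bin}(c_{\min},p/m)$ exceeding $\beta c_{\min}$ gives the first term $\frac{m}{c_{\min}}\exp(-a_1 c_{\min})$; a parallel Chernoff bound ruling out a half-impurity $C_0$-cluster, over the $\le m/t$ clusters, gives the second term $\frac{m}{t}\exp(-a_2 m)$.

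On this good event each $C_0$-cluster with label $c$ is $(1-\beta)$-majority good nodes from true cluster $c$, so \texttt{TrimmedMeanGD}($\beta$) with $\beta t=\Theta(c_{\min})$ applies: invoking the analysis of \cite{pmlr-v80-yin18a} --- which is where the coordinate-Lipschitz/subexponential-gradient hypotheses and the condition $\min\{n^{2/3}\Delta'^{4/3}/D^{2/3},\,n^2\Delta'^2/(\hat L^2\log c_{\min})\}\gtrsim d$ enter --- with probability at least $1-8d\exp(-a_3 n\Delta'/(2\hat L))$ the output obeys $\|w_c-\omega_c^\star\|\le B=\sqrt{2\hat L\epsilon_1/\mu}$, the bias $B$ being exactly the price of the good nodes' minimizers lying within $\epsilon_1$ of, not at, $\omega_c^\star$ (the relaxation over \texttt{IFCA}). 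A union bound over $\le m/t$ clusters is the fifth term. Granting accurate $w_c$'s, in \texttt{RECLUSTER()} node $i$ with true label $c$ goes to $\argmin_{c'}\mathsf{dist}(w_{c'},w_{i,T})$; combining $\|w_c-\omega_c^\star\|\le B$, $\|w_i^\star-\omega_c^\star\|\le\epsilon_1$ and $\|w_i^\star-\omega_{c'}^\star\|\ge\epsilon_2-\epsilon_1$ for $c'\neq c$, a triangle-inequality computation shows $i$ lands correctly as soon as $\|w_{i,T}-w_i^\star\|$ is below a threshold of order $\Delta'/(\mu/2)$ with $\Delta'=\Delta-\mu B/2$ --- a \emph{larger} radius than the one governing $C_0$. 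Re-running the subexponential tail from the proof of Lemma~\ref{thm:init} at this enlarged radius multiplies its exponent by a factor $\gamma\in(1,2)$ (the two sub-regimes $\exp(-c\,n\,r)$ and $\exp(-c\,n\,r^2)$ of a subexponential tail, plus the shift $\mu B/2\Delta$, forcing the ranges $\gamma_1\in(1,2)$, $\gamma_2\in(1,2-\tfrac{\mu B}{2\Delta})$), so $\Pr[i\text{ misrouted}]\le(p/m)^{\gamma}$; summing over $\le m$ nodes, with the trimmed/retained split yielding the $\beta$ versus $1-\beta$ weights, gives the third and fourth terms. Finally \texttt{MERGE()} links clusters whose models are within $\lambda$: on these events same-label clusters are within $2B<\epsilon_2-\lambda$ and different-label ones are not, so \texttt{MERGE()} outputs exactly one cluster per true cluster and $C_1=C^\star$. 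Summing the five bounds gives the displayed inequality.

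For the simplified bound, write $m(p/m)^{\gamma_i}=m^{1-\gamma_i}p^{\gamma_i-1}\cdot p$; since $\gamma_i>1$ and $p$ decays in $n$, one picks $\beta,\gamma_1,\gamma_2$ so that $m^{1-\gamma_i}p^{\gamma_i-1}\le\tfrac{\rho_1}{2m^{1-\rho_2}}$, and for large $m,n,\Delta'$ with $B\ll 2\Delta'/\mu$ the remaining three (exponentially small in $c_{\min}$, $m$, and $n\Delta'$) terms are also $\le\tfrac{\rho_1}{m^{1-\rho_2}}p$, giving $\Pr[C_1\neq C^\star]\le\tfrac{\rho_1}{m^{1-\rho_2}}p$. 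The main obstacle is the \texttt{RECLUSTER()} estimate: one must push the subexponential concentration of Lemma~\ref{thm:init} through at the enlarged radius $\Delta'/(\mu/2)$ while simultaneously (a) absorbing the $\epsilon_1$-induced bias $B$ of the cluster models into the shift $\Delta\mapsto\Delta'=\Delta-\mu B/2$ and (b) staying in the regime where the tail exponent is genuinely multiplied by $\gamma_i\in(1,2)$ rather than merely shifted by a constant --- this is exactly what converts the $\mathcal{O}(1/m)$ per-node error into $(p/m)^{\gamma_i}$ and hence the claimed $\mathcal{O}(1/m)$-factor improvement. A secondary technical point is adapting the trimmed-mean robustness analysis of \cite{pmlr-v80-yin18a}, which assumes identical within-group distributions, to our setting where the within-cluster minimizers agree only up to $\epsilon_1$.
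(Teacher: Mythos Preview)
Your high-level plan --- decompose $\{C_1\neq C^\star\}$ into failure events across the subroutines and control each with per-node subexponential tails, trimmed-mean robustness, and Chernoff --- is exactly the paper's route. But your bookkeeping of which event produces which summand is wrong in a way that would leave real gaps.

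The third term $(1-\beta)m(p/m)^{\gamma_1}$ does \emph{not} come from \texttt{RECLUSTER()}; it is part of the impurity-fraction bound on $C_0$. A node $i$ is an impurity in a $C_0$-cluster $c$ only if it is adjacent to \emph{every} pure node $j$ of $c$, which forces $\Lambda_i+\max_{j\in R_c}\Lambda_j\ge 2\Delta$. The paper splits this sum with a free parameter $\gamma_1\in(1,2)$: either some pure node has $\Lambda_j\ge\gamma_1\Delta$ --- union over at most $(1-\alpha)S_c$ pure nodes and then over clusters gives $(1-\alpha)m(p/m)^{\gamma_1}$ --- or else the would-be impurity satisfies $\Lambda_i\ge(2-\gamma_1)\Delta$, and Chernoff on the count of such $i$ gives the $\frac{m}{t}\exp(-a_2 m)$ term. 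Your attempt to get the impurity bound from a single Chernoff argument on ``bad'' nodes at threshold $\Delta$ produces only the exponential term and never generates the $(p/m)^{\gamma_1}$ factor.

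There is no ``retained vs.\ trimmed'' split in \texttt{RECLUSTER()}: that step reassigns every node, and the coordinate trimming lives entirely inside \texttt{TrimmedMeanGD()}. In the paper the fourth term $m(p/m)^{\gamma_2}$ and half of the fifth term both arise from \texttt{RECLUSTER()}: node $i$ is misrouted only if $\Lambda_c+\Lambda_i\ge \Delta+\Delta'$, which is split as $\Lambda_i\ge\gamma_2\Delta$ (union over $m$ nodes gives $m(p/m)^{\gamma_2}$) and $\Lambda_c\ge \Delta'-(\gamma_2-1)\Delta$ (union over $\le m/t$ clusters gives $4d\frac{m}{t}\exp(-a_3'' n\Delta'/2\hat L)$). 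The other $4d\frac{m}{t}\exp(\cdot)$ is the \texttt{MERGE()} error, which is not deterministic on the preceding good events as you claim; it needs its own cluster-iterate tail bound. Finally, the ranges $\gamma_1\in(1,2)$ and $\gamma_2\in(1,2-\tfrac{\mu B}{2\Delta})$ are \emph{not} the two regimes of a subexponential tail: they are simply the feasibility constraints from these splits --- the total budget is $2\Delta$ in the impurity step and $\Delta+\Delta'=2\Delta-\tfrac{\mu B}{2}$ in the recluster step, and the cluster's share $\Delta'-(\gamma_2-1)\Delta$ must stay positive.
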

\begin{remark}[Misclustering error improvement]
Note that $\rho_2$ can be made arbitrarily close to $0$ by a proper choice of $\gamma_1$ and $\gamma_2$. So, one step of  \texttt{REFINE()} brings down the misclustering error by (almost) a factor of $1/m$, where $m$ is the number of worker machines.
\end{remark}
\begin{remark}[Condition on $B$]
Note that we require $B << \frac{2\Delta'}{\mu}$ for the above to hold. From the definition of $B$, when the intra-cluster separation $\epsilon_1$ is small, $B$ is small. So, setup like \texttt{IFCA}, where $\epsilon_1 =0$, this condition is automatically satisfied.
\end{remark}
We now run the \texttt{REFINE()} step for $R$ times. We have the following result.
\begin{theorem}[Multi-step \texttt{REFINE()}]
If we run $R$ steps of \texttt{REFINE()}, resampling $n_i$ points from $\cD_i$ and recompute $w_{i}$ as in \texttt{ONE\_SHOT} for every step of \texttt{REFINE()}, then the probability of error for \texttt{SR-FCA} with $R$ steps of \texttt{REFINE()} is 
\begin{equation}
    \Pr[C_R\neq C^\star] \leq \big(\frac{\rho_2 }{m^{(1-\rho_1)}}p\big)^R
\end{equation}
\begin{remark}[Resampling]
Note that although the theoretical convergence of Multi-step \texttt{REFINE()} requires resampling of data points in each iteration of \texttt{REFINE()}, we experimentally validate (see Section~\ref{sec:experiments}, that this is not required at all.
\end{remark}
\begin{remark}
In experiments (Section~\ref{sec:experiments}), we observe that it is often sufficient to run $1-2$ steps of \texttt{REFINE()}. Since each step of \texttt{REFINE()} reduces the probability of misclusteing by (almost) a factor of $1/m$, very few steps of \texttt{REFINE()} is often sufficient.
\end{remark}
\end{theorem}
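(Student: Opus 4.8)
The plan is to prove the bound by induction on $R$, with the base case $R=1$ furnished directly by Theorem~\ref{thm:refine}: writing $q \equiv \frac{\rho_2}{m^{1-\rho_1}}$, one call of \texttt{REFINE()} started from $C_0$ satisfies $\Pr[C_1\neq C^\star]\le q\,p$. The mechanism that makes the \emph{product} $(q\,p)^R$ possible (rather than the $R\,q\,p$ one would get from a naive union bound over rounds) is the resampling hypothesis: since a fresh sample $\{z_{i,j}\}_{j=1}^{n_i}$ and fresh local models $w_i$ are drawn at each call of \texttt{REFINE()}, the randomness driving round $r$ is independent of that of rounds $1,\dots,r-1$ once we condition on the input partition $C_{r-1}$. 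So \texttt{REFINE()} can be treated as a stochastic operator on clusterings fed with genuinely fresh noise at each step.

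Next I would re-read the proof of Theorem~\ref{thm:refine} as a statement about an arbitrary \emph{admissible} input clustering $C'$ — ``admissible'' meaning every cluster of $C^\star$ has a proxy in $\range(C')$ (Definition~\ref{def:cluster_label}) and every cluster of $C'$ has impurity fraction $<\tfrac12$ — and extract two facts. First, $\Pr[\,\texttt{REFINE}(C')\neq C^\star \mid C'\,]\le q\,p$, where the factor $p$ traces to the round-$r$ per-node deviation probability $\Pr[\|w_i-w_i^\star\|\ \text{large}]\lesssim p/m$ and the trimmed-mean generalization error at round $r$ — quantities governed only by that round's fresh sample and by Assumptions~\ref{assumption:str_cvx}--\ref{assumption:lipschitz}, not by how $C'$ was produced. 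Second, $C^\star$ is a high-probability fixed point: with pure clusters, \texttt{TrimmedMeanGD} returns each $w_c$ within $O(1/\sqrt n)$ of $\omega_c^\star$, so by the separation $\lambda\in(\epsilon_1,\epsilon_2)$ the \texttt{RECLUSTER} step moves no node and \texttt{MERGE} performs no merge, whence $\Pr[\texttt{REFINE}(C^\star)\neq C^\star]$ is at most a term of the same order $q\,p$. Finally, the non-$p$ terms in Theorem~\ref{thm:refine} — $\tfrac{m}{c_{\min}}e^{-a_1 c_{\min}}$, $\tfrac{m}{t}e^{-a_2 m}$, $8d\tfrac{m}{t}e^{-a_3 n\Delta'/(2\hat{L})}$ — are, under $\beta t=\Theta(c_{\min})$ and the stated lower bounds on $d$, superpolynomially small compared with $q\,p$, so they get absorbed into it.

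With these in hand the induction closes cleanly. Because $C_{r-1}=C^\star$ forces $C_r=C^\star$ (up to the absorbed term), and because $\{C_{r-1}\neq C^\star\}$ already contains the event that $C_{r-1}$ fails to be admissible, we have $\{C_r\neq C^\star\}\subseteq\{C_{r-1}\neq C^\star\}$; conditioning on $C_{r-1}$ and invoking the first fact together with the independence of the round-$r$ noise,
\begin{align*}
\Pr[C_r\neq C^\star]\;=\;\Pr[C_r\neq C^\star,\ C_{r-1}\neq C^\star]\;\le\; q\,p\cdot\Pr[C_{r-1}\neq C^\star].
\end{align*}
Iterating from $r=R$ down to $r=1$ and applying the base case gives $\Pr[C_R\neq C^\star]\le (q\,p)^R=\big(\tfrac{\rho_2}{m^{1-\rho_1}}p\big)^R$, as claimed.

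The step I expect to be the real obstacle is making the fixed-point/admissibility claims precise enough that every residual additive term genuinely collapses into the single factor $q\,p$ — in particular verifying that \texttt{RECLUSTER} at round $r$ never displaces a correctly placed node once the cluster centers are $O(1/\sqrt n)$-accurate, which is exactly where the separation condition $\epsilon_2-\epsilon_1=\tilde{\Omega}(1/n)$ and the deterministic slack $B=\sqrt{2\hat{L}\epsilon_1/\mu}$ with $B << 2\Delta'/\mu$ from Theorem~\ref{thm:refine} enter, and confirming that resampling truly decouples the rounds (without it, the $w_i$'s feeding \texttt{RECLUSTER} at round $r$ would be correlated with those defining $C_{r-1}$, breaking the conditional independence and forcing a union bound that yields only an $R\,q\,p$-type estimate).
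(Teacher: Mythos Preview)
The paper provides no dedicated proof of this statement --- it is asserted immediately after Theorem~\ref{thm:refine} and then simply quoted in the proof of Theorem~\ref{thm:convergence}. Your plan (induction on $R$, with resampling supplying independence across rounds) is exactly the reasoning the paper implicitly relies on, so in spirit you match the paper.

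There is, however, a genuine gap in your inductive step, and it sits precisely where you flag the ``real obstacle''. Your displayed equality $\Pr[C_r\neq C^\star]=\Pr[C_r\neq C^\star,\,C_{r-1}\neq C^\star]$ requires the fixed-point error $\Pr[C_r\neq C^\star\mid C_{r-1}=C^\star]$ to be negligible relative to $(qp)^r$. You argue this error consists only of the ``non-$p$'' terms ($\tfrac{m}{c_{\min}}e^{-a_1c_{\min}}$, $\tfrac{m}{t}e^{-a_2 m}$, $8d\tfrac{m}{t}e^{-a_3 n\Delta'/(2\hat L)}$), which are superpolynomially small. But look at Lemma~\ref{lem:recluster}: the \texttt{RECLUSTER} error contains the term $m(p/m)^{\gamma_2}$, arising from $\Pr[\max_{i}\Lambda_i\ge\gamma_2\Delta]$ for the \emph{freshly recomputed} local iterates $w_i$ at round $r$. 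Even when $C_{r-1}=C^\star$ and the cluster centers are $O(1/\sqrt{n})$-accurate, a node whose fresh $w_i$ deviates by more than $\gamma_2\Delta$ can be misassigned by \texttt{RECLUSTER}. This contribution is of order $qp$ --- indeed it is one of the two dominant terms that \emph{define} $q$ in Theorem~\ref{thm:refine} --- so it cannot be absorbed for $r\ge 2$. The same obstruction hits your second inequality: on $\{C_{r-1}\neq C^\star\}$ the input need not be admissible, and on the inadmissible sub-event the conditional probability is bounded only by $1$, contributing $\Pr[C_{r-1}\text{ not admissible}]$ rather than $qp\cdot\Pr[C_{r-1}\neq C^\star]$. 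The upshot is that each round injects a fresh additive $O(qp)$ error through the resampled $w_i$'s feeding \texttt{RECLUSTER}, so the recursion your argument actually delivers is additive, not multiplicative. Obtaining the product $(qp)^R$ would require either a deterministic fixed point (false under resampling of the $w_i$) or a round-$r$ analysis in which the $p$-factor is replaced by $\Pr[C_{r-1}\neq C^\star]$ --- and neither the paper's Lemmas~\ref{lem:cluster_recover}--\ref{lem:recluster} nor your ``first fact'' supply that coupling.
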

\subsection{Convergence of cluster iterates:}
Apart from misclustering error, we also obtain an appropriate loss minimizer for each cluster, defined in Eq.~\eqref{eq:wstar}---which along  with our clustering model, is a reasonable good approximation for all the machines in that cluster.
\begin{theorem}[Cluster iterates]\label{thm:convergence}
Under the conditions described in Theorem~\ref{thm:refine}, after running \texttt{SR-FCA} for $(R+1)$ steps of \texttt{REFINE()}, we have  $C^{R+1} = C^\star$ and 
\begin{align*}
    && \norm{\omega_{c,T} - \omega_c^\star} \leq (1-\kappa^{-1})^{T/2}D + \Lambda + 2B, \hspace{1mm}\text{where,} \hspace{1mm}  \Lambda= \cO\bigl(\frac{\Hat{L}d}{1 - 2\beta}\bigl(\frac{\beta}{\sqrt{n}} + \frac{1}{\sqrt{n c_{\min}}}\bigr)\sqrt{\log(n m\Hat{L}D)}\bigr)
\end{align*}
 $\forall c\in \range(C^\star)$, with probability $1 - \big(\frac{\rho_2 }{m^{(1-\rho_1)}}p\big)^R - \frac{m}{c_{\min}}\frac{4d u''}{(1 + n c_{\min}\Hat{L}D)^d}$, for some constant $u'' >0$.
\end{theorem}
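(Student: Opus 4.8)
The statement bundles two claims that I would establish separately and combine by a union bound: (i) with the stated probability the refinement has \emph{exactly} recovered the partition, $C^{R+1}=C^\star$; and (ii) on that event, \texttt{TrimmedMeanGD} run inside each (now pure) cluster returns an iterate within $(1-\kappa^{-1})^{T/2}D+\Lambda+2B$ of $\omega_c^\star$. Because the Multi-step \texttt{REFINE()} variant resamples fresh data on every machine in every round, the analysis of the $(R+1)$-th round is stochastically independent of $\{C_R=C^\star\}$, so the two failure probabilities add: the term $\bigl(\tfrac{\rho_2}{m^{1-\rho_1}}p\bigr)^R$ is exactly the Multi-step \texttt{REFINE()} bound (Theorem~\ref{thm:refine} iterated $R$ times) and $\tfrac{m}{c_{\min}}\tfrac{4du''}{(1+n c_{\min}\hat LD)^d}$ is the concentration failure inside the last round, unioned over the $\le m/c_{\min}$ clusters.

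\textbf{Step 1: exact recovery in the last round.} Condition on $C_R=C^\star$; then every cluster of $C_R$ equals a true $G_c$ (impurity fraction $0$). In the $(R+1)$-th \texttt{REFINE()}, \texttt{TrimmedMeanGD} outputs $\omega_{c,T}$ which by Step 2 satisfies $\|\omega_{c,T}-\omega_c^\star\|\le\delta:=(1-\kappa^{-1})^{T/2}D+\Lambda+2B$ for all $c$ (Step 2 needs only that the cluster equals a true $G_c$, not that the whole clustering is already correct, so there is no circularity). \texttt{RECLUSTER()} then sends machine $i$ to $\arg\min_c \mathsf{dist}(\omega_{c,T},w_i)$, where the freshly retrained $w_i$ obeys $\|w_i-w_i^\star\|\le(1-\mu/L)^{T/2}D+O(\hat L\sqrt{d/n}\,\mathrm{polylog})=:\delta'$ exactly as in the proof of Lemma~\ref{thm:init}. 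Using the auxiliary bound $\max_{i\in G_c}\|w_i^\star-\omega_c^\star\|\le B=\sqrt{2\hat L\epsilon_1/\mu}$ --- which follows from $\mu$-strong convexity of $\mathcal{F}_c$, $\nabla\mathcal{F}_c(\omega_c^\star)=0$ and Assumption~\ref{assumption:lipschitz}, since for any $i_0\in G_c$, $\tfrac\mu2\|w_{i_0}^\star-\omega_c^\star\|^2\le\mathcal{F}_c(w_{i_0}^\star)-\mathcal{F}_c(\omega_c^\star)=\tfrac1{|G_c|}\sum_{i\in G_c}\bigl(F_i(w_{i_0}^\star)-F_i(\omega_c^\star)\bigr)\le\tfrac1{|G_c|}\sum_{i\in G_c}\bigl(F_i(w_{i_0}^\star)-F_i(w_i^\star)\bigr)\le\hat L\,\epsilon_1$ --- together with the separation $\epsilon_2$ of Definition~\ref{assumption:clustering} (so $\|\omega_{c'}^\star-w_i^\star\|\ge\epsilon_2-B$ for $c'\neq\mathcal{C}^\star(i)$), one checks that the nearest cluster to $i$ is its own provided $\epsilon_2>2B+2\delta+2\delta'$, which holds under the theorem's hypotheses; hence \texttt{RECLUSTER()} reproduces $C^\star$, and \texttt{MERGE()} adds no edge because distinct cluster models are at distance $\ge\epsilon_2-2B>\lambda$. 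Thus $C^{R+1}=C^\star$.

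\textbf{Step 2: convergence of \texttt{TrimmedMeanGD} on a pure cluster.} Fix a pure $G_c$ with $|G_c|\ge c_{\min}$, each machine holding $n_i\ge n$ fresh samples, and let $\hat g_t$ be the coordinate-wise $\beta$-trimmed mean of $\{\nabla f_i(\omega_{c,t})\}_{i\in G_c}$. I would (a) bound $\|\hat g_t-\nabla\mathcal{F}_c(\omega_{c,t})\|$ uniformly over $\cW$ and (b) run an inexact strongly-convex contraction. For (a), split the per-machine error into a statistical part $\nabla f_i(w)-\nabla F_i(w)$ and a heterogeneity part $\nabla F_i(w)-\nabla\mathcal{F}_c(w)$ (which averages to $0$ over $G_c$). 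Assumption~\ref{assumption:lipschitz} makes each coordinate of $\nabla f(w,z)$ bounded, hence sub-Gaussian; coordinate-wise concentration plus the trimmed-mean deviation lemma of \cite{pmlr-v80-yin18a} bounds the statistical coordinate error by $\lesssim L_k\bigl(\tfrac\beta{\sqrt n}+\tfrac1{\sqrt{n c_{\min}}}\bigr)\sqrt{\log(\cdot)}$, and summing over $d$ coordinates with $\sum_kL_k\le\sqrt d\,\hat L$ yields the $\Lambda$ term (the $1/(1-2\beta)$ coming from the trimmed-mean normalization). The heterogeneity part has norm $\lesssim L(\|\omega_{c,t}-\omega_c^\star\|+B)$ by $L$-smoothness and $\|w_i^\star-\omega_c^\star\|\le B$; its $\omega_{c,t}$-dependent piece is absorbed into the contraction factor, while the residual leaves an $O(B)$ bias --- \texttt{TrimmedMeanGD} on a pure but heterogeneous cluster can only be driven into an $O(B)$ neighborhood of some cluster member's minimizer, each of which is within $B$ of $\omega_c^\star$, giving the additive $2B$. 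To make this uniform over all iterates, cover $\cW$ by a $\tfrac1{n c_{\min}\hat L}$-net (cardinality $\lesssim(n c_{\min}\hat LD)^d$), apply the bound on the net, extend by $L$-smoothness, and union over the net and the clusters; this produces the last probability term. For (b), with $\eta\le1/L$ the exact projected gradient step contracts $\|\cdot-\omega_c^\star\|$ by $(1-\kappa^{-1})^{1/2}$, so $\|\omega_{c,t+1}-\omega_c^\star\|\le(1-\kappa^{-1})^{1/2}\|\omega_{c,t}-\omega_c^\star\|+\eta\|\hat g_t-\nabla\mathcal{F}_c(\omega_{c,t})\|$; unrolling from $\|\omega_{c,0}-\omega_c^\star\|\le D$ and summing the geometric series converts the per-step error into the additive $\Lambda+2B$, which is the claim.

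\textbf{Main obstacle.} The crux is Step 2(a): controlling the trimmed-mean gradient estimator \emph{uniformly over $\cW$} in the within-cluster \emph{heterogeneous} regime, obtaining simultaneously the correct $\tfrac\beta{\sqrt n}+\tfrac1{\sqrt{n c_{\min}}}$ split (trimming inflates the bias-like $\beta/\sqrt n$ term, federation over $\ge c_{\min}$ machines shrinks the variance-like $1/\sqrt{n c_{\min}}$ term), cleanly separating this statistical error from the heterogeneity-induced $O(B)$ bias so that the latter does not contaminate the contraction rate, and choosing the net fine enough that the failure probability is $(1+n c_{\min}\hat LD)^{-d}$. The remaining pieces --- the strongly-convex contraction, the $B$-sub-lemma, and the \texttt{RECLUSTER()}/\texttt{MERGE()} bookkeeping --- are routine once Theorem~\ref{thm:refine} and Lemma~\ref{thm:init} are in hand.
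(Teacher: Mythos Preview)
Your high-level decomposition matches the paper's: condition on $\{C_R=C^\star\}$ via the multi-step \texttt{REFINE()} bound, then analyze the $(R+1)$-th round's \texttt{TrimmedMeanGD} through the Yin et al.\ trimmed-mean concentration (the paper's Lemma~\ref{lem:trmean}/\ref{lem:cluster_conv}, with specific choices of $r,s,\delta$ that yield the $\Lambda$ expression and the $(1+nc_{\min}\hat LD)^{-d}$ failure probability), unrolled through the strongly-convex contraction. Your Step~1 is in fact more careful than the paper, which simply asserts ``$C_{R+1}=C_R$'' without the \texttt{RECLUSTER}/\texttt{MERGE} check you supply.

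Where you diverge is the provenance of the $2B$ term. The paper does \emph{not} extract it as a heterogeneity bias inside \texttt{TrimmedMeanGD}. It writes the post-\texttt{MERGE} iterate as $\omega_{c,T}=\tfrac1l\sum_j\omega_{c_j,T}$ over the sub-clusters $\{c_1,\dots,c_l\}\subseteq C_R$ merged into $c$, and then triangle-inequalities
\[
\|\omega_{c_j,T}-\omega_c^\star\|\le\|\omega_{c_j,T}-\omega_{c_j}^\star\|+\|\omega_{c_j}^\star-w_{i_j}^\star\|+\|w_{i_j}^\star-\omega_c^\star\|,
\]
bounding each of the last two terms by $B$ via Lemma~\ref{lem:cluster_minima}. (Once one conditions on $C_R=C^\star$ the merge is trivial, $l=1$, and the $2B$ is actually slack; the paper keeps the decomposition general.) Your alternative route---that \texttt{TrimmedMeanGD} on a pure but heterogeneous cluster ``can only be driven into an $O(B)$ neighborhood of some cluster member's minimizer''---is not right as stated: Lemma~\ref{lem:trmean} compares $g_c(w)$ directly to $\nabla\mathcal{F}_c(w)$, so the contraction targets $\omega_c^\star$ itself, with the within-cluster spread absorbed into the sub-exponential tail parameters $r,s$ rather than surfacing as a separate $O(B)$ additive bias. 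Your statistical/heterogeneity split in Step~2(a) is therefore an unnecessary complication, and your accounting of where $2B$ enters does not match the paper's mechanism.
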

The proof of the above theorem is presented in Appendix~\ref{sec:convergence_proof}.
\begin{remark}[Convergence rate matches \texttt{IFCA}]
Note that the iterates converge exponentially fast to the true cluster parameter $\omega_c^\star$, which matches the convergence speed of \texttt{IFCA}.
\end{remark}
\begin{remark}[Comparison with \texttt{IFCA} in statistical error]
Let us now compare the error rate with that of \texttt{IFCA}. Note that for \texttt{IFCA}, $\epsilon_1 = 0$ and the statistical error rate of \texttt{IFCA} is $\Tilde{\mathcal{O}}(1/n)$ (see \cite[Theorem 2]{ghosh_efficient_2021}). Looking at Theorem~\ref{thm:convergence}, we see that under similar condition ($\epsilon_1 = 0$ and hence $B=0$), \texttt{SR-FCA} obtains an error rate of $\Tilde{\mathcal{O}}(1/\sqrt{n})$, which is weaker than \texttt{IFCA}. This can be thought of the price of initialization. In fact for \texttt{IFCA}, a \emph{good} initialization implies that only a very few machines will be mis-clustered, which was crucially required to obtain the $\Tilde{\mathcal{O}}(1/n)$ rate. But, for \texttt{SR-FCA}, we do not have such guarantees to begin with, and we necessarily take a union bound on \emph{all} machines, which results in a weaker statistical error.
\end{remark}
\begin{remark}[Potential improvement, matching statistical error of \texttt{IFCA}]
We use Theorem~\ref{thm:refine} to bound probability of error for first $R$ steps and then for the $(R+1)^{th}$ step we use analysis from ~\cite{pmlr-v80-yin18a} to optimize for cluster iterate convergence.
Note that the cluster estimates from \texttt{SR-FCA} are thus good approximations of $\omega_c^\star$. Our iterates are obtained via \texttt{TrimmedMeanGD()} which assumes $\beta$ fraction of nodes inside each cluster are corrupted. Instead, if we run any federated optimization algorithm which can accommodate low heterogeneity, for instance FedProx~\cite{fedprox}, inside each cluster $C_R$, then we can shave off the $\Lambda$ term from Theorem~\ref{thm:convergence}, to obtain convergence to a neighborhood of radius $2B$ of $\omega_c^\star$ for each cluster $c\in C^\star$.
\end{remark}

\section{Experiments}
\label{sec:experiments}

\subsection{Setup}
We compare the empirical performance of \texttt{SR-FCA} against several baselines for various datasets.
Throughout the experiments, we emphasize that \texttt{SR-FCA} does not require the knowledge of the gap $\epsilon_1$ and $\epsilon_2$ or the minimum size of clusters $c_{\min}$. Our algorithm takes $\lambda$ and $t$ as hyperparameters,  where $\lambda$ is set by tuning and $t=2$. The trimming level, $\beta$ in the \texttt{TrimmedMeanGD} subroutine, as well as all the optimizers for each problem are also obtained by tuning.  Further, we recover the clusters via correlation clustering. Since correlation clustering is NP-Hard, we use a randomized approximation algorithm~\cite{Bansal02correlationclustering} for it. Note that in all our subroutines (\texttt{RECLUSTER}, \texttt{MERGE}, \texttt{ONE\_SHOT}), we remove any clusters which have $< t$ clients. We use $\ell_2$ norm as the distance metric for Synthetic case and for other cases, we use cross-cluster loss metric.

We compare \texttt{SR-FCA} against 3 baselines -- local, global and \texttt{IFCA}, for $3$ random seeds. The local baseline assumes that every client trains its own local model and the global baseline trains a single model via FedAvg~\cite{mcmahan2017federated} for all machine. We report the final test performance (loss or accuracy), by averaging over the clients their test performance of each client on its model, which for local is the local model, for global is the single global model and for \texttt{SR-FCA} and \texttt{IFCA} is the cluster model for the client.

We use 3 kinds of federated datasets -- synthetic, simulated and real. We generate the synthetic datasets on our own, while the simulated datasets are generated from standard datasets like MNIST~\cite{mnist} and CIFAR10~\cite{cifar10}. In these two cases, the heterogeneity and actual number of clusters and cluster identity is under our control, so for these cases we also check if \texttt{SR-FCA} is able to recover the cluster structure. For real federated datasets, which are obtained from leaf~\cite{leaf} database, we do not have this information, so we only compare final test performance.

\paragraph{Synthetic:}
To verify our theoretical results, we first test \texttt{SR-FCA} for mixture of linear regression. Here, we assume two clusters, each with a different $w^\star$ having dimension $d = 1000$. Each coordinate of $w^\star$ is generated iid from a $Bernoulli(0.5)$ distribution. Each coordinate of the  feature vector $x$ is sampled iid from  $\cN(0,1)$,  and we generate the target $y$ as
$y = \lin{x,w^\star} + \epsilon $, where the noise $\epsilon \overset{iid}{\sim}\cN(0,\sigma^2)$. We set $\sigma = 0.001$. 
We generate $m=100$ machines divided equally into the two clusters with $n=100$ datapoints per machine. We fit a linear model by minimizing the least squares loss for $280$ iterations and report the final test metric. For \texttt{SR-FCA}, we use $1$ refine step with $\ell_2$ norm as the distance metric, as described in Proposition~\ref{prop:lin_reg}.

\paragraph{Simulated Datasets--MNIST and CIFAR 10:}
We generate heterogeneous federated datasets from standard ML datasets, MNIST and CIFAR10 by splitting each dataset into $m$ disjoint sets of $n$ data points each, one per client and inject heterogeneity via pixel inversion and rotation. For MNIST, by inverting pixel value in MNIST, we create 2 clusters (referred to as inverted in Table~\ref{tab:experiments}) and by rotating the image by $90,180,270$ degrees we get 4 clusters. For CIFAR10, we create 2 clusters by rotating the images by $180$ degrees.
Applying rotations is a common practice in continual learning setup~\cite{rot_mnist} and also used in past FL literature~\cite{ghosh_efficient_2021}.
For MNIST and CIFAR10, we set $(m=100, n=600)$ and $(m=32, n=3125)$ respectively.

To emulate practical FL scenarios, we assume that only a fraction of the nodes participate in the learning procedure. For Rotated and Inverted MNIST, we assume that all the nodes participate, while for Rotated CIFAR10 50\% of the nodes participate.

We use a 2-layer fully connected feed-forward Neural Net (NN)  with $200$ hidden units for MNIST, ResNet9~\cite{resnet9} for CIFAR10 and a CNN with 2 convolution and fully connected layers for FEMNIST. For MNIST and CIFAR10, we train the models from scratch while for FEMNIST, we start from a model pre-trained by FedAvg. 
We train Rotated MNIST, Inverted MNIST and Rotated CIFAR10 for 250, 280 and 2400 iterations respectively with $2$ refine steps for \texttt{SR-FCA}.
\paragraph{Real Dataset:}
We use two real federated datasets from leaf~\cite{leaf} database -- FEMNIST and Shakespeare.  

We sample $m=50$ machines from each of these datasets. FEMNIST is a federated version of EMNIST~\cite{cohen2017emnist}, where each client has handwritten images from a single person.
Shakespeare dataset contains dialogues from Shakespeare's plays where every client has dialogues of a single character. The task for Shakespeare is next character prediction.

Since these are real  federated datasets, we do not know the correct number of clusters for them, therefore, we run \texttt{IFCA} for $3,4,5$ clusters respectively and report the average test accuracy.

For FEMNIST, we use a CNN with 2 convolution and fully connected layers and for Shakespeare, we use a 2-layer Stacked LSTM with an embedding layer. We run FEMNIST and Shakespeare for $1000$ and $2400$ iterations respectively and set number of refine steps to be $2$ for \texttt{SR-FCA}.

\subsection{Results} 
From Table~\ref{tab:experiments} we can see that across all different datasets,  \texttt{SR-FCA} outperforms all baselines. 

\paragraph{Comparison with Local and Global Baselines:}
Note that the local model has access to very little data, while the global model cannot handle the heterogeneity in different clusters. Therefore, both \texttt{IFCA} and \texttt{SR-FCA} outperform these baselines as they identify correct clusters with low heterogeneity inside each cluster. Further, the difference in test performance between these baselines and \texttt{SR-FCA} increases if we reduce the number of datapoints per client or increase the heterogeneity between clients. 
\begin{table*}
\parbox{\textwidth}{
\small
\caption{Test Performance on various datasets. For Synthetic, we report the test loss and for the rest, we report the test accuracy}
\label{tab:experiments}

\begin{center}
\begin{sc}
\begin{tabular}{ccccccc}
\toprule
Model & \makecell{Synthetic}& \makecell{MNIST\\ (inverted)}&\makecell{MNIST \\(rotated)}  & \makecell{CIFAR \\(rotated)} & \makecell{FEMNIST} & \makecell{Shakespeare}\\
\midrule
\texttt{SR-FCA}  &\textbf{4.0224}& \textbf{92.84}   & \textbf{91.83} & \textbf{88.7} &\textbf{84.93} & \textbf{47.68}\\
Local  &4.9752&   82.24& 85.82& 76.0& 75.54&32.72\\
Global & 4.1141& 88.44 &85.17 & 88.1 & 81.96 &46.99\\
\texttt{IFCA} & 4.0450&91.89&91.6&87.6&83.02&44.35\\
\bottomrule
\end{tabular}
\end{sc}
\end{center}
}
\hfill
\end{table*}

\paragraph{Recovering the correct clustering $\mathcal{C}^\star$:}
Apart from the real datasets, the correct clustering is known to us beforehand.
In these cases, \texttt{SR-FCA} recovers the correct clustering, if the distance metric and the threshold $\lambda$ is tuned properly. The task of tuning $\lambda$
is especially simplified for simulated datasets, as the heterogeneity for clients inside the cluster (i.e., $ \epsilon_1$) is 0. In this setup, we observe that \texttt{ONE\_SHOT} is near sufficient to recover $\mathcal{C}^\star$. For real datasets, this observation is not true and we require multiple \texttt{REFINE} steps.

\paragraph{Comparison with \texttt{IFCA}:}
On simulated and synthetic datasets, the number of clusters is already known, thus the performance of \texttt{SR-FCA} is similar to that of \texttt{IFCA}. However, for real datasets this is unknown, and \texttt{SR-FCA} clearly outperforms \texttt{IFCA}, especially for Shakespeare. 

We first emphasize that the accuracy we report here are somewhat different compared to \cite{ghosh_efficient_2021}. The main reason is that for fair comparison, we average the accuracy over multiple random seed, while in \cite{ghosh_efficient_2021}, the maximum accuracy over seeds is reported. 

As we run \texttt{IFCA} without initialization, we find that if clusters are empty early in training, then they remain empty throughout training. For Shakespeare, with $K=5$ clusters, we ended up with 2 empty clusters. \texttt{SR-FCA} does not suffer from these issues due to the appropriate initialization from \texttt{ONE\_SHOT}.
 
 Additionally, note that the number of clusters in a small sample of a real dataset is not fixed, therefore averaged over different seeds, IFCA performs poorly for each value of $K$. \texttt{SR-FCA} on the other hand, can compute both the clustering and cluster iterates without the knowledge of $K$, thereby beating IFCA.
 
An interesting example of the issues with \texttt{IFCA} is the Shakespeare dataset. \texttt{SR-FCA} recovers exactly $1$ cluster in this case, therefore it's accuracy is close to the global baseline. \texttt{IFCA}, on the other hand, tries to fit $K=3,4,5$ clusters to this dataset and therefore performs poorly.
\paragraph{Key Takeaways from experiments:}
We summarize the key takeaways from our experiments --
(1) \texttt{SR-FCA} outperforms all baselines across all datasets; (2) \texttt{SR-FCA} recovers the true clustering $C^\star$ for synthetic and simulated datasets; (3)\texttt{SR-FCA} outperforms \texttt{IFCA} especially for real datasets.

\paragraph{Acknowledgements}
This research is supported in part by NSF awards 2112665, 2217058,  and 2133484.

\bibliographystyle{abbrvnat}
\bibliography{references}

\appendix
\newpage
\appendix
\onecolumn
\begin{center}
    \textbf{\LARGE{Appendix}}
\end{center}

\section{Algorithm Description}
\label{sec:complete_algo}
We provide complete descriptions for the subroutines in \texttt{REFINE} namely -- \texttt{TrimmedMeanGD}, \texttt{RECLUSTER} and \texttt{MERGE}.

\begin{algorithm}[htb]
  \caption{\texttt{TrimmedMeanGD()}}
  \label{alg:trmean}
\begin{algorithmic}
    \STATE {\bfseries Input:} $0\leq \beta <\frac{1}{2}$, Clustering $\mathcal{C}_r$
    \STATE {\bfseries Output:} Cluster iterates $\{\omega_{c}\}_{c\in \range(\mathcal{C}_r)}$
    \FOR{ all clusters $c \in  \range(\mathcal{C}_r)$ in parallel}
    \STATE $w_{c,0} \gets w_0$
    \FOR{ $t=0$ to $T-1$}
        \STATE $g(w_{c,t}) \gets \mathrm{TrMean}_{\beta}(\{\nabla f_i(w_{c,t}), \mathcal{C}_r(i) = c\})$
        \STATE $w_{c,t+1} \gets proj_{\cW}\{w_{c,t} - \eta  g_t\}$
    \ENDFOR
    \STATE \textbf{Return} $\{\omega_{c,T}\}_{c\in \range(\mathcal{C}_r)}$
    \ENDFOR
\end{algorithmic}
\end{algorithm}

\begin{algorithm}[tbh]
  \caption{\texttt{RECLUSTER()}}
  \label{alg:recluster}
\begin{algorithmic}
    \STATE {\bfseries Input:} Cluster iterates $\{\omega_{c}\}_{c\in \range(\mathcal{C}_r)}$, Node iterates $\{w_{i}\}_{i=1}^m$, Clustering $\mathcal{C}_r$
    \STATE {\bfseries Output:} Improved Clustering $\mathcal{C}_{r}'$
    \FOR{all nodes $i\in[m]$}
        \STATE $\mathcal{C}'_{r}(i) \gets \argmin_{c\in \range(\mathcal{C}_r)} \mathsf{dist}(w_{i}, \omega_{c})$
    \ENDFOR
    \STATE \textbf{return} Clustering $\mathcal{C}_r'$.
\end{algorithmic}
\end{algorithm}
\vspace{-2mm}

\textbf{Subroutine TrimmedMeanGD():} 
The full algorithm for \texttt{TrimmendMeanGD} is provided in Algorithm~\ref{alg:trmean}.

\begin{definition}[$\mathrm{TrMean}_{\beta}$]\label{def:trmean}
For $\beta \in [0,\frac{1}{2})$, and a set of vectors $x^j \in \R^d, j\in [J]$, their trimmed mean $g = \mathrm{TrMean}_\beta(\{x^1, x^2,\ldots, x^J\})$ is a vector $g\in \R^d$, with each coordinate $g_k = \frac{1}{(1-2\beta)J}\sum_{x \in U_k}x$, for each $k \in [d]$, where $U_k$ is a subset of $\{x_k^1,x_k^2,\ldots, x_k^J\}$ obtained by removing the smallest and largest $\beta$ fraction of its elements. 
\end{definition}

Note that $\mathrm{TrMean}_{\beta}$ has been used to handle Byzantine nodes achieving optimal statistical rates~\cite{pmlr-v80-yin18a}. $\mathrm{TrMean}_{\beta}$ can handle atmost $<\beta$ fraction of the nodes being byzantine, therefore, we need \texttt{ONE\_SHOT} to return clusters where each has $< \beta$ fraction of impurities for our theoretical results.

Note that in our experiments, we use $>1$ local steps and take $\texttt{TrMean}_{\beta}$ when averaging local models.

\textbf{Subroutine RECLUSTER():} The full algorithm is provided in Algorithm~\ref{alg:recluster}. Each client is sent to the cluster which is closest to it, in terms of $\mathsf{dist}(w_i, \omega_c)$.

\textbf{Subroutine MERGE():} 
The full algorithm is provided in Algorithm~\ref{alg:merge}. Similar to \texttt{ONE\_SHOT}, we create a graph $G$ but instead with vertex set being the clusters in $\mathcal{C}_r'$. Then, we add edges between clusters based on the threshold and find all the clusters in the resultant graph $G$ by correlation clustering. Then, each of these clusters in $G$ correspond to a set of clusters in $\mathcal{C}_r'$, so we merge them into a single cluster to obtain the final clustering $\mathcal{C}_{r+1}$.

\begin{algorithm}[t!]
  \caption{\texttt{MERGE()}}
  \label{alg:merge}
\begin{algorithmic}
    \STATE {\bfseries Input:} Cluster iterates $\{\omega_{c}\}_{c\in \range(\mathcal{C}_r)}$ , Clustering $\mathcal{C}_r'$, Threshold $\lambda$, Size parameter $t$
    \STATE {\bfseries Output:} Merged Clustering $\mathcal{C}_{r+1}$, Cluster iterates $\{\omega_{c}\}_{c\in \range(\mathcal{C}_{r+1})}$
        \STATE $G \gets$ Graph with vertex set $\range(\mathcal{C}_r')$  and no edges
        \FOR{all pairs of clusters $c,c'\in \range(\mathcal{C}_r'), c\neq c' $}
        \STATE Add edge $(c,c')$ to the graph $G$ if $\mathsf{dist}(w_{c},w_{c'}) \leq \lambda$
        \ENDFOR
        \STATE $\mathcal{C}_{temp} \gets$ Obtain clusters from graph $G$ with size $\geq t$ by correlation clustering of \cite{Bansal02correlationclustering}.
        \STATE For each cluster in $\mathcal{C}_{temp}$, merge the nodes of its component clusters to get $\mathcal{C}_{r+1}$
    \FOR{$c \in \range(\mathcal{C}_{temp})$}
        \STATE $G_c \gets \{ c' \in \range(\mathcal{C}_r'): c' \text{ has been merged into } c  \}$
        \STATE $\omega_{c} \gets \frac{1}{\abs{G_c}}\sum_{c' \in G_c} \omega_{c'}$
    \ENDFOR
    \STATE \textbf{return}  $\mathcal{C}_{r+1}, \{\omega_{c}\}_{c\in \range(\mathcal{C}_{r+1})}$.
\end{algorithmic}
\end{algorithm}

\section{Proof of Proposition~\ref{prop:lin_reg}}
According to the proposition, for two users $i$ and $j$, the data is generated by first sampling each coordinate of $x\in\R^d$ from $\cN(0,1)$ iid and then computing $y$ as --
\begin{align*}
    y_i = \lin{x , w_i^\star} + \epsilon_i
\end{align*}
where $\epsilon_i \overset{iid}{\sim} \cN(0,\sigma^2) $. Then, the distribution of $y_i|x$ is $\cN(\lin{x,w_i^\star},\sigma^2)$.
Therefore, the $KL$ divergence between $y_i|x$ and $y_j|x$ is given by
\begin{align*}
    KL(p(y_i|x)|| p(y_j|x)) = \frac{\lin{w_i^\star - w_j^\star, x}^2}{2\sigma^2}
\end{align*}
Therefore, if we take expectation wrt $x$, we have
\begin{align*}
    \E_x[KL(p(y_i|x)|| p(y_j|x))] = \frac{d\norm{w_i^\star - w_j^\star}^2}{2\sigma^2}
\end{align*}

\section{Proof of Lemma~\ref{thm:init}}
\label{sec:init_proof}

Note that throughout the proof, we treat $w_{i,T}$ as the output of $i^{th}$ node after training for $T$, instead of $w_i$ and $w_{c,T}$ as the output of the $c^{th}$ cluster after \texttt{TrimmedMeanGD} for $T$ iterations.

In \texttt{ONE\_SHOT()}, $\mathcal{C}_0 = \mathcal{C}^\star$, if all the edges formed in the graph are correct. This means that if $i,j$ are in the same cluster in $\mathcal{C}^\star$, then $\norm{w_{i,T} - w_{j,T}} \leq \lambda$ and if $i,j$ are in different clusters, $\norm{w_{i,T} - w_{j,T}} > \lambda$. 

Note that, 
\begin{align*}
   w_{i,T} - w_{j,T} = (w_{i}^\star - w_j^\star)+(w_{i,T} - w_{i}^\star)  -(w_{j,T} - w_j^\star) 
\end{align*}
Now, if we apply triangle inequality, we obtain
\begin{align*}
    &\mathsf{dist}(w_{i,T},w_{j,T}) \geq\mathsf{dist}(w_{i}^\star,w_j^{\star}) - \Xi_{i,j},\quad\mathsf{dist}(w_{i,T}, w_{j,T}) \leq \mathsf{dist}(w_{i}^\star, w_j^{\star}) + \Xi_{i,j}
\end{align*}
where $\Xi_{i,j} =  \sum_{k=i,j}\mathsf{dist}(w_{k,T}, w_{k}^\star)$. This decomposition forms the key motivation for our algorithm.

Therefore, if $i,j$ are in the same cluster, then a sufficient condition for  edge $(i,j)$ to be incorrect is
\begin{align*}
    \lambda &\leq \mathsf{dist}(w_i^\star, w_j^\star) + \Xi_{i,j}\\
    \Xi_{i,j} &\geq \lambda - \epsilon_1
\end{align*}
Similarly, if $i,j$ are in different clusters, then a sufficient condition for edge $(i,j)$ to be incorrect is
\begin{align*}
    \lambda &\geq \mathsf{dist}(w_i^\star, w_j^\star) - \Xi_{i,j}\\
    \Xi_{i,j} &\geq \epsilon_2 - \lambda
\end{align*}

Therefore, we can set $\Delta_{\lambda} = \min\{\epsilon_2 - \lambda, \lambda - \epsilon_1\}$, and then a sufficient condition for any edge to be  incorrect is $\max_{i,j} \Xi_{i,j} \geq \Delta_{\lambda}$. 

Thus, 
\begin{align}
    \Pr[\mathcal{C}^\star \neq \mathcal{C}_0] \leq& \Pr[\text{at least 1 edge is incorrect}]\\
    \leq& \Pr[\max_{i,j}\, \Xi_{i,j} \geq \Delta_{\lambda}]\\
    \leq& \Pr[\max_{i,j} \sum_{k=i,j}\norm{w_{k,T} - w_{k}^\star} \geq \Delta_{\lambda}]\\
    \leq& \Pr[\max_{i,j} \max_{k=i,j}(\norm{w_{k,T} - w_{k}^\star} \geq \frac{\Delta_{\lambda}}{2}]\\
    \leq& \Pr[\max_{i\in[m]} \,\norm{w_{i,T} - w_{i}^\star} \geq \frac{\Delta_{\lambda}}{2}]\label{eq:init_proof_int}
\end{align}
The second and third inequalities are obtained by expanding the terms. The fourth inequality is obtained by $\Pr[a + b \geq c] \leq \Pr[\max\{a,b\}\geq c/2]$. 
For the fifth inequality, we merge  $\max_{i,j}\max_{k=i,j}$ into $\max_{i\in[m]}$. 
As we can see in Equation~\eqref{eq:init_proof_int}, we need to bound $\norm{w_{i,T} - w_i^\star}$ for each node $i$. The subsequent Lemma allow us to bound this quantities.

\begin{lemma}[Convergence of $w_{i,T}$]\label{lem:conv_node}
    Let  $\frac{n^{2/3}\Delta^{4/3}}{D^{2/3}\Hat{L}^{2/3}} \lesssim b_1 d$, for some constant $b_1 >0$. Then, after running \texttt{ONE\_SHOT()} with $\eta\leq \frac{1}{L}$, for some constant $b_2 > 0$, under Assumptions~\ref{assumption:str_cvx},\ref{assumption:smooth} and \ref{assumption:lipschitz}, we have
    \begin{align*}
    \Pr[\norm{w_{i,T} - w_i^\star} \geq \frac{\epsilon_2 - \epsilon_1}{4}] \leq  d\hspace{2mm}\exp(-n \frac{b_2\Delta}{\Hat{L} \sqrt{d}}),
    \end{align*}
    where $\Delta = \frac{\mu}{2}(\frac{\Delta_{\lambda}}{2} - (1 - \frac{\mu}{L})^{T/2}D)$ and $n = \min_{i \in[m]}n_i$. 
\end{lemma}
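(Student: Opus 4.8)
The plan is to split the error $\norm{w_{i,T} - w_i^\star}$ into an \emph{optimization} term and a \emph{statistical} term via the triangle inequality. Let $\hat{w}_i = \argmin_{w \in \cW} f_i(w)$ denote the empirical risk minimizer on node $i$. Then
\begin{align*}
\norm{w_{i,T} - w_i^\star} \leq \norm{w_{i,T} - \hat{w}_i} + \norm{\hat{w}_i - w_i^\star}.
\end{align*}
The first term is controlled deterministically: since $f_i$ is $\mu$-strongly convex and $L$-smooth (inherited from the per-sample assumptions), projected gradient descent with step size $\eta \leq 1/L$ contracts the distance to the minimizer geometrically, giving $\norm{w_{i,T} - \hat{w}_i} \leq (1 - \mu/L)^{T/2}\norm{w_{i,0} - \hat w_i} \leq (1-\mu/L)^{T/2} D$, using that $\cW$ has diameter $D$. (A minor point to handle: the standard geometric-rate bound is on $\norm{w_t - \hat w_i}^2$, so one takes square roots, which is where the exponent $T/2$ comes from.)

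First I would reduce the second term to a concentration bound on the empirical gradient at the true minimizer. By strong convexity of $f_i$ and optimality of $\hat w_i$, one has $\norm{\hat w_i - w_i^\star} \leq \frac{1}{\mu}\norm{\nabla f_i(w_i^\star)}$ (the usual ``gradient at the optimum controls distance'' inequality, using $\nabla F_i(w_i^\star) = 0$, so $\nabla f_i(w_i^\star) = \nabla f_i(w_i^\star) - \nabla F_i(w_i^\star)$ is a centered empirical average). Now $\nabla f_i(w_i^\star) = \frac{1}{n_i}\sum_{j=1}^{n_i}\nabla f(w_i^\star, z_{i,j})$ is an average of $n_i$ i.i.d.\ centered random vectors whose coordinates are bounded by $L_k$ (Assumption~\ref{assumption:lipschitz}), hence sub-Gaussian / bounded. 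Applying a Hoeffding or Bernstein bound coordinate-wise, then a union bound over the $d$ coordinates, gives $\Pr[\norm{\nabla f_i(w_i^\star)} \geq s] \leq 2d\exp(-c n s^2 / (\sum_k L_k^2/d))$-type tail, i.e.\ an exponential tail in $n$ with the $\Hat L = \sqrt{\sum_k L_k^2}$ and $\sqrt d$ scaling appearing naturally in the exponent. Combining, $\norm{\hat w_i - w_i^\star} \leq \frac{\Hat L \sqrt d}{\mu \sqrt n}\cdot(\text{log factor})$ with high probability, and demanding this be at most half the ``budget'' $\frac{\Delta_\lambda}{2} - (1-\mu/L)^{T/2}D$ is exactly the condition that produces $\Delta = \frac{\mu}{2}\bigl(\frac{\Delta_\lambda}{2} - (1-\mu/L)^{T/2}D\bigr)$ and the stated probability $d\exp(-n b_2 \Delta/(\Hat L \sqrt d))$.

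The side condition $\frac{n^{2/3}\Delta^{4/3}}{D^{2/3}\Hat L^{2/3}} \lesssim b_1 d$ is the regime in which the coordinate-wise concentration is valid — it ensures the deviation level $s \sim \Delta/\mu$ sits in the sub-exponential (Bernstein) rather than the heavy-tailed regime, so that the bound is genuinely $\exp(-\Theta(ns))$ and not $\exp(-\Theta(ns^2))$; this is the technical novelty flagged in the introduction (``sharp generalization guarantees for strongly convex losses with subexponential gradients''). The main obstacle I anticipate is getting this concentration step exactly right: tracking the correct power of $d$ and of $\Hat L$ through the coordinate-wise tail bound, combining the $d$ coordinates' deviations into a bound on the Euclidean norm without losing a $\sqrt d$ factor in the wrong place, and verifying that the side condition is precisely what is needed to stay in the linear-in-$s$ exponent regime. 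The two contraction-and-triangle-inequality steps are routine; the probabilistic gradient concentration under only coordinate-wise Lipschitzness (rather than a global bound) is the delicate part.
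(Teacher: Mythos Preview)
Your approach differs from the paper's. The paper does not pass through the empirical minimizer $\hat w_i$; instead it tracks $\norm{w_{i,t} - w_i^\star}$ directly via the one-step inequality
\[
\norm{w_{i,t+1} - w_i^\star} \;\leq\; \sqrt{1-\eta\mu}\,\norm{w_{i,t} - w_i^\star} \;+\; \eta\,\norm{\nabla f_i(w_{i,t}) - \nabla F_i(w_{i,t})},
\]
which after unrolling gives $(1-\mu/L)^{T/2}D + \tfrac{2}{\mu}\Lambda_i$ with $\Lambda_i$ a \emph{uniform} bound on $\norm{\nabla f_i(w) - \nabla F_i(w)}$ over $w\in\cW$. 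That uniform deviation is handled by an $\epsilon$-net/covering argument (the $(1+D/\delta)^d$ factor in the auxiliary Lemma~\ref{lem:conv_node_int}); in fact the paper just specializes its TrimmedMean cluster-level bound (Lemma~\ref{lem:cluster_conv}, Lemma~\ref{lem:trmean}) to the degenerate case $\alpha=\beta=0$, cluster size $1$, so the single-node and cluster analyses share one piece of machinery. Your decomposition through $\hat w_i$ is a legitimate alternative that replaces the uniform concentration by \emph{pointwise} concentration of $\nabla f_i(w_i^\star)$; it is more elementary and avoids covering entirely, but under Assumption~\ref{assumption:lipschitz} the coordinates of $\nabla f(w_i^\star,z)$ are bounded, so Hoeffding gives a sub-Gaussian tail and you would land on an exponent of order $n\Delta^2/\Hat L^2$ (or $n\Delta^2/(d\Hat L^2)$ depending on how you combine coordinates), not the linear-in-$\Delta$ form $n\Delta/(\Hat L\sqrt d)$ that the lemma states.

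Your reading of the side condition is where the plan slips. It is not a Bernstein-regime selector; it is precisely the condition under which the covering factor $(1+D/\delta)^d$ can be absorbed into the concentration exponent $\exp\bigl(-n\min\{r/2\Hat L,\,r^2/2\Hat L^2\}\bigr)$ after the paper's choices $r\asymp \Delta/\sqrt d$ and $\delta\asymp \Delta/\Hat L$. In your pointwise route there is no covering term, so this side condition simply would not arise. Thus while your argument would prove a closely related (and arguably cleaner) bound, it does not reproduce the exact exponent and side condition of the lemma as written; to match the paper's statement you would need to switch to the uniform-deviation argument.
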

This lemma follows from ~\citep{pmlr-v80-yin18a}. The complete proof of this Lemma is present in Section~\ref{sec:conv_node_proof}.

Now, we can apply Lemma~\ref{lem:conv_node} in Eq~\eqref{eq:init_proof_int}.
\begin{align*}
    \Pr[\mathcal{C}_0\neq \mathcal{C}^\star] \leq& \Pr[\max_{i \in [m]}\norm{w_{i,T} - w_i^\star} \geq \frac{\Delta_{\lambda}}{2}]\\
    \leq& m \max_{i \in [m]}\Pr[\norm{w_{i,T} - w_i^\star} \geq \frac{\Delta_{\lambda}}{2}]\\
    \leq& m d\hspace{2mm}\exp(-n \frac{b_2\Delta}{\Hat{L} \sqrt{d}})
\end{align*}
For the second inequality, we  use $\Pr[\max_{i \in[m]} a_i \geq c] \leq \sum_{i \in[m]} \Pr[a_i \geq c] \leq m \max_{i\in[m]}\Pr[a_i \geq c]$, which follows from union bound.

Note that for $p<1$, we need the separation to be order of $\Theta(\sqrt{\frac{\log m}{n}})$.

\subsection{Proof of Lemma~\ref{lem:conv_node}}
\label{sec:conv_node_proof}
We utilize results from ~\cite{pmlr-v80-yin18a}, which hold for  \texttt{TrimmedMeanGD} to analyze convergence for a single node as they yield stronger guarantees under the given assumptions.

\begin{lemma}[Convergence of $w_{i,T}$]
\label{lem:conv_node_int}
    If Assumptions~\ref{assumption:str_cvx},\ref{assumption:smooth} and ~\ref{assumption:lipschitz}~ hold, and $\eta \leq \frac{1}{L}$, then
        \begin{equation}
            \norm{w_{i,T} - w_i^\star} \leq (1-\kappa^{-1})^{T/2}D + \frac{2}{\mu}\Lambda_i \quad \forall i \in [m]  
        \end{equation}
        where  $\kappa = \frac{L}{\mu}$ and $\Lambda_i$ is a positive random variable with
        \begin{align}
            &\Pr[\Lambda_i \geq \sqrt{2d}r + 2\sqrt{2}\delta \Hat{L}]\leq 2d(1 + \frac{D}{\delta})^d\exp(- n \min\{\frac{r}{2\Hat{L}},\frac{r^2}{2\Hat{L}^2}\})
        \end{align}
        for some $r,\delta>0$.
    \end{lemma}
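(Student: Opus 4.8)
\textbf{Proof proposal for Lemma~\ref{lem:conv_node_int}.}

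The plan is to decompose the iterate error of gradient descent on the empirical risk $f_i$ into an optimization term and a statistical (estimation) term, and then to control each separately. Concretely, I would introduce the auxiliary sequence obtained by running the \emph{same} gradient descent recursion on the \emph{population} loss $F_i$, call it $\tilde{w}_{i,t}$ with $\tilde{w}_{i,0} = w_0$. The first step is the standard deterministic contraction: since $f(\cdot, z)$ is $\mu$-strongly convex and $L$-smooth (Assumptions~\ref{assumption:str_cvx}, \ref{assumption:smooth}), so is $F_i$, and with step size $\eta \le \tfrac{1}{L}$ one gets $\norm{\tilde{w}_{i,t+1} - w_i^\star} \le (1 - \mu\eta)\norm{\tilde{w}_{i,t} - w_i^\star} \le (1 - \kappa^{-1})\norm{\tilde{w}_{i,t} - w_i^\star}$ (using $\eta = 1/L$), so that after $T$ steps the population-GD iterate is within $(1-\kappa^{-1})^{T}D \le (1-\kappa^{-1})^{T/2}D$ of $w_i^\star$, using that $\cW$ has diameter $D$ so $\norm{w_0 - w_i^\star}\le D$. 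The projection onto $\cW$ only helps here since $w_i^\star \in \cW$.

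The second step is to bound the gap between the empirical-GD iterate $w_{i,t}$ and the population-GD iterate $\tilde{w}_{i,t}$. Writing the two recursions and subtracting, the difference propagates through the (contractive) gradient map of $F_i$ plus a per-step perturbation equal to $\eta$ times the gradient deviation $\nabla f_i(w_{i,t}) - \nabla F_i(w_{i,t})$; projection is nonexpansive so it does not hurt. Summing the geometric series gives $\norm{w_{i,T} - \tilde{w}_{i,T}} \le \tfrac{\eta}{1-(1-\kappa^{-1})}\sup_{w\in\cW}\norm{\nabla f_i(w) - \nabla F_i(w)} = \tfrac{1}{\mu}\sup_{w\in\cW}\norm{\nabla f_i(w) - \nabla F_i(w)}$. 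Defining $\Lambda_i$ to be (essentially) $2\sup_{w\in\cW}\norm{\nabla f_i(w) - \nabla F_i(w)}$ (the factor $2$ is to absorb the first-iterate/initialization bookkeeping cleanly) and combining with the first step via the triangle inequality yields the claimed bound $\norm{w_{i,T} - w_i^\star} \le (1-\kappa^{-1})^{T/2}D + \tfrac{2}{\mu}\Lambda_i$.

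The third and main step is the high-probability bound on $\Lambda_i$, i.e.\ a uniform-over-$\cW$ concentration of the empirical gradient around the population gradient. This is exactly the kind of estimate established in \cite{pmlr-v80-yin18a} for coordinate-wise Lipschitz losses: for each fixed $w$ and each coordinate $k$, $\partial_k f(w, z)$ is bounded by $L_k$, hence $\partial_k f_i(w) - \partial_k F_i(w)$ is a mean-zero average of $n_i \ge n$ bounded (sub-exponential / bounded) terms, so by a Bernstein/Hoeffding bound it is $\tfrac{r}{\sqrt d}$-small (per coordinate) except with probability $\le 2\exp(-n\min\{r/(2\hat L), r^2/(2\hat L^2)\})$ after suitably aggregating the coordinate constants into $\hat L = \sqrt{\sum_k L_k^2}$. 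One then passes from a fixed $w$ to all of $\cW$ by a $\delta$-net argument: $\cW$ has an $\delta$-net of size $(1 + D/\delta)^d$, the union bound over the net contributes the $(1+D/\delta)^d$ factor and the extra $2\sqrt 2\,\delta\hat L$ is the Lipschitz/discretization slack from moving within a net cell (using smoothness of $F_i$ and the coordinate-Lipschitz bound on $f_i$). Collecting coordinates gives the $\sqrt{2d}\,r$ scaling, and the stated tail $\Pr[\Lambda_i \ge \sqrt{2d}\,r + 2\sqrt2\,\delta\hat L] \le 2d(1 + D/\delta)^d \exp(-n\min\{r/(2\hat L), r^2/(2\hat L^2)\})$ follows. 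The hard part is carrying the net argument with the right constants so that the dimension dependence appears only as the polynomial prefactor $2d(1+D/\delta)^d$ and the $\sqrt d$ inside the deviation, rather than in the exponent; this is what ultimately makes the $\exp(-n\Delta/(\hat L\sqrt d))$ rate in Lemma~\ref{lem:conv_node} come out after optimizing $r$ and $\delta$ (choosing $\delta$ polynomially small, e.g.\ $\delta \asymp 1/(n\hat L D)$-scale, and $r \asymp \Delta/\sqrt d$), under the stated regime $\tfrac{n^{2/3}\Delta^{4/3}}{D^{2/3}\hat L^{2/3}} \lesssim d$ which guarantees we are on the sub-Gaussian ($r^2$) side of the Bernstein $\min$ and that the $(1+D/\delta)^d = \exp(d\log(1+D/\delta))$ factor is dominated by the $\exp(-nr/\hat L)$ decay.
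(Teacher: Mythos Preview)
Your approach is correct and your step 3 (coordinate-wise sub-exponential concentration plus a $\delta$-net over $\cW$) identifies the right probabilistic ingredient. The paper takes a somewhat different route at two levels.

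First, the paper does not argue this lemma from scratch: it is obtained as a corollary of the \texttt{TrimmedMeanGD} convergence result (Lemma~\ref{lem:cluster_conv}) by specializing $\alpha=\beta=0$, $S_c=1$ and sending $s\to\infty$ so that the $s$-dependent terms drop out of both the deviation threshold and the tail bound. This is economical because the TrimmedMean analysis has to be carried out anyway for \texttt{REFINE()}, so the single-node case comes for free.

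Second, the underlying iterate analysis in Lemma~\ref{lem:cluster_conv} does not introduce an auxiliary population-GD sequence $\tilde w_{i,t}$. It bounds $\norm{w_{i,t+1}-w_i^\star}$ directly: write $\nabla f_i(w_{i,t})=\nabla F_i(w_{i,t})+(\nabla f_i-\nabla F_i)(w_{i,t})$, apply the one-step contraction $\norm{w-\eta\nabla F_i(w)-w_i^\star}\le\sqrt{1-\eta\mu}\,\norm{w-w_i^\star}$ (this is where the exponent $T/2$ comes from --- your per-step factor $(1-\mu\eta)$ should really be $\sqrt{1-\mu\eta}$), and pick up an additive $\eta\Lambda_i$ per step. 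Summing the geometric series with ratio $\sqrt{1-\kappa^{-1}}$ and using $\sqrt{1-x}\le 1-x/2$ yields the $\tfrac{2}{\mu}$ prefactor directly; no extra ``bookkeeping'' factor of $2$ in the definition of $\Lambda_i$ is needed. Your shadow-sequence decomposition also works, but it requires the gradient map to be a contraction between \emph{two arbitrary iterates} (via co-coercivity), whereas the paper only uses contraction toward the minimizer; the latter is slightly more direct and reproduces the stated constants exactly.
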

We provide the proof of this lemma in Appendix~\ref{sec:conv_node_int_proof}.

Using the above Lemma, we can bound the probability $\Pr[\norm{w_{i,T} - w_i^\star}\geq \frac{\Delta_{\lambda}}{2}]$
\begin{align*}
    \Pr[\norm{w_{i,T}- w_{i}^\star} \geq \frac{\Delta_{\lambda}}{2}]
    \leq& \Pr[ 2(1 - \kappa^{-1})^{T/2}D + \frac{2}{\mu}\Lambda_i +  \geq \frac{\Delta_{\lambda}}{2}]\\
    \leq& \Pr[ \Lambda_i \geq \Delta],\quad  \text{where } \Delta = \frac{\mu}{2}(\frac{\Delta_{\lambda}}{2} - (1 - \kappa^{-1})^{T/2}D)\\
    \leq &\Pr[\sqrt{2d}r + 2\sqrt{2} \delta \Hat{L}\geq \Delta]\\
        \leq & d\,  \exp(-n b_2\frac{\Delta}{\Hat{L} \sqrt{d}})
\end{align*}
for some constants $b_1, b_2, b_3, b_4 >0$, where we set $r = b_3\Hat{L} \max\{\frac{\Delta}{\Hat{L}\sqrt{d}},\sqrt{\frac{\Delta}{\Hat{L}\sqrt{d}}}\}$ and $\delta = b_4 \frac{\Delta}{\Hat{L}}$, and for $b_1 d \leq \frac{n^{2/3}\Delta^{4/3}}{D^{2/3}\Hat{L}^{4/3}}$, such that $\sqrt{2d}r + 2\sqrt{2} \delta \Hat{L}\geq \Delta$ and $n\min\{\frac{r}{2\Hat{L}},\frac{r^2}{2\Hat{L}^2}\} > \frac{Dd}{\delta}$ in Lemma~\ref{lem:conv_node_int}.

\section{Proof of Theorem~\ref{thm:refine}}
\label{sec:refine_proof}
\subsection{Preliminaries}
\label{sec:preliminaries}

First, we define certain random variables and their respective probabilities which we will use throughout this proof. Since the edge based analysis and corresponding clique identification involves a lot of dependent events, we try to decompose the absence/presence of edge into a combination of independent events.

Define, 
\begin{align}
    X_{ij} = \begin{cases}
      1 & \text{If the edge $(i,j)$ in $\mathcal{C}_0$ is incorrect in $\mathcal{C}^\star$} \\
      0 & \text{Otherwise} \\
\end{cases}
\end{align}
An edge $(i,j)$ in $\mathcal{C}_0$ is incorrect in $\mathcal{C}^\star$ if either it is present in $\mathcal{C}^\star$ and absent in $\mathcal{C}_0$ or vice versa.
We  analyze the probability of this event for the case when $\mathcal{C}^\star$ contains the edge $(i,j)$. The case when $\mathcal{C}^\star$ doesn't contain edge $(i,j)$ and it is present in $\mathcal{C}_0$ has exaclty same probability.  When $\norm{w_i^\star - w_j^\star } \leq \epsilon_1$, then edge is present is $\mathcal{C}^\star$. If it is absent in $\mathcal{C}_0$, then
\begin{align*}
    \Pr[X_{ij} = 1] \leq& \Pr[\Xi_{i,j} \geq \Delta_{\lambda}]\\
    \leq& \Pr[\Lambda_i + \Lambda_j \geq 2\Delta]
\end{align*}
The analysis is similar to the proof of \texttt{ONE\_SHOT()} in Appendix~\ref{sec:init_proof}.

Note that the random variables  $\{X_{ij}\}$ are not independent. We now define independent random variables $X_i$ such that
\begin{align}
    X_{i} = \begin{cases}
      1 & \text{If $\Lambda_i \geq \Delta$} \\
      0 & \text{Otherwise}
\end{cases}
\end{align}
Thus, we can see that $X_{ij} \leq X_i + X_j$. Additionally,
\begin{align}
    \Pr[X_i = 1] \leq \Pr[\Lambda_i \geq \Delta] \leq \frac{p}{m}
\end{align}
This follows from analysis of \texttt{ONE\_SHOT()} in Appendix~\ref{sec:init_proof}.

We can further generalize this notion to the random variables defined as $Y_{i,\gamma}$.
\begin{align}
    Y_{i,\gamma} = \begin{cases}
      1 & \text{If $\Lambda_i \geq \gamma\Delta, \gamma \in(0,2)$} \\
      0 & \text{Otherwise}
\end{cases}
\end{align}
Then, 
\begin{align*}
    \Pr[Y_{i,\gamma} = 1] \leq& \Pr[\Lambda_i\geq \gamma\Delta]
    \leq d \,\exp(-n b_2\frac{\gamma\Delta}{\hat{L} \sqrt{d}})= (\frac{p}{m})^{\gamma}
\end{align*}
Note that the set of random variables $\{Y_{i,\gamma}\}_{i=1}^m$ are mutually independent random variables.

Further, we define the $\omega_c^\star$ for every cluster $c \in \range(\mathcal{C}_0)$. Let $c' \in \mathcal{C}^\star$ be the cluster label of node $c$. If $G_{c} = \{i : i\in[m], \mathcal{C}^\star(i) = c'\}$, which is the set of nodes in $c$ which were from $c'$ in the original clustering, then
 we can define $\omega_c^\star$ and $F_c(w)$ as 
\begin{align}
    \omega_c^\star &= \argmin_{w \in \cW}\E[\frac{1}{\abs{G_{c'}}}\sum_{i \in G_{c'}}f_i(w)]\\
     &= \argmin_{w \in \cW}\frac{1}{\abs{G_{c'}}}\sum_{i \in G_{c'}} F_i(w)= \argmin_{w \in \cW} F_c(w)
\end{align}

We use this definition of $\omega_c^\star$ in the Appendix~\ref{sec:merge} and \ref{sec:recluster}.

\subsection{Analysis of \texttt{REFINE()}}
\label{sec:refine_analysis}
Our goal is to compute total probability of error for \texttt{REFINE()} to fail. If we define this error as $\mathcal{C}_1 \neq \mathcal{C}^\star$, then we can define the main sources of error for this event.

\begin{enumerate}
    \item \textbf{$\exists c \in \range(\mathcal{C}^\star)$ such that no cluster in $\mathcal{C}_0$ has cluster label $c$} : If the a cluster $c \in \range(\mathcal{C}^\star)$ is absent in $\mathcal{C}_0$, then subsequent steps of \texttt{REFINE()} will never be able to recover it, as they only involve node reclustering and merging existing clusters.
    The lemma presented below gives an upper bound on the probability of this event.  
    \begin{lemma}\label{lem:cluster_recover}
    Under the conditions of Theorem~\ref{thm:init} and if $t = \Theta(c_{\min})$, then there exists constant $a_1 >0$ such that 
    \begin{align*}
        \Pr[\exists c \in \range(\mathcal{C}^\star) \text{ such that no cluster in $\mathcal{C}_0$ has cluster label $c$}] \leq \frac{m}{c_{\min}}\exp(- a_1 c_{\min})
    \end{align*}
    \end{lemma}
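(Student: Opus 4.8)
\textbf{Proof proposal for Lemma~\ref{lem:cluster_recover}.}

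The plan is to bound the failure event by a union over clusters $c\in\range(\mathcal{C}^\star)$ of the event ``cluster $c$ is not represented as the label of any cluster in $\mathcal{C}_0$,'' and to show that each such event is unlikely because, with high probability, most nodes originally in $c$ form a clique among themselves in the graph $G$ built by \texttt{ONE\_SHOT}. Concretely, fix $c\in\range(\mathcal{C}^\star)$ and let $G_c=\{i:\mathcal{C}^\star(i)=c\}$, so $|G_c|\ge c_{\min}$. Recall from the \texttt{ONE\_SHOT} analysis (Appendix~\ref{sec:init_proof}) and the Preliminaries that if $\Lambda_i<\Delta$ and $\Lambda_j<\Delta$ for two nodes $i,j$ in the same true cluster, then the edge $(i,j)$ is present in $G$ (since $\mathsf{dist}(w_{i,T},w_{j,T})\le\epsilon_1+\Xi_{i,j}\le\epsilon_1+\frac{2}{\mu}(\Lambda_i+\Lambda_j)/?\le\lambda$ under the threshold choice). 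So the set $S_c=\{i\in G_c: X_i=0\}=\{i\in G_c:\Lambda_i<\Delta\}$ forms a clique inside $G$ restricted to $G_c$.

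Next I would argue that if $|S_c|\ge t$, then correlation clustering (which, as remarked after Lemma~\ref{thm:init}, recovers cliques) places these $|S_c|$ nodes together, yielding a cluster in $\mathcal{C}_0$ of size $\ge t$ whose majority comes from $c$ — hence a cluster with label $c$. Thus the bad event for $c$ is contained in $\{|S_c|<t\}=\{\sum_{i\in G_c}X_i>|G_c|-t\}$. Since $t=\Theta(c_{\min})$ we can pick the constant so that $|G_c|-t\ge(1-\theta)c_{\min}$ for some $\theta\in(0,1)$, and because the $\{X_i\}$ are mutually independent with $\Pr[X_i=1]\le p/m$ (Preliminaries), a Chernoff/union bound on $\sum_{i\in G_c}X_i$ exceeding a constant fraction of $c_{\min}$ gives a bound of the form $\exp(-a_1 c_{\min})$ for a suitable $a_1>0$ (here one uses that $p/m\le 1$, or more carefully that $p/m$ is small, so the deviation is from near-zero mean up to a constant fraction — a standard upper-tail binomial bound such as $\Pr[\mathrm{Bin}(N,q)\ge k]\le\binom{N}{k}q^k\le (Ne q/k)^k$). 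Finally, a union bound over the at most $m/c_{\min}$ clusters in $\range(\mathcal{C}^\star)$ yields
\begin{align*}
\Pr[\exists c\in\range(\mathcal{C}^\star)\text{ with no cluster in }\mathcal{C}_0\text{ of label }c]\le \frac{m}{c_{\min}}\exp(-a_1 c_{\min}),
\end{align*}
as claimed.

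The main obstacle I anticipate is the careful bookkeeping needed to pass from ``$S_c$ is a clique of size $\ge t$'' to ``correlation clustering outputs a cluster with label $c$.'' Correlation clustering does not merely return the maximal clique: impurity nodes (nodes not in $G_c$ that happen to be close to some node of $S_c$) could get attached, or the clique could be split if the cost of cutting competing edges is lower. The cleanest fix is the observation already made in the paper that we restrict the theoretical analysis to clique-finding, so I would formalize: with high probability the subgraph on $S_c$ is a clique and, moreover (using the separation $\epsilon_2>\epsilon_1$ and the threshold $\lambda\in(\epsilon_1,\epsilon_2)$), nodes from other true clusters with $\Lambda<\Delta$ are \emph{not} adjacent to $S_c$, so any clustering consistent with edges keeps $S_c$-nodes together and apart from other clusters' clean cores; since $|S_c|\ge t$ the resulting component survives the size-$t$ filter and its majority label is $c$. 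The error probability of this structural event is exactly the $\sum_{i\in G_c}X_i$ tail already bounded, so no new terms arise. The remaining routine part is choosing the constants $t=\Theta(c_{\min})$, $\theta$, and $a_1$ consistently, which I would not grind through here.
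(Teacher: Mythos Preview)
Your proposal is correct and lands on the same probabilistic core as the paper: a union bound over the true clusters followed by a Chernoff bound on $\sum_{i\in G_c} X_i$, where the $X_i=\mathbf{1}\{\Lambda_i\ge\Delta\}$ are independent Bernoullis with mean at most $p/m$.

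The combinatorial reduction differs slightly. You go directly to the node indicators by defining the clean set $S_c=\{i\in G_c:X_i=0\}$, observing it is a pure clique (and disjoint from every other cluster's clean core), and reducing the bad event to $\{|S_c|<t\}=\{\sum_{i\in G_c}X_i>|G_c|-t\}$. The paper instead first counts missing intra-cluster edges via the dependent indicators $X_{ij}$, uses $X_{ij}\le X_i+X_j$ to decouple, and arrives at the threshold $\sum_{i\in G_c}X_i>S_c-t/C$ through a pigeonhole argument about when a majority label can fail. Your threshold $|G_c|-t$ and the paper's $S_c-t/C$ differ, but under $t=\Theta(c_{\min})$ both are $\Theta(c_{\min})$ deviations from a near-zero mean, so the same $\exp(-a_1 c_{\min})$ tail follows. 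Your route is a bit cleaner because the clique $S_c$ is pure by construction, so you avoid the $t/C$ detour entirely. Both arguments share the same informality you flag about what the clique-finding/correlation-clustering step actually outputs when dirty nodes are present; the paper does not resolve this more carefully than you do, relying on the same remark that the theory restricts to clique identification.
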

    The proof of this Lemma is presented in Appendix~\ref{sec:cluster_recover_proof}
    \item \textbf{Each cluster $c \in \range(C)_0$ should have $<\alpha$ fraction of impurities for some $ \frac{1}{2}> \beta > \alpha$}: If some cluster has more than $\alpha$-fraction of impure nodes, then we cannot expect convergence guarantees for \texttt{TrimmedMeanGD}$_\beta$. 
    
    The below lemma bounds the probability of this error as
    \begin{lemma}\label{lem:alpha_fraction}.
    For some constants $0 <\alpha <\beta <\frac{1}{2}, a_2\geq 0, \gamma_1 \in (1,2)$ and $\alpha t = \Theta(m)$, under the conditions in Theorem~\ref{thm:init}, we have
    \begin{align*}
        \Pr[\exists c \in \range(\mathcal{C}_0)  \text{ which has $> \alpha$ fraction of impurities }]\leq  \frac{m}{t}\exp(-a_2 m) + (1-\alpha)m(\frac{p}{m})^{\gamma_1}
    \end{align*}
    \end{lemma}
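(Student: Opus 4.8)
The plan is to bound the probability that some cluster in $\mathcal{C}_0$ has more than an $\alpha$-fraction of impurities by splitting it into two disjoint bad events and union-bounding over the (at most $m/t$) clusters in $\range(\mathcal{C}_0)$, each of which has size at least $t$. For a fixed cluster $c \in \range(\mathcal{C}_0)$ with cluster label $c'$ (Definition~\ref{def:cluster_label}), impurities are nodes $i \in c$ with $\mathcal{C}^\star(i) \neq c'$; such a node is connected by a correct-in-$\mathcal{C}_0$ edge to every true-$c'$ node in $c$ only if its edge to those nodes is actually \emph{incorrect} in $\mathcal{C}^\star$. So if $c$ has many impurities, then either (i) $c$ contains many nodes $i$ with the ``bad-edge'' indicator $X_i = 1$ (equivalently $\Lambda_i \geq \Delta$), or (ii) $c$ contains at least one impure node $i$ whose local estimate is quite far off — captured by $Y_{i,\gamma_1}=1$, i.e. $\Lambda_i \geq \gamma_1 \Delta$ for the chosen $\gamma_1 \in (1,2)$ — which would allow a spurious edge across the true cluster boundary to survive. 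The key point, using the preliminaries of Section~\ref{sec:preliminaries}, is that both $\{X_i\}$ and $\{Y_{i,\gamma_1}\}$ are collections of \emph{independent} Bernoulli variables with $\Pr[X_i=1]\le p/m$ and $\Pr[Y_{i,\gamma_1}=1]\le (p/m)^{\gamma_1}$.

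First I would make precise the combinatorial claim: if cluster $c$ (which is a clique in the graph $G$, per the second remark preceding the strong-convexity definitions) has more than $\alpha|c|$ impurities, and $\alpha < \beta < 1/2$ while $c'$ is the majority label, then among the impure nodes of $c$ there is at least one node $i$ such that the edge $(i,j)$ was forced to be present for a true-$c'$ node $j$ that is itself ``good'' (has $X_j = 0$); for that pair the only way the clique edge can exist is $\mathsf{dist}(w_{i,T},w_{j,T}) \le \lambda$ while $\mathsf{dist}(w_i^\star,w_j^\star)\ge \epsilon_2$, which by the triangle-inequality decomposition from Appendix~\ref{sec:init_proof} forces $\Xi_{i,j}\ge \epsilon_2-\lambda \ge \Delta_\lambda$, hence $\Lambda_i + \Lambda_j \ge 2\Delta$, and since $\Lambda_j$ is small ($X_j=0$ means $\Lambda_j < \Delta$, so actually we use a slightly sharper split) we get $\Lambda_i \ge \gamma_1 \Delta$ for a suitable $\gamma_1 \in (1,2)$, i.e. $Y_{i,\gamma_1}=1$. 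Thus the event ``cluster $c$ has $>\alpha$-fraction impurities'' implies ``at least one node in $c$ has $Y_{i,\gamma_1}=1$'' OR (the case where many good-to-good edges are simultaneously corrupted) ``the number of $X_i=1$ among nodes of $c$ is large, of order $\alpha t = \Theta(m)$''.

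Next I would bound each piece. For the second piece, the number of nodes $i$ with $X_i=1$ inside a cluster of size $\le m$ exceeding roughly $\alpha t$ is controlled by a Chernoff/Markov bound on a sum of $\le m$ independent Bernoulli$(p/m)$ variables: the expected count is $\le p$, and demanding it exceed $\Theta(\alpha t) = \Theta(m)$ — a constant factor above its mean when $p = o(m)$ — gives a bound of the form $\exp(-a_2 m)$; union-bounding over the $\le m/t$ clusters yields the $\frac{m}{t}\exp(-a_2 m)$ term. For the first piece, the probability that \emph{some} of the $\le m$ nodes has $Y_{i,\gamma_1}=1$ is at most $m \cdot (p/m)^{\gamma_1}$ by union bound; restricting to impure nodes (at most $(1-\alpha)$-fraction can be, heuristically, the count one actually needs, matching the $(1-\alpha)m$ prefactor in the statement) gives the $(1-\alpha)m(\frac{p}{m})^{\gamma_1}$ term. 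Adding the two contributions gives exactly the claimed bound.

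The main obstacle I anticipate is handling the \emph{dependence} among the clique-edge indicators $X_{ij}$ cleanly: one must argue that the existence of a large impure clique can genuinely be reduced to a statement about the independent variables $X_i$ and $Y_{i,\gamma_1}$ without double counting, and in particular nail down the exact combinatorial threshold (why $\alpha t = \Theta(m)$ is the right scaling and how the majority-label property of $c'$ together with $\alpha<\beta<1/2$ is used) so that the Chernoff exponent is linear in $m$ rather than merely in $\alpha t$. A secondary subtlety is choosing $\gamma_1 \in (1,2)$ and the constant hidden in $a_2$ consistently with the constraint $\Delta' = \Delta - \mu B/2 > 0$ that appears downstream in Theorem~\ref{thm:refine}; I would keep $\gamma_1$ as a free parameter here and only pin it down when optimizing the final bound. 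Everything else — the triangle-inequality edge decomposition, the tail bound $\Pr[\Lambda_i \ge \gamma\Delta] \le (p/m)^\gamma$, and the union bound over clusters — is routine given Lemma~\ref{lem:conv_node} and the preliminaries.
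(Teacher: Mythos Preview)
Your overall plan (split into two events via a threshold $\gamma_1\Delta$, apply Chernoff to one piece and a union bound to the other, then union-bound over the at most $m/t$ clusters) is exactly the paper's strategy. However, you have the roles of pure and impure nodes reversed relative to the paper, and in your version the dichotomy does not close.

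Concretely: the paper conditions on whether $\max_{j\in R_c}\Lambda_j \le \gamma_1\Delta$, where $R_c$ is the set of \emph{pure} nodes in $c$. If this holds, then every impurity $i$ (which must have an edge to each $j\in R_c$) satisfies $\Lambda_i \ge (2-\gamma_1)\Delta$, i.e.\ $Y_{i,\,2-\gamma_1}=1$; since there are at least $\alpha t$ impurities, Chernoff on $\sum_i Y_{i,\,2-\gamma_1}\ge \alpha t$ gives the $\exp(-a_2 m)$ term. If instead $\max_{j\in R_c}\Lambda_j > \gamma_1\Delta$, the union bound over $R_c$ contributes $Q_c(p/m)^{\gamma_1}$, and the $(1-\alpha)m$ prefactor comes from $Q_c\le (1-\alpha)S_c$ (on the bad event) summed over clusters. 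So both the Chernoff variable $Y_{\cdot,2-\gamma_1}$ and the source of the $(1-\alpha)$ factor are opposite to what you wrote.

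Your dichotomy, by contrast, says: either some impurity has $Y_{i,\gamma_1}=1$, or else ``many $X_i=1$''. But if no impurity has $\Lambda_i\ge \gamma_1\Delta$, then from $\Lambda_i+\Lambda_j\ge 2\Delta$ you only get $\Lambda_j>(2-\gamma_1)\Delta$ for the pure nodes, which is \emph{weaker} than $X_j=1$ because $2-\gamma_1<1$. Your parenthetical (``actually we use a slightly sharper split'') flags the issue but does not resolve it; replacing $X_j=0$ by the sharper $\Lambda_j<(2-\gamma_1)\Delta$ forces your Case~B to become ``all pure nodes have $Y_{j,\,2-\gamma_1}=1$'' --- and then the Chernoff count is $Q_c\ge t/2$, not $\alpha t$, and the variables are $Y_{\cdot,2-\gamma_1}$, not $X$. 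Similarly, your attribution of the $(1-\alpha)m$ factor to ``impure nodes (at most $(1-\alpha)$-fraction)'' is backwards: impurities are the minority ($<1/2$), and the $(1-\alpha)$ in the paper bounds the \emph{pure} count $Q_c$.
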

    The proof of this Lemma is presented in Appendix~\ref{sec:alpha_fraction_proof}.
    \item \textbf{\texttt{MERGE()} error:} We define this as the error for the \texttt{MERGE()} to fail. Even though \texttt{MERGE()} operates after \texttt{RECLUSTER()}, \texttt{RECLUSTER()} does not change the cluster iterates. The goal of \texttt{MERGE()} is to ensure that all clusters in  $\mathcal{C}_0$ with the same cluster labels  are merged. Therefore, we define \texttt{MERGE()} error as the event when either two clusters with same cluster label are not merged or two clusters with different cluster labels are merged. The below lemma bounds this probability.
    \begin{lemma}\label{lem:merge_error}
    If $\min\{\frac{n^{2/3}\Delta^{4/3}}{D^{2/3}\Hat{L}^{2/3}},\frac{n^2\Delta'^2}{\Hat{L}^2\log(c_{\min})}\} \geq u_1 d$ for some constants $u_1>0$, then for some constant $a_3'>0$, where $\Delta' = \Delta - \frac{\mu B}{2} >0$, where $B =\sqrt{\frac{2\Hat{L}\epsilon_1}{\mu}}$, we have
    \begin{align*}
        \Pr[\texttt{MERGE()} \text{ Error}] \leq  \frac{4dm}{t} \exp(-a_3' n\frac{\Delta'}{2\Hat{L}})
    \end{align*}
    \end{lemma}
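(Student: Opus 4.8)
The plan is to reduce the failure of \texttt{MERGE()} to a single uniform statement — that every cluster iterate $w_c$ (the output of \texttt{TrimmedMeanGD} on cluster $c$) is close to its ideal target $\omega_c^\star$ from Section~\ref{sec:preliminaries} — and then to control that closeness through the robust gradient-aggregation guarantee of \cite{pmlr-v80-yin18a}, treating the impurities inside a cluster as the corrupted workers. Throughout I would work on the good event that every cluster of $\mathcal{C}_0$ carries a cluster label and has impurity fraction strictly below $\beta$; these two failure modes are charged separately to Lemmas~\ref{lem:cluster_recover} and~\ref{lem:alpha_fraction} in the union bound of Theorem~\ref{thm:refine}, so I may assume them here.

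First I would record the geometry of the targets, which is where $B$ enters. For a cluster $c$ with cluster label $c'$ and any honest node $i\in G_c$, strong convexity gives $\tfrac{\mu}{2}\mathsf{dist}(\omega_c^\star,w_i^\star)^2 \le F_c(w_i^\star)-F_c(\omega_c^\star)$, while coordinate Lipschitzness together with $\mathsf{dist}(w_i^\star,w_j^\star)\le\epsilon_1$ for nodes in the same cluster gives $F_c(w_i^\star)-F_c(\omega_c^\star)\le \Hat{L}\epsilon_1$; hence $\mathsf{dist}(\omega_c^\star,w_i^\star)\le \sqrt{2\Hat{L}\epsilon_1/\mu}=B$. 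Combining this with Definition~\ref{assumption:clustering} via the triangle inequality on a pair of honest nodes, one from each cluster, yields $\mathsf{dist}(\omega_c^\star,\omega_{c'}^\star)\le c_B B$ when $c,c'$ share a label (in fact, by the definition of $G_c$ the two targets coincide) and $\mathsf{dist}(\omega_c^\star,\omega_{c'}^\star)\ge \epsilon_2-c_B B$ when they have different labels, for an absolute constant $c_B$.

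Next I would reduce \texttt{MERGE()} correctness to a per-cluster event. Exactly as in Appendix~\ref{sec:init_proof}, $\mathsf{dist}(w_c,w_{c'})$ differs from $\mathsf{dist}(\omega_c^\star,\omega_{c'}^\star)$ by at most $\Xi_{c,c'}:=\mathsf{dist}(w_c,\omega_c^\star)+\mathsf{dist}(w_{c'},\omega_{c'}^\star)$, and since \texttt{MERGE()} places the edge $(c,c')$ precisely when $\mathsf{dist}(w_c,w_{c'})\le\lambda$, the previous paragraph shows that \emph{every} edge of the \texttt{MERGE()} graph is correct — so its cliques are the label classes and \texttt{MERGE()} recovers $\mathcal{C}^\star$ — once $\max_{c,c'}\Xi_{c,c'}<\Delta_\lambda-c_B B$ with $\Delta_\lambda=\min\{\epsilon_2-\lambda,\lambda-\epsilon_1\}$; by subadditivity it suffices that $\mathsf{dist}(w_c,\omega_c^\star)<\tfrac12(\Delta_\lambda-c_B B)$ for every $c$. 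On the good event, cluster $c$ has at least $t\ge c_{\min}$ nodes with fewer than a $\beta$-fraction of outliers, so \texttt{TrimmedMeanGD}$(\beta)$ is trimmed-mean gradient descent with a $\beta$-fraction of adversarial machines and target $\argmin F_c=\omega_c^\star$. The contraction-plus-noise analysis of \cite{pmlr-v80-yin18a} — the cluster-level analogue of Lemma~\ref{lem:conv_node_int}, with $|c|n$ effective samples plus a deterministic $\Theta(\Hat{L}\beta)$ trimming bias — then gives $\mathsf{dist}(w_{c,T},\omega_c^\star)\le (1-\kappa^{-1})^{T/2}D+\tfrac{2}{\mu}\Lambda_c$ with $\Pr[\Lambda_c\ge \sqrt{2d}\,r+2\sqrt{2}\,\delta\Hat{L}+c_0\Hat{L}\beta]\le 2d(1+D/\delta)^d\exp(-c_1|c|\,n\min\{r/\Hat{L},r^2/\Hat{L}^2\})$ for any $r,\delta>0$. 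Recalling $\Delta'=\Delta-\mu B/2=\tfrac{\mu}{2}\bigl(\tfrac{\Delta_\lambda}{2}-(1-\kappa^{-1})^{T/2}D-B\bigr)$, the target radius $\tfrac12(\Delta_\lambda-c_B B)-(1-\kappa^{-1})^{T/2}D$ is $\Theta(\Delta'/\mu)$, so the task is to choose $r,\delta$ making $\sqrt{2d}\,r+2\sqrt{2}\,\delta\Hat{L}$ a fixed fraction of $\Delta'$ while the exponent collapses to $\Theta(n\Delta'/\Hat{L})$; this is exactly what the two hypotheses buy — the first, $n^{2/3}\Delta^{4/3}/(D^{2/3}\Hat{L}^{2/3})\gtrsim d$, pushes the geometric and bias terms below the target radius, and the second, $n^2(\Delta')^2/(\Hat{L}^2\log c_{\min})\gtrsim d$, lets the $|c|\ge c_{\min}$ honest nodes absorb the $(1+D/\delta)^d$ covering factor so the rate is at least $n\Delta'/\Hat{L}$. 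This yields $\Pr[\mathsf{dist}(w_{c,T},\omega_c^\star)\ge\tfrac12(\Delta_\lambda-c_B B)]\le 4d\exp(-a_3'\,n\Delta'/(2\Hat{L}))$ for each $c$.

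Finally, a union bound over the at most $m/t$ clusters, together with the reduction above, gives $\Pr[\texttt{MERGE()}\ \text{Error}]\le \tfrac{4dm}{t}\exp(-a_3'\,n\Delta'/(2\Hat{L}))$, the claim. The hard part will be the penultimate step: extracting the clean exponent $n\Delta'/(2\Hat{L})$ — free of $\sqrt d$ and of $c_{\min}$ — from the Yin--Chen--Ramchandran sub-exponential bound requires the delicate balancing of $r$ and $\delta$ against the two hypotheses, and in particular the use of the $c_{\min}$ honest nodes per cluster to beat the $d$-dimensional covering number of $\cW$, which is precisely what the $\log c_{\min}$ in the second hypothesis is for. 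A secondary subtlety is label bookkeeping: \texttt{RECLUSTER()} changes cluster memberships but leaves the iterates $w_c$ untouched, so one must apply the same-label/different-label dichotomy for the $\omega_c^\star$, and hence the edge-correctness reduction, to the clustering on which the $w_c$ were actually trained.
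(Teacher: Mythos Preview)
Your overall plan is the paper's own: reduce \texttt{MERGE()} failure to the event that some cluster iterate $\omega_{c,T}$ strays from its target $\omega_c^\star$, bound $\mathsf{dist}(\omega_c^\star,w_i^\star)\le B$ (Lemma~\ref{lem:cluster_minima}), control $\mathsf{dist}(\omega_{c,T},\omega_c^\star)$ via the trimmed-mean convergence lemma (Lemma~\ref{lem:cluster_conv}), and union-bound over at most $m/t$ clusters. The paper routes the triangle inequality through a pair of honest nodes $w_i^\star,w_j^\star$ rather than directly between $\omega_c^\star,\omega_{c'}^\star$, but this is cosmetic; your observation that clusters sharing a label have the same $\omega_c^\star$ is correct under the paper's definition in Section~\ref{sec:preliminaries}.

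There is, however, a real gap in your concentration step. The tail you write --- one exponential with rate $c_1|c|\,n\min\{r/\Hat{L},r^2/\Hat{L}^2\}$ plus a deterministic $\Theta(\Hat{L}\beta)$ bias --- is the shape of the \emph{sample-mean} bound, not the trimmed-mean bound. The actual guarantee (Lemma~\ref{lem:trmean}) carries a second free parameter $s$ entering the threshold as $3\beta s$, and a second probability term $(1-\alpha)S_c\exp\bigl(-n\min\{s/2\Hat{L},s^2/2\Hat{L}^2\}\bigr)$ whose exponent is only $n$, not $|c|\,n$; this term arises from controlling the \emph{per-machine} empirical quantiles that certify the trimming removes the right coordinates. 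That second term is the bottleneck, and it is why the lemma's exponent is $n\Delta'/(2\Hat{L})$ rather than the $c_{\min}n\Delta'/\Hat{L}$ your bound would yield. Correspondingly, you have the role of the second hypothesis backwards: the $\log c_{\min}$ is not there so that the $c_{\min}$ honest nodes can absorb the $(1+D/\delta)^d$ covering factor --- that is dispatched by the first hypothesis together with the choice of $\delta$ --- but so that the per-machine exponent $ns/\Hat{L}$ can swallow the $S_c$ \emph{prefactor} on the second term while $\sqrt{d}\,\beta s$ stays below a constant fraction of $\Delta'$. Once you replace your one-term tail by the two-parameter form of Lemma~\ref{lem:trmean} and rebalance $r,s,\delta$ accordingly, the rest of your argument goes through unchanged.
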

    The proof of this Lemma is presented in Appendix~\ref{sec:merge}.
    \item \textbf{\texttt{RECLUSTER()} error:} This event is defined as a node going to the wrong cluster after both \texttt{MERGE()}
    and \texttt{REFINE()} operations. After \texttt{MERGE()}, each cluster in $\mathcal{C}_0$ corresponds to a single cluster in $\mathcal{C}_1$. Therefore, we incur an error due to the \texttt{RECLUSTER()} operation if any node $i$ does not go to the cluster $c\in \mathcal{C}_1$ which has cluster label $\mathcal{C}^\star(i)$. The below lemma provides an upper bound on the probability of this error.
    \begin{lemma}\label{lem:recluster}
    If $\min\{\frac{n^{2/3}\Delta^{4/3}}{D^{2/3}\Hat{L}^{2/3}},\frac{n^2\Delta'^2}{\Hat{L}^2\log(c_{\min})}\} \geq u_2 d$ for some constants $u_2>0$, then for some constants $a_3'' >0$ and $\gamma_2\in (1,2-\frac{\mu B}{2\Delta})$, we have 
    \begin{equation}
        \Pr[\texttt{RECLUSTER()} error]     \leq 4d\frac{m}{t} \exp(-a_3'' n \frac{\Delta'}{2\Hat{L}}) +  m (\frac{p}{m})^{\gamma_2}
    \end{equation}
    \end{lemma}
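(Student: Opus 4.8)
The plan is to condition on two high-probability events — accuracy of the per-cluster \texttt{TrimmedMeanGD} iterates $\omega_{c,T}$ and accuracy of the per-node iterates $w_{i,T}$ — and then to show that on their intersection the nearest-cluster rule of \texttt{RECLUSTER()} cannot send any node to a cluster whose label differs from its own, by a triangle-inequality argument in the spirit of the \texttt{ONE\_SHOT()} analysis. The deterministic ingredient I would isolate first, already implicit in the Preliminaries (Appendix~\ref{sec:preliminaries}), is that for every cluster $c\in\range(\mathcal{C}_0)$ with cluster label $c'$ and every node $j$ with $\mathcal{C}^\star(j)=c'$ one has $\norm{w_j^\star-\omega_c^\star}\le B$ with $B=\sqrt{2\Hat{L}\epsilon_1/\mu}$; this follows from $\mu$-strong convexity of $F_c$ together with $F_j(w_k^\star)-F_j(w_j^\star)\le\Hat{L}\epsilon_1$, itself a consequence of the coordinate-wise Lipschitz assumption and of $\norm{w_j^\star-w_k^\star}\le\epsilon_1$ for nodes sharing label $c'$.

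Fixing a node $i$ with $c^\star=\mathcal{C}^\star(i)$: for a cluster $c$ of label $c^\star$ I would write $\mathsf{dist}(w_{i,T},\omega_{c,T})\le\norm{w_{i,T}-w_i^\star}+B+\norm{\omega_{c,T}-\omega_c^\star}$, and for a cluster $\tilde c$ of label $\tilde c'\ne c^\star$, choosing any $j$ with $\mathcal{C}^\star(j)=\tilde c'$, $\mathsf{dist}(w_{i,T},\omega_{\tilde c,T})\ge\epsilon_2-B-\norm{w_{i,T}-w_i^\star}-\norm{\omega_{\tilde c,T}-\omega_{\tilde c}^\star}$. Comparing the two, a sufficient condition for \texttt{RECLUSTER()} to place every node into a cluster carrying its own label — which, after the successful \texttt{MERGE()} step whose failure is bounded separately in Lemma~\ref{lem:merge_error}, is the \emph{unique} such cluster, so that $\mathcal{C}_1=\mathcal{C}^\star$ — is that, uniformly over all nodes and all clusters, $\norm{w_{i,T}-w_i^\star}$ and $\norm{\omega_{c,T}-\omega_c^\star}$ are of order $\Delta'/\mu$, together with $B<<2\Delta'/\mu$. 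The passage from $\Delta$ to $\Delta'=\Delta-\tfrac{\mu B}{2}$ is precisely the slack consumed by the $B$-gaps between population minimizers and cluster targets used above (and once more inside the trimmed-mean estimate below), which is also the reason the hypothesis demands $\Delta'>0$.

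It remains to bound the two failure probabilities. For the node iterates I would reuse the mutually independent variables $Y_{i,\gamma_2}$ of Appendix~\ref{sec:preliminaries}: since $\norm{w_{i,T}-w_i^\star}\le(1-\kappa^{-1})^{T/2}D+\tfrac{2}{\mu}\Lambda_i$ and $\tfrac{2\Delta}{\mu}=\tfrac{\Delta_\lambda}{2}-(1-\kappa^{-1})^{T/2}D$, taking $\gamma_2\in(1,2-\tfrac{\mu B}{2\Delta})$ makes the event $\Lambda_i<\gamma_2\Delta$ already force $\norm{w_{i,T}-w_i^\star}$ below the threshold required in the previous paragraph, while $\Pr[Y_{i,\gamma_2}=1]\le(p/m)^{\gamma_2}$ by the same concentration estimate — valid under the first branch $n^{2/3}\Delta^{4/3}/(D^{2/3}\Hat{L}^{2/3})\gtrsim d$ of the present hypothesis, which is the requirement appearing in Lemma~\ref{lem:conv_node}; a union bound over the $m$ nodes contributes $m(p/m)^{\gamma_2}$. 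For the cluster iterates I would apply the \texttt{TrimmedMeanGD} analysis of \cite{pmlr-v80-yin18a} to each $c\in\range(\mathcal{C}_0)$: conditioning on Lemma~\ref{lem:alpha_fraction}, such a cluster has $\Theta(c_{\min})$ nodes, at most an $\alpha<\beta$ fraction of which are impurities, and since coordinate-wise $\mathrm{TrMean}_\beta$ tolerates an \emph{arbitrary} $\beta$-fraction corruption we may treat those impurities adversarially; the minimizer of the honest sub-average is within $O(B)$ of $\omega_c^\star$ by the same strong-convexity/Lipschitz estimate, so \cite{pmlr-v80-yin18a} gives exponential convergence of $\omega_{c,T}$ to an $O(\Delta'/\mu)$-ball once the second branch $n^2(\Delta')^2/(\Hat{L}^2\log c_{\min})\gtrsim d$ holds — the subexponential-gradient concentration over an $(1+D/\delta)^d$-net, with the $\Theta(c_{\min})$ effective per-cluster sample size producing the $\log c_{\min}$. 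Each such failure costs at most $4d\exp(-a_3''\,n\Delta'/(2\Hat{L}))$, and there are at most $m/t$ clusters (each of size $\ge t$), which gives the term $4d\tfrac{m}{t}\exp(-a_3''\,n\Delta'/(2\Hat{L}))$; adding the two contributions yields the stated bound.

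The step I expect to be the real work is the accounting in the second paragraph: one has to check that all the $B$-gaps — node-to-near-target, anchor-node-to-far-target, and honest-sub-average-to-$\omega_c^\star$ inside the robust-GD estimate — can be absorbed into the single reduction $\Delta\mapsto\Delta-\tfrac{\mu B}{2}$ while keeping $\Delta'>0$ and leaving the interval $(1,2-\tfrac{\mu B}{2\Delta})$ for $\gamma_2$ nonempty (so the node tail $(p/m)^{\gamma_2}$ is genuinely below $p/m$); the invocation of \cite{pmlr-v80-yin18a} itself is routine once the impurities-as-Byzantine reduction and the sub-average displacement are in place.
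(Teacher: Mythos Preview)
Your proposal is correct and follows essentially the same approach as the paper: a triangle-inequality decomposition of $\mathsf{dist}(w_{i,T},\omega_{c,T})$ into a node-iterate error (controlled via the $Y_{i,\gamma_2}$ variables, yielding $m(p/m)^{\gamma_2}$) and a cluster-iterate error (controlled via the \texttt{TrimmedMeanGD} concentration of Lemma~\ref{lem:cluster_conv}, yielding $4d\tfrac{m}{t}\exp(-a_3'' n\Delta'/(2\Hat L))$), with the $B$-gaps absorbed into the passage $\Delta\mapsto\Delta'$. The paper reaches the split $\Lambda_c+\Lambda_i\ge\Delta+\Delta'$ and then breaks it asymmetrically at $\gamma_2\Delta$, exactly as you do; the only cosmetic difference is that the paper routes through an anchor node $j$ in the cluster (picking up an extra $\epsilon_1$) whereas you bound $\norm{w_i^\star-\omega_c^\star}\le B$ directly for any $i$ with the matching $\mathcal{C}^\star$-label, which is a valid and slightly sharper variant of Lemma~\ref{lem:cluster_minima}.
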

    The proof of this Lemma is presented in Appendix~\ref{sec:recluster}.
\end{enumerate}
    
The total probability of error after for a single step of \texttt{REFINE()} is the sum of probability of errors for these 4 events by the union bound. Therefore,
\small
\begin{align}
\Pr[\mathcal{C}_1 \neq \mathcal{C}^\star] \leq \frac{m}{c_{\min}}\exp(- a_1 c_{\min}) +  \frac{m}{t}\exp(-a_2 m) + (1-\beta)m (\frac{p}{m})^{\gamma_1}    + 8d\frac{m}{t} \exp(-a_3 n\frac{\Delta'}{2\Hat{L}})+  m (\frac{p}{m})^{\gamma_2}
\end{align}
\normalsize

where we set $a_3 = \min\{a_3', a_3''\}$ . 

For some small constants $\rho_1>0, \rho_2 \in (0,1)$, we can choose $\gamma_1 \in(1,2), \beta \in(0,\frac{1}{2})$ and $\gamma_2 \in (1 , 2-\frac{\mu B}{2\Delta})$ such that $(1-\beta)(\frac{p}{m})^{\gamma_1 - 1} + (\frac{p}{m})^{\gamma_2 -1} \leq \frac{\rho_1}{2 m^{1-\rho_2}}$ and for large enough $m,\Delta'$ and $n$, $\frac{m}{c_{\min}}\exp(- a_1 c_{\min}) + \frac{m}{t}\exp(-a_2 m)  + 8d\frac{m}{t} \exp(-a_3 n\frac{\Delta'}{2\Hat{L}}) \leq \frac{\rho_1}{2m^{1-\rho_2}}p$. This happens because we have terms of $\exp(-m), \exp(-c_{\min})$ and $\exp(-n \Delta')$, which decrease much faster than $\frac{p}{m}$ which has terms of $\cO(m\exp(-n\Delta))$, where $\Delta$ and $\Delta'$ are of the same order. 
Therefore, the total probability of error can be bounded by

\begin{align}
    \Pr[\mathcal{C}_1 \neq \mathcal{C}^\star] \leq& \frac{\rho_1 }{m^{1-\rho_2}} p
\end{align}

\subsection{Proof of Lemma~\ref{lem:cluster_recover}}\label{sec:cluster_recover_proof}
\begin{align}
    &\Pr[\exists c \in \range(\mathcal{C}^\star) \text{ such that no cluster in $\mathcal{C}_0$ has cluster label $c$}] \leq \sum_{c \in \mathcal{C}^\star}\Pr[\text{No cluster in } \mathcal{C}_0 \text{ has cluster label $c$} ]\label{eq:cluster_recover_int}
\end{align}
Here, we use union bound over the clusters for the second inequality. 
Now, we analyze  the probability that no cluster in $\range(\mathcal{C}_0)$ has cluster label $c$ for some $c\in \range(\mathcal{C}^\star)$. Consider a cluster in $\range(\mathcal{C}_0)$. This cluster has cluster label $c$ if a majority of its nodes are from cluster $c \in \range(\mathcal{C}^\star)$. Since the size of each cluster in $\range(\mathcal{C}_0)$ is atleast $t$ and there are $C$ clusters in $\range(\mathcal{C}^\star)$, if all clusters in $\range(\mathcal{C}_0)$ have $\leq \frac{t}{C}$ nodes from cluster $c$, then no cluster will have cluster label $c$.

Assume that the clique formed by nodes from cluster $c$ has $r$ nodes. Then, every node $i$ in cluster $c$, must have $S_c - r$ edges absent, which correspond to the edges between a node of the clique and those outside it.
Thus, we obtain, 
\begin{align*}
   \Pr[\text{No cluster in } \mathcal{C}_0 \text{ has cluster label $c$} ] \leq& \Pr[\underset{\mathcal{C}^\star(i) = c}{\cap}\{\sum_{j\neq i, \mathcal{C}^\star(i) =c}X_{ij} > S_c - \frac{t}{C}\}]\\
    \leq& \Pr[\underset{\mathcal{C}^\star(i) =\mathcal{C}^\star(j) =c}{\sum\sum}X_{ij} > S_c(S_c - \frac{t}{C})]\\
   \leq& \Pr[\underset{\mathcal{C}^\star(i) =\mathcal{C}^\star(j) =c}{\sum\sum}(X_{i} + X_j) > S_c(S_c - \frac{t}{C})]\\
    \leq& \Pr[\frac{1}{S_c}\underset{\mathcal{C}^\star(i) =c}{\sum}X_{i} > 1 - \frac{t}{C S_c})]\\
    \leq& \exp(- \biggl(1 - \frac{t}{C S_c} - \frac{p}{m}\biggr)^2 S_c)\\
    \leq& \exp(- a_1 c_{\min})
\end{align*}
In the first step, we require each node $i$ to have $S_c-\frac{t}{C}$ wrong edges. For the second inequality, we remove the intersection and thus, the total number of incorrect edges has to be $S_c(S_c-\frac{t}{C})$, since each node has $S_c - \frac{t}{C}$ incorrect edges.
For the third inequality, we use $X_{ij}\leq X_i + X_j$ and collect the terms of $X_i$ for the fourth inequality. In the fifth inequality, we obtain a condition on the sum of independent Bernoulli random variables each with mean $\frac{p}{m}$. Therefore, we can apply Chernoff bound for their sum to obtain the fifth inequality.

A necessary condition for us is $1 - \frac{t}{CS_c} - \frac{p}{m} > 0$ which translates to $t < CS_c(1 - \frac{p}{m})$. If we select $t \leq c_{min} - 1$, this inequality is always satisfied. Note that we want the term $\biggl(1 - \frac{t}{CS_c} - \frac{p}{m}\biggr)^2  > a_1$, for some positive constant $a_1$. If we choose $t = \Theta(m)$, which is possible if $t = \Theta(c_{\min})$ as we assume $c_{\min} = \Theta(m)$, then this is satisfied. We use the lower bound $a_1$ and $S_c \geq c_{\min}$ to obtain the final inequality.
Plugging this in Eq~\eqref{eq:cluster_recover_int}, we obtain our result.

\subsection{Proof of Lemma~\ref{lem:alpha_fraction}}
\label{sec:alpha_fraction_proof}

\begin{align}
    &\Pr[\exists c \in \range(\mathcal{C}_0)  \text{ which has $\geq \alpha$ fraction of impurities}]
    \quad \leq \sum_{c \in \range(\mathcal{C}_0)}\Pr[\text{cluster $c$ has $\geq \alpha$ fraction of wrong nodes}]\label{eq:alpha_proof_int}
\end{align}
We use a simple union bound on clusters in $\mathcal{C}_0$ for the above inequality.
 Let the set of nodes in the cluster $c$ which are from same cluster of $\mathcal{C}^\star$ as the cluster label of $c$, i.e.,  which are not impurities, be $R_c$. Then let $Q_c = \abs{R_c}$. Let $Q'_c$ denote the number of impurities in cluster $c$. 
\begin{align*}
    \Pr[\text{cluster $c$ has $\geq \alpha$ fraction of wrong nodes}]\leq & \Pr[Q'_c \geq \frac{\alpha}{1 - \alpha}Q_c ]\\
    & \Pr[Q'_c \geq \alpha t ]
\end{align*}
We use the fact that $Q_c + Q'_c \geq t$, which is the minimum size of any cluster, for the second inequality.

Now, we analyze the probability of a single node to be incorrect.
A node is an impurity in cluster $c$ if it has an edge to each of nodes in $R_c$.
\begin{align}
    \Pr[\text{Node $i$ is an impurity in  cluster c}] \leq& \Pr[\min_{j \in R_c} \norm{w_{i,T} - w_{j,T}}\leq \lambda]\label{eq:alpha_fraction_int}\\
    \leq& \Pr[\min_{j \in R_c} (\norm{w_i^\star  - w_j^\star} - \Xi_{i,j}) \leq \lambda]\\
        \leq& \Pr[\Lambda_i  + \max_{j \in R_c} \Lambda_j\geq 2\Delta]
\end{align}
Now, if $\max_{j \in R_c}\Lambda_j\leq \gamma_1\Delta$, for $\gamma_1 \in (1,2)$, then we need $\Lambda_i\geq (2-\gamma_1)\Delta$ for error.

Using the definition of random variables in Appendix~\ref{sec:preliminaries}
\begin{align*}
    \Pr[Q'_c \geq \alpha t] \leq& \Pr[Q'_c \geq \alpha t | \max_{j \in R_c}\Lambda_j\leq \gamma_1\Delta] + \Pr[\max_{j \in R_c}\Lambda_j\geq \gamma_1\Delta]\\
     \leq& \Pr[\sum_{i=1}^m Y_{i,2-\gamma_1} \geq \alpha t] + \Pr[\max_{j \in R_c}\Lambda_j\geq \gamma_1\Delta]
\end{align*}
For the first inequality, we use union bound over the value of $\max_{j \in R_c}\Lambda_j$ and for the second inequality, we need atleast $\alpha t$ impurities, so atleast $\alpha t$ of all $Y_{i,2-\gamma_1}$ should be $1$.

We now bound the two terms in the final inequality separately.

For the second term, if $\max_{j \in R_c}\Lambda_j \geq \gamma_1\Delta$. 

\begin{align*}
    \Pr[\max_{j \in R_c}\Lambda_j\geq \gamma_1\Delta]\leq Q_c \Pr[Y_{j,\gamma_1}=1]\leq Q_c (\frac{p}{m})^{\gamma_1}
\end{align*}
Here, we use union bound over all elements in $R_c$ for the first inequality and the second inequality is plugging in the value of $\Pr[Y_{j,\gamma_1} = 1]$, which we have already computed.

Now, we need to provide a bound on $Q_c$. Note that if $Q_c$ denotes the correct number of nodes, which  corresponds to the majority of nodes, then $Q_c \leq (1 - \alpha) S_c$, where $S_c$ is the size of the cluster $c$.

For the first term, we can use Chernoff bound as $Y_{i,2-\gamma_1}$ are independent random variables with expectation $\frac{p}{m}$
\begin{align*}
    \Pr[\frac{1}{m}\sum_{i=1}^m Y_{i,2-\gamma_1} \geq \alpha \frac{t}{m}] \leq \exp(-(\alpha \frac{t}{m} - \E[Y_{i,2-\gamma_1}])^2 m)\leq \exp(-a_2 m)
\end{align*}
We need $\alpha\frac{t}{m} \geq \E[Y_{i,2-\gamma_1}]$,which implies  $\alpha t \geq 1$, since $Y_{i,2-\gamma_1}$ is a bernoulli random variable. Further, we require $\alpha t = \Theta(m)$, so that we can bound the probability using a constant $a_2\geq0$.
If we choose $\gamma_1$ as a constant independent of $m$, then we are done.

Now, plugging all these inequalities into Eq~\eqref{eq:alpha_proof_int}, we get
\begin{align*}
    &\Pr[\exists c \in \range(\mathcal{C}_0)  \text{ which has $\geq \alpha$ fraction of wrong nodes}]\\
    &\quad\leq \range(\mathcal{C}_0)\exp(-a_2 m) + \sum_{c \in \range(\mathcal{C}_0)}(1-\alpha)S_c(\frac{p}{m})^{\gamma_1}\\
    &\quad\leq \abs{\range(\mathcal{C}_0)}\exp(-a_2 m) + (1-\alpha)m(\frac{p}{m})^{\gamma_1}\\
    &\quad \leq \frac{m}{t}\exp(-a_2 m) + (1-\alpha)m(\frac{p}{m})^{\gamma_1}\\
\end{align*} 
For the second inequality, we use $\sum_{c\in \mathcal{C}_0}S_c = m$ and for the third inequality, we use $\abs{\range(\mathcal{C}_0)}t\leq m$.

\subsection{Proof of Lemma~\ref{lem:merge_error}}\label{sec:merge}

First, let $i,j\in[m]$ be a node in cluster  $c, c' \in \range(\mathcal{C}_0)$ respectively such that $\mathcal{C}^\star(j)$ and $\mathcal{C}^\star(i)$ are the cluster labels of clusters $c$ and $c'$ respectively. Then, if we repeat our thresholding analysis for \texttt{MERGE()} operation, we obtain

\begin{align*}
    &\mathsf{dist}(w_i^\star,w_j^\star) - \Psi_{c,c'}\leq\mathsf{dist}(\omega_{c,T}, \omega_{c',T}) \leq \mathsf{dist}(w_i^\star,w_j^\star) + \Psi_{c,c'}\\
    &\text{where } \Psi_{c,c'} = \mathsf{dist}(\omega_c^\star,w_i^\star) + \mathsf{dist}(\omega_{c'}^\star, w_j^\star) + \sum_{k = c, c'}\mathsf{dist}(w_{k,T},w_{k}^\star)
\end{align*}
We obtain the above equations by a simple application of triangle inequality. Here, $\omega_c^\star$ is as defined in Appendix~\ref{sec:preliminaries}.

To analyze the above quantities, we need to bound $\norm{\omega_c^\star - \omega_{c,T}}$ and $\norm{\omega_c^\star - w_j^\star}$ for some $j\in G_{c}$.
The following Lemmas provide these bounds.
\begin{lemma}[Convergence of $\omega_{c,T}$]\label{lem:cluster_conv}
If Assumptions~\ref{assumption:str_cvx},\ref{assumption:smooth} and ~\ref{assumption:lipschitz} hold, and $\eta \leq \frac{1}{L}$, then
    \begin{equation}
        \norm{\omega_{c,T} - \omega_c^\star} \leq (1-\kappa^{-1})^{T/2}D + \frac{2}{\mu}\Lambda_c \quad \forall c \in \range(\mathcal{C}_0)    
    \end{equation}
    where  $\kappa = \frac{L}{\mu}$ and $\Lambda_c$ is a positive random variable with
    \begin{align}
        &\Pr[\Lambda_c \geq \sqrt{2d}\frac{r + 3\beta s}{1-2\beta} + \sqrt{2}\frac{2(1 + 3\beta)}{1 - 2\beta}\delta \Hat{L}]\nonumber\\
        &\leq 2d(1 + \frac{D}{\delta})^d\biggl(\exp(-(1-\alpha) S_c n \min\{\frac{r}{2\Hat{L}},\frac{r^2}{2\Hat{L}^2}\}) + (1-\alpha)S_c\exp(-n\min\{\frac{s}{2\Hat{L}},\frac{s^2}{2\Hat{L}^2}\})\biggr)
    \end{align}
    for some $r,s,\delta>0$ where $S_c$ is the size of cluster $c$.
\end{lemma}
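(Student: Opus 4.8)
The plan is to mirror the two-step structure of the trimmed-mean gradient-descent analysis of \cite{pmlr-v80-yin18a} (and of the single-node Lemma~\ref{lem:conv_node_int}), treating the impurities of cluster $c$ as Byzantine nodes: there are at most $\alpha S_c < \beta S_c$ of them, hence within the trimming budget. Write $g(w) = \mathrm{TrMean}_\beta(\{\nabla f_i(w): \mathcal{C}_0(i)=c\})$ for the direction used in \texttt{TrimmedMeanGD}, let $G_c$ be the set of ``good'' nodes of cluster $c$ (those whose true label equals the cluster label of $c$, so that $F_c = \frac{1}{|G_c|}\sum_{i\in G_c}F_i$), and set $\Lambda_c := \sup_{w\in\cW}\norm{g(w) - \nabla F_c(w)}$. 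We argue conditionally on $\mathcal{C}_0$; since \texttt{REFINE()} draws fresh samples for \texttt{TrimmedMeanGD}, the data entering $g$ may be treated as independent of how $G_c$ was formed, so $G_c$ is effectively fixed. The deterministic part is then identical to Lemma~\ref{lem:conv_node_int} with $F_c$ in place of $F_i$: $F_c$ is $\mu$-strongly convex and $L$-smooth (average of such functions); $\omega_c^\star$ is a fixed point of $w\mapsto\mathrm{proj}_{\cW}(w-\eta\nabla F_c(w))$; projection is non-expansive; for $\eta\le 1/L$ a projected-gradient step on $F_c$ contracts the (squared) distance to $\omega_c^\star$ by a factor $(1-\kappa^{-1})$; replacing $\nabla F_c$ by $g$ perturbs each step by at most $\eta\Lambda_c$. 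Unrolling, summing the geometric series, and using $\norm{w_{c,0}-\omega_c^\star}\le D$ yields $\norm{\omega_{c,T}-\omega_c^\star}\le(1-\kappa^{-1})^{T/2}D + \frac{2}{\mu}\Lambda_c$.

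The real work is the tail bound on $\Lambda_c$, which I would do coordinate-wise. Fix $w\in\cW$ and $k\in[d]$. By the robustness property of the coordinate-wise trimmed mean (as in \cite{pmlr-v80-yin18a}), since fewer than a $\beta$-fraction of the summands are impure, every trimmed impure coordinate is sandwiched by retained good coordinates, whence
\[
\abs{g_k(w)-\partial_k F_c(w)} \le \frac{1}{1-2\beta}\Bigl(\bigl|\tfrac{1}{|G_c|}\textstyle\sum_{i\in G_c}(\partial_k f_i(w)-\partial_k F_i(w))\bigr| + 3\beta\max_{i\in G_c}\abs{\partial_k f_i(w)-\partial_k F_i(w)}\Bigr).
\]
Each $\partial_k f(w,z)\in[-L_k,L_k]\subseteq[-\Hat{L},\Hat{L}]$, so after centering it is bounded, hence sub-exponential with parameter $\Hat{L}$. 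The first term is an average over $\sum_{i\in G_c}n_i\ge(1-\alpha)S_c\,n$ i.i.d.\ centered bounded variables, so Bernstein gives $\Pr[\cdot\ge r]\le 2\exp(-(1-\alpha)S_c\,n\min\{r/(2\Hat{L}),r^2/(2\Hat{L}^2)\})$; the second term, for a single node, concentrates over its $\ge n$ samples, and a union bound over the $\le(1-\alpha)S_c$ good nodes gives $\Pr[\max\ge s]\le 2(1-\alpha)S_c\exp(-n\min\{s/(2\Hat{L}),s^2/(2\Hat{L}^2)\})$. This is the essential difference from the single-node lemma: the $\mathrm{TrMean}$ over $S_c$ nodes is sensitive to individual good nodes, so the worst-per-node fluctuation term (and its extra union bound) must appear.

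To turn this into a bound on $\Lambda_c = \sup_{w\in\cW}\norm{g(w)-\nabla F_c(w)}$, I would aggregate the $d$ coordinates into the $\ell_2$ norm (contributing the $\sqrt{2d}$ factor, with a further union over coordinates and over the two tails absorbed into the ``$2d$'' prefactor), giving a pointwise bound of order $\sqrt{2d}\,\frac{r+3\beta s}{1-2\beta}$, and then cover $\cW$ by a minimal $\delta$-net of cardinality $\le(1+D/\delta)^d$. Union-bounding the pointwise estimate over the net produces the $2d(1+D/\delta)^d$ prefactor; an arbitrary $w$ is controlled via its nearest net point using that $\partial_k f_i$ and $\partial_k F_c$ are $L(\le\Hat{L})$-Lipschitz by smoothness, and this off-net slack is inflated by the trimmed-mean factor $\frac{1+3\beta}{1-2\beta}$, producing the $\sqrt{2}\,\frac{2(1+3\beta)}{1-2\beta}\delta\Hat{L}$ term. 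Putting the two displays together gives exactly the stated tail bound for $\Lambda_c$.

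The main obstacle is the robust trimmed-mean inequality and, more so, making it hold uniformly over $\cW$ without letting the $(1+D/\delta)^d$ covering factor overwhelm the concentration. Concretely one must: (i) verify that the up-to-$\alpha S_c$ impurities are genuinely absorbed by the $\beta$-trimming, which is what forces the worst-per-node term and the per-node union bound; (ii) track the trimming/impurity/discretization constants $\frac{1}{1-2\beta}$, $3\beta$, $1+3\beta$ through the argument; and (iii) rely on the resampling in \texttt{REFINE()} so that $G_c$ can be treated as non-random when invoking Bernstein. After that, the choices of $r,s,\delta$ and the conditions on $n,S_c,d$ needed for the net factor to be dominated are routine bookkeeping, and are precisely what is fed into Lemmas~\ref{lem:merge_error} and \ref{lem:recluster}.
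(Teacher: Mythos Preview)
Your proposal is correct and follows essentially the same approach as the paper: the deterministic part (projection non-expansiveness, $(1-\eta\mu)^{1/2}$ contraction from strong convexity and smoothness, unrolling and bounding the geometric series by $2/\mu$) is identical to the paper's argument, and for the tail bound on $\Lambda_c$ the paper simply invokes an intermediate lemma (Lemma~\ref{lem:trmean}) whose proof is deferred to \cite[Appendix~E.1]{pmlr-v80-yin18a}, which is exactly the coordinate-wise Bernstein plus per-node union bound plus $\delta$-net argument you spell out.
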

Proof is presented in Section~\ref{sec:cluster_conv_proof}

\begin{lemma}[Distance between cluster minima and node minima]\label{lem:cluster_minima}
If Assumptions~\ref{assumption:str_cvx} and ~\ref{assumption:lipschitz} are satisfied then,  for all $j\in [m]$, where $j$ is a node in cluster $c\in \mathcal{C}_0$ where $\mathcal{C}^\star(j)$ is the cluster label of node $c$, we have
\begin{equation}
    \norm{\omega_c^\star -w_j^\star}\leq\sqrt{\frac{2\Hat{L}\epsilon_1}{\mu}}:=B
\end{equation}
\end{lemma}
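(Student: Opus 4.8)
The plan is to exploit that $\omega_c^\star$ minimizes $F_c$, the average of the population losses $F_i$ over the nodes $i$ whose true cluster label equals $c' = \mathcal{C}^\star(j)$, while all of those $F_i$ have their own minimizers $w_i^\star$ within $\ell_2$-distance $\epsilon_1$ of $w_j^\star$ by the clustering structure (Definition~\ref{assumption:clustering}). Intuitively $w_j^\star$ is then almost optimal for $F_c$, and strong convexity turns this small suboptimality gap into a bound on $\norm{w_j^\star - \omega_c^\star}$.

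Concretely, I would first observe that since each per-sample loss $f(\cdot,z)$ is $\mu$-strongly convex (Assumption~\ref{assumption:str_cvx}), so is each $F_i = \E_{z\sim\cD_i}[f(\cdot,z)]$ and hence the average $F_c$. Using the first-order optimality condition for the constrained minimizer $\omega_c^\star \in \cW$ together with strong convexity gives, for every $w \in \cW$, the inequality $F_c(w) \geq F_c(\omega_c^\star) + \frac{\mu}{2}\norm{w - \omega_c^\star}^2$; applying it at $w = w_j^\star$ yields
\[
\frac{\mu}{2}\,\norm{w_j^\star - \omega_c^\star}^2 \;\leq\; F_c(w_j^\star) - F_c(\omega_c^\star).
\]
Next I would bound the right-hand side. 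Writing it as $\frac{1}{\abs{G_{c'}}}\sum_{i \in G_{c'}}\bigl(F_i(w_j^\star) - F_i(\omega_c^\star)\bigr)$ and using $F_i(\omega_c^\star) \geq \min_{w \in \cW} F_i(w) = F_i(w_i^\star)$ (valid because $\omega_c^\star \in \cW$), each summand is at most $F_i(w_j^\star) - F_i(w_i^\star)$. Convexity of $F_i$ bounds this by $\langle \nabla F_i(w_j^\star),\, w_j^\star - w_i^\star\rangle$, and Cauchy--Schwarz with the coordinate-wise Lipschitz bounds $\abs{\partial_k F_i} \leq L_k$ (Assumption~\ref{assumption:lipschitz}) bounds it further by $\Hat{L}\,\norm{w_j^\star - w_i^\star}$. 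Since every $i \in G_{c'}$ has $\mathcal{C}^\star(i) = c' = \mathcal{C}^\star(j)$, Definition~\ref{assumption:clustering} gives $\norm{w_i^\star - w_j^\star} \leq \epsilon_1$, so each summand is at most $\Hat{L}\epsilon_1$ and hence $F_c(w_j^\star) - F_c(\omega_c^\star) \leq \Hat{L}\epsilon_1$. Combining with the display above gives $\norm{w_j^\star - \omega_c^\star} \leq \sqrt{2\Hat{L}\epsilon_1/\mu} = B$.

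I do not expect a genuine obstacle; the only points needing slight care are (i) that $\omega_c^\star$ may lie on the boundary of $\cW$, which is handled by invoking the variational inequality $\langle \nabla F_c(\omega_c^\star), w - \omega_c^\star\rangle \geq 0$ rather than $\nabla F_c(\omega_c^\star) = 0$, and (ii) that the functions averaged to form $F_c$ all correspond to nodes sharing $j$'s true cluster label, so that the $\epsilon_1$-closeness of Definition~\ref{assumption:clustering} applies to every term --- both of which are immediate from the definitions of $\omega_c^\star$ and of the cluster label.
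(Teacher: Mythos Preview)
Your proposal is correct and follows essentially the same approach as the paper: lower-bound $F_c(w_j^\star)-F_c(\omega_c^\star)$ via $\mu$-strong convexity of $F_c$, upper-bound it term-by-term by replacing $F_i(\omega_c^\star)$ with $F_i(w_i^\star)$ and invoking the $\Hat{L}$-Lipschitz bound together with $\norm{w_i^\star-w_j^\star}\le\epsilon_1$. The paper additionally separates out the $i=j$ summand and applies strong convexity of $F_j$ to it, obtaining a marginally tighter intermediate inequality that it then discards; your version is in fact the cleaner route to the same final bound.
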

Proof is presented in Section~\ref{sec:cluster_minima_proof}.

Now, that we have our required quantities, we are ready to analyze the probability of error after the merge and reclustering operations.

First, we analyze the probabilty of \texttt{MERGE()} operation.
Note that if correct nodes of $c$ and $c'$ were from the same cluster $\mathcal{C}^\star$ then, $\norm{w_i^\star - w_j^\star}\leq \epsilon_1, \forall i \in G_c, j \in G_{c'}$.
If correct nodes of $c'$ and $c$ were from different clusters in $\mathcal{C}^\star$, then, $\norm{w_i^\star - w_j^\star}\geq \epsilon_2, \forall i \in G_c, j \in G_{c'}$.
Therefore, the probability of \texttt{MERGE()} error is upper bounded by 
\begin{align}
    \Pr[\text{\texttt{MERGE()} Error}] \leq& \Pr[\text{at least 1 edge is incorrect}]\\
    \leq& \Pr[\max_{c,c'} \Psi_{c,c'} \geq \Delta_{\lambda}]\\
    \leq& \Pr[\max_{c,c'} \sum_{k = c,c'} \frac{2\Lambda_k}{\mu}\geq \Delta_{\lambda} - 2(1 - \kappa^{-1})^{T/2}D - 2B]\\
    \leq& \max_{c \in \range(\mathcal{C}_0)} \Pr[ \Lambda_c\geq \frac{\mu}{2}(\frac{\Delta_{\lambda}}{2} - (1 - \kappa^{-1})^{T/2}D - B)]\\
    \leq& \max_{c \in \range(\mathcal{C}_0)} \Pr[ \Lambda_c\geq \Delta']\label{eq:merge_proof_int}\\
    \leq& \max_{c \in \range(\mathcal{C}_0)} 4d\exp(-a_3' n \frac{\Delta'}{2\Hat{L}})\\
    \leq& \sum_{c \in \range(\mathcal{C}_0)} 4d\exp(-a_3' n \frac{\Delta'}{2\Hat{L}})\leq \frac{4dm}{t} \exp(-a_3' n\frac{\Delta'}{2\Hat{L}})\label{eq:merge_error_int}
\end{align}
For the second inequality, we expand all the terms of $\Phi_{c,c'}$.
We set $\Delta' = \frac{\mu}{2}(\frac{\Delta_{\lambda}}{2} - (1 - \kappa^{-1})^{T/2}D - B)$. Then, we set $r = \Theta(\Hat{L}\max\{\frac{\Delta'}{S_c\sqrt{d}\Hat{L}},\sqrt{\frac{\Delta'}{S_c\sqrt{d}\Hat{L}}}\}), s =\Theta(\Hat{L}\max\{\frac{\Delta'}{S_c\sqrt{d}\Hat{L}} + \frac{2\log(S_c)}{n},\sqrt{\frac{\Delta'}{S_c\sqrt{d}\Hat{L}} + \frac{2\log(S_c)}{n}}\}), \delta = \Theta(\frac{D d^{3/2} \Hat{L}}{n\Delta'})$ and if $d = \Omega(\min\{\frac{n^{2/3}\Delta^{4/3}}{D^{2/3}\Hat{L}^{2/3}},\frac{n^2\Delta'^2}{\Hat{L}^2\log(c_{\min})}\})$, such that $\sqrt{2d}\frac{r + 3\beta s}{1-2\beta} + \sqrt{2}\frac{2(1 + 3\beta)}{1 - 2\beta}\delta \Hat{L} \geq \Delta'$, then there exist some constant $a_3' >0$ such that the second inequality is satisfied by Lemma~\ref{lem:cluster_conv}. We then use the union bound, followed by $\abs{\range(\mathcal{C}_0)}\leq \frac{m}{t}$.

\subsection{Proof of Lemma~\ref{lem:recluster}}\label{sec:recluster}
We can apply our thresholding analysis to $\norm{\omega_{c,T} - w_{i,T}}$ for $c\in \range(\mathcal{C}_0)$. 
First, let $j$ be a node in cluster $c$ such that $\mathcal{C}^\star(j)$ is the cluster label of $c$.
\begin{align*}
&    \mathsf{dist}(w_j^\star, w_i^\star) + \Phi_{c,i}    \leq \mathsf{dist}(\omega_{c,T} ,w_{i,T}) \leq \mathsf{dist}(w_j^\star,w_i^\star) + \Phi_{c,i}\\
&\text{where } \Phi_{c,i} = \mathsf{dist}(\omega_{c,T},\omega_c^\star) + \mathsf{dist}(\omega_c^\star,w_j^\star) + \mathsf{dist}(w_{i,T},w_i^\star)
\end{align*}
From Appendix~\ref{sec:init_proof} and ~\ref{sec:merge}, we have bounds for all the terms involved.
Note that after merging, each cluster in $\mathcal{C}^\star$ should have only 1 cluster in $\mathcal{C}_1$.
Therefore, after we recluster according to $\norm{\omega_{c,T} - w_{i,T}}$, we incur an error if $i$ goes to the wrong cluster.
Suppose that the $c$ corresponds to the correct cluster for $i$ and $c'$ is the cluster to which it is assigned , with $c,c'\in \range(\mathcal{C}_1), c\neq c'$.
Then,
\begin{align}
    \Pr[\text{Reclustering Error}]\leq& \Pr[\max_{ i \in [m]} \max_{c' \neq c} \norm{\omega_{c',T} - w_{i,T}} \leq \norm{\omega_{c,T} - w_{i,T}}]\\
    \leq&  \Pr[\max_{ i \in [m]} \max_{c' \neq c} \epsilon_2 - \Phi_{c',i} \leq \epsilon_1 + \Phi_{c,i}]\\
    \leq&  \Pr[\max_{ i \in [m]} \max_{c' \in \mathcal{C}_0'} \Phi_{c,i} \geq \frac{\epsilon_2 - \epsilon_1}{2}]\\
    \leq&  \Pr[\max_{ i \in [m]} \max_{c' \in \mathcal{C}_0'}(\Lambda_c + \Lambda_i) \geq \Delta + \Delta']\label{eq:recluster_int}\\
    \leq&  \Pr[\max_{c \in  \mathcal{C}_0'} \Lambda_c \geq \Delta'  - (\gamma_2-1)\Delta] + \Pr[\max_{i\in[m]}\Lambda_i \geq \gamma_2\Delta]\\
    \leq&  \max_{c \in \range(\mathcal{C}_0)'} \Pr[ \Lambda_c \geq \Delta''] + \max_{i \in m}\Pr[\Lambda_i \geq \gamma_2\Delta]\label{eq:recluster_error_int}
\end{align}
For the second inequality, we use the thresholding analysis on $\norm{\omega_{c,T} - w_{i,T}}$.
For the third inequality, we rearrange the terms and combine max over $c'\neq c$ with $c$, and use. For the fourth inequality, we expand the terms of $\Phi_{c,T}$ and  substitute the values of $\Delta$ and $\Delta'$, using the inequality $\Delta_{\lambda} \leq \frac{\epsilon_2 - \epsilon_1}{2}$.
For the fifth inequality, we use consider some $\gamma_2\in (1,2 - \frac{\mu B}{2\Delta})$ and break the terms using union bound such that $\Delta'' = \Delta' - (\gamma_2 -1)\Delta \geq 0$.
Finally, we use the union bound on $c \in \range(\mathcal{C}_0)'$ and $i\in[m]$.

Now, we bound the two terms in Eq~\eqref{eq:recluster_error_int} separately.
The second term can be bounded in terms of $Y_{i,\gamma_2}$.
Thus,
\begin{align}
    \max_{i\in[m]}\Pr[\Lambda_i \geq \gamma_2\Delta]= \max_{i \in [m]}\Pr[Y_{i,\gamma_2} =1]\leq m (\frac{p}{m})^{\gamma_2}
\end{align}
We use expectation of $Y_{i,\gamma_2}$ calculated in Appendix~\ref{sec:alpha_fraction_proof} and then bound max by sum.

For the first term, our analysis is similar to that of \texttt{MERGE()} error. Assume that there is some constant $u_2 > 1$ such that $\Delta'' \geq u_2\Delta'$. We set $r = \Theta(\Hat{L}\max\{\frac{\Delta'}{S_c\sqrt{d}\Hat{L}},\sqrt{\frac{\Delta'}{S_c\sqrt{d}\Hat{L}}}\}), s =\Theta(\Hat{L}\max\{\frac{\Delta'}{S_c\sqrt{d}\Hat{L}} + \frac{2\log(S_c)}{n},\sqrt{\frac{\Delta'}{S_c\sqrt{d}\Hat{L}} + \frac{2\log(S_c)}{n}}\}), \delta = \Theta(\frac{D d^{3/2} \Hat{L}}{n\Delta'})$ and if $d = \Omega(\min\{\frac{n^{2/3}\Delta^{4/3}}{D^{2/3}\Hat{L}^{2/3}},\frac{n^2\Delta'^2}{\Hat{L}^2\log(c_{\min})}\})$, such that $\sqrt{2d}\frac{r + 3\beta s}{1-2\beta} + \sqrt{2}\frac{2(1 + 3\beta)}{1 - 2\beta}\delta \Hat{L} \geq \Delta'$, then there exist some constant $a_3'' >0$ such that the second inequality is satisfied by Lemma~\ref{lem:cluster_conv}. We then use the union bound, followed by $\abs{\range(\mathcal{C}_0)}\leq \frac{m}{t}$.

\begin{align}
\max_{c \in \range(\mathcal{C}_0)'} \Pr[ \Lambda_c \geq \Delta''] \leq& \max_{c \in \range(\mathcal{C}_0)'} 4d \exp(-a_3'' n \frac{\Delta'}{2\Hat{L}})\\
    \leq& \sum_{c \in \range(\mathcal{C}_0)'} 4d \exp(-a_3'' n \frac{\Delta'}{2\Hat{L}})\\
    \leq&  \frac{4d m}{t} \exp(-a_3'' n \frac{\Delta'}{2\Hat{L}})
\end{align}

\subsection{Proof of Lemma~\ref{lem:cluster_conv}}
\label{sec:cluster_conv_proof}
First, we use an intermediate Lemma from ~\cite{pmlr-v80-yin18a}. This characterizes the behavior of $TrimmedMean_{\beta}$
gradient estimator.
\begin{lemma}[TrimmedMean Estimator Variance]\label{lem:trmean}
    Let $g_c(w)$ be the output of  $\mathrm{TrMean}_{\beta}$ estimator for cluster $c\in \mathcal{C}_0$ with size of cluster $S_c$. If Assumptions~\ref{assumption:lipschitz} holds, then
    \begin{equation}
    \begin{aligned}
        &\norm{ g_c(w) - \nabla F_c(w)}  \leq \Lambda \\
        &\text{where } \Pr[\Lambda \geq \sqrt{2d}\frac{r + 3\beta s}{1-2\beta} + \sqrt{2}\frac{2(1 + 3\beta)}{1 - 2\beta}\delta \Hat{L}]\\
        &\leq 2d(1 + \frac{D}{\delta})^d\biggl(\exp(-(1-\alpha) S_c n \min\{\frac{r}{2\Hat{L}},\frac{r^2}{2\Hat{L}^2}\}) + (1-\alpha)S_c\exp(-n\min\{\frac{s}{2\Hat{L}},\frac{s^2}{2\Hat{L}^2}\})\biggr)
    \end{aligned}
    \end{equation}
    for some $r,s,\delta>0$.
\end{lemma}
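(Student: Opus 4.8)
\medskip
\noindent\emph{Proof plan for Lemma~\ref{lem:trmean}.} The plan is to instantiate the coordinate-wise trimmed-mean deviation analysis of \cite{pmlr-v80-yin18a} in our setting: inside a cluster $c\in\range(\mathcal{C}_0)$ the ``good'' nodes are those whose true label equals the cluster label of $c$ (so, by the impurity bound, fewer than an $\alpha<\beta$ fraction of the $S_c$ nodes of $c$ are ``bad''), and the reference gradient is $\nabla F_c(w)=|G_c|^{-1}\sum_{i\in G_c}\nabla F_i(w)$, the average of the good nodes' population gradients. Only Assumption~\ref{assumption:lipschitz} is used; no convexity is needed. First I would fix $w\in\cW$ and control the empirical gradient of a single good node: by Assumption~\ref{assumption:lipschitz} each $\partial_k f(w,z)$ is bounded by $L_k$, hence sub-exponential, so averaging the $n$ i.i.d.\ samples at node $i$ and applying a Bernstein tail bound coordinate-wise gives
\[
\Pr\!\left[\,|\partial_k f_i(w)-\partial_k F_i(w)|\ge s\,\right]\le 2\exp\!\Big(-n\min\{\tfrac{s}{2\Hat{L}},\tfrac{s^2}{2\Hat{L}^2}\}\Big),
\]
and pooling all $(1-\alpha)S_c$ good nodes (i.e.\ $(1-\alpha)S_c n$ samples) yields the analogous bound for the good-node empirical average around $\partial_k F_c(w)$, with $n$ replaced by $(1-\alpha)S_c n$ and $s$ by $r$. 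A union bound over the good nodes (for the single-node event) and over the $d$ coordinates produces the two exponential terms in the statement.

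Next I would invoke the deterministic robustness of the coordinate-wise trimmed mean. On the event that both concentration bounds above hold simultaneously for all coordinates, each coordinate of $g_c(w)$ — the $\beta$-trimmed mean of $\{\partial_k f_i(w)\}_i$ over the nodes currently assigned to cluster $c$ — lies within $\tfrac{1}{1-2\beta}(r+3\beta s)$ of $\partial_k F_c(w)$: trimming a $\beta$ fraction from each tail discards at most $\beta S_c$ good values and neutralises the $<\alpha S_c\le\beta S_c$ bad ones, each discarded or substituted good value is within $s$ of the corresponding population gradient, and $\tfrac{1}{1-2\beta}$ is the trimmed-mean renormalisation; this is exactly the bookkeeping of \cite{pmlr-v80-yin18a}. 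Squaring, summing over the $d$ coordinates, and using $\Hat{L}=(\sum_k L_k^2)^{1/2}$ converts this into the $\sqrt{2d}\,\tfrac{r+3\beta s}{1-2\beta}$ term for the Euclidean norm.

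Finally I would pass from a fixed $w$ to all of $\cW$ by a covering argument: take a $\delta$-net $\cW_\delta$ of $\cW$ with $|\cW_\delta|\le(1+D/\delta)^d$ (since $\cW$ has diameter $D$), apply the previous steps at every net point with a union bound — which is where the $2d(1+D/\delta)^d$ prefactor comes from — and transfer to an arbitrary $w$ using the $L_k$-Lipschitzness of $\partial_k f$, hence of $\partial_k F_i$, $\partial_k F_c$, and of the relevant order statistics. This discretisation costs an additive term of order $\delta\Hat{L}$, and carrying the $\tfrac{1+3\beta}{1-2\beta}$ factor from the trimmed mean through it gives the $\sqrt{2}\,\tfrac{2(1+3\beta)}{1-2\beta}\delta\Hat{L}$ term. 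Combining the three contributions yields the claimed tail bound; the decomposition $\norm{g_c(w)-\nabla F_c(w)}\le\Lambda$ with $\Lambda$ the above random upper bound is then immediate.

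I expect the trimmed-mean robustness step to be the main obstacle: one must carefully count how many good values are removed from, or promoted into, each tail after trimming a $\beta$ fraction while an unknown $\alpha<\beta$ fraction is corrupted, and track both the $\tfrac{1}{1-2\beta}$ renormalisation and the Lipschitz discretisation of the order statistics, in order to land precisely on the constants $3\beta$ and $2(1+3\beta)/(1-2\beta)$. The remaining ingredients — the coordinate-wise Bernstein bounds and the net union bound — are standard, and this is where the argument of \cite{pmlr-v80-yin18a} is used essentially verbatim, with their ``$q$ Byzantine workers'' replaced by ``at most $\alpha S_c$ impurities'' and their per-worker i.i.d.\ data replaced by our per-node samples.
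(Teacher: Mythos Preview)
Your plan is correct and matches the paper's approach: the paper's own proof merely notes that Assumption~\ref{assumption:lipschitz} makes each $\partial_k f(w,z)$ bounded and hence $\hat L$-sub-exponential, then defers the remainder verbatim to \cite[Appendix~E.1]{pmlr-v80-yin18a}; your outline is simply a faithful unpacking of that cited argument (coordinate-wise Bernstein, deterministic trimmed-mean robustness, $\delta$-net union bound).

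One minor slip worth fixing: in the discretisation step you invoke ``the $L_k$-Lipschitzness of $\partial_k f$'' to transfer from net points to all of $\cW$, but Assumption~\ref{assumption:lipschitz} (via Definition~\ref{def:lipschitz}) only gives the bound $|\partial_k f(w,z)|\le L_k$, i.e.\ boundedness of the partial derivative, not Lipschitz continuity of $w\mapsto\partial_k f(w,z)$. The latter is what the covering argument actually needs and comes from smoothness (Assumption~\ref{assumption:smooth}); the paper's lemma statement also silently relies on it through the cited proof.
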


\begin{proof}
The proof of this Lemma follows from coordinate-wise sub-exponential distribution of $\nabla F_c$.
Since loss per sample $f(w,z)$ is Lipschitz in each of its coordinates with Lipschitz constant $L_k$ for $k \in[d]$. Thus, $F_c(w)$ is also $L_k$-Lipschitz for  each coordinate $k\in[d]$ from Corrolary~\ref{corr:sum_lipschitz}.
Now, every subgaussian variable with variance $\sigma^2$ is $\sigma$-sub exponential. Thus, each coordinate of $\nabla_w f(w,z)$ is $\Hat{L}$-sub-exponential, since $\Hat{L} > L_k ,\forall k \in [d]$. 
The remainder of proof can be found in ~\citep[Appendix~E.1]{pmlr-v80-yin18a}.
\end{proof}

Now, using the above Lemma, we can bound the iterate error for a cluster $c\in \mathcal{C}_0$.
Consider $\norm{\omega_{c,t+1} - \omega_c^\star}^2$,
\begin{align*}
    \norm{\omega_{c,t+1} - \omega_c^\star} \leq& \norm{proj_{\cW}\{\omega_{c,t} - \eta\nabla g(\omega_{c,t})\} - \omega_c^{\star}}\\
    \leq& \norm{\omega_{c,t} - \eta \nabla g(\omega_{c,t}) - \omega_c^{\star}}\\
    \leq& \norm{\omega_{c,t} - \eta\nabla F(\omega_{c,t}) - \omega_c^{\star}} + \eta\norm{g(\omega_{c,t}) - \nabla F(\omega_{c,t})}\\
    \leq& \norm{\omega_{c,t} - \eta\nabla F(\omega_{c,t}) - \omega_c^{\star}} + \eta\Lambda
\end{align*}
Now, we bound $\norm{\omega_{c,t} - \eta \nabla F(\omega_{c,t}) - \omega_c^{\star}}^2$ using $\mu$-strong convexity and $L$-smoothness of $F_c$. 
The analysis is similar to the convergence analysis in Section~\ref{sec:conv_node_proof}. Thus, for $\eta \leq \frac{1}{L}$
\begin{align*}
    \norm{\omega_{c,t} - \eta \nabla F(\omega_{c,t}) - \omega_c^{\star}}^2 \leq (1-\eta\mu) \norm{ \omega_{c,t} - \omega_c^\star}^2
\end{align*}
Using this bound we can analyze the original term with $\norm{\omega_{c,t+1} - \omega_c^\star}$.
\begin{align*}
    \norm{\omega_{c,t+1} - \omega_c^\star} \leq& \sqrt{1-\eta\mu} \norm{ \omega_{c,t} - \omega_c^\star} + \eta\Lambda\\
    \norm{\omega_{c,T} - \omega_c^\star}\leq& (1-\eta\mu)^{T/2} \norm{ \omega_{c,0} - \omega_c^\star} + \eta\Lambda(\sum_{t=0}^{T-1}(1 - \eta\mu)^{t/2})\\
    \leq& (1-\kappa^{-1})^{T/2} \norm{ \omega_{c,0} - \omega_c^\star} + \eta\Lambda(\sum_{t=0}^{\infty}(1 - \frac{\eta\mu}{2})^{t})\\
    \leq& (1-\kappa^{-1})^{T/2}D + \frac{2}{\mu}\Lambda
\end{align*}
For the second inequality, we use $\kappa = \frac{L}{\mu}$ and unroll the recursion for $T$ steps. 
For the third inequality, we use $\sqrt{1 - x} \leq 1 - \frac{x}{2}$ and upper bound the finite geometric sum by its infinite counterpart. 
Finally we use the boundedness of $\cW$ and the sum of the geometric series to get our result.

\subsection{Proof of Lemma~\ref{lem:conv_node_int}}
\label{sec:conv_node_int_proof}
    We present the proof for this lemma here as it is a corollary of Lemma~\ref{lem:cluster_conv}.
    
    We utilize the intermediate Lemma~\ref{lem:trmean}. Now, if we set $\alpha = \beta = 0$ and $S_c = 1$, we obtain the generalization guarantee
    for GD on a single node $i\in [m]$. Further, we do not need the terms of $s$ as they appear with $\beta$, and thus, we can choose $s$ very large, so that 
    we can ignore its contribution to error probability.
    The remainder of the proof follows that of Lemma~\ref{lem:cluster_conv}.
\subsection{Proof of Lemma~\ref{lem:cluster_minima}}
\label{sec:cluster_minima_proof}

Since $F_c$ is $\Hat{L}$-Lipshchitz and $\mu$-strongly convex with minima $\omega_c^\star$,
\begin{align*}
    F_c(w_i^\star) - F_c(\omega_c^\star) =& \frac{F_i(w_i^\star) - F_i(\omega_c^\star)}{Q_c} +  \sum_{j\neq i, \mathcal{C}_0(j) = c}\frac{F_j(w_i^\star) - F_j(\omega_c^\star)}{Q_c}\\
    \leq& \frac{F_i(w_i^\star) - F_i(\omega_c^\star)}{Q_c} +  \sum_{j\neq i, \mathcal{C}_0(j) = c}\frac{F_j(w_i^\star) - F_j(w_j^\star)}{Q_c}\\
    \leq& -\frac{\mu\norm{w_i^\star - \omega_c^\star}^2}{2Q_c} +  \sum_{j\neq i, \mathcal{C}_0(j) = c}\frac{\Hat{L}\norm{w_i^\star - w_j^\star}}{Q_c}\\
    \frac{\mu}{2}\norm{w_i^\star - \omega_c^\star}^2\leq& -\frac{\mu\norm{w_i^\star - \omega_c^\star}^2}{2Q_c} +  \frac{(Q_c - 1)\Hat{L}\epsilon_1}{Q_c}\\
    \frac{\mu}{2}\norm{w_i^\star - \omega_c^\star}^2\leq& -\frac{\mu\norm{w_i^\star - \omega_c^\star}^2}{2Q_c} +  \frac{(Q_c - 1)\Hat{L}\epsilon_1}{Q_c}\\
    \norm{w_i^\star - \omega_c^\star}^2\leq& \frac{2\Hat{L}\epsilon_1}{\mu}\\
    \norm{w_i^\star - \omega_c^\star}\leq& \sqrt{\frac{2\Hat{L}\epsilon_1}{\mu}}
\end{align*}
For the first equation, we expand $F_c$ into its component terms, where $Q_c$ denotes the number of correct nodes in cluster $c$.
For the second inequality, we use the fact that $w_j^\star = \argmin_{w \in \cW}F_j(w)$. For the third inequality, we use strong-convexity of $F_i$ and $\Hat{L}$-Lipschitzness for $F_j, j\neq i$.
For the fourth inequality, we use a lower bound on $F_c(w_i^\star) - F_c(\omega_c^\star)$ using $\mu$-strong convexity of $F_c$.
Finally, we manipulate the remaining terms to obtain the final bound.

\section{Proof of Theorem~\ref{thm:convergence}}
\label{sec:convergence_proof} By Theorem~\ref{thm:refine}, $\mathcal{C}_R  \neq \mathcal{C}^\star$, with probability $\big(\frac{\rho_2 }{m^{(1-\rho_1)}}p\big)^R$. For the $(R+1)^{th}$ step, we bound probability of error by $1$. Therefore, with probability $1 - \exp(-\frac{5}{8}R)p$. For the $(R+1)^{th}$ step, we optimize the cluster iterates from \texttt{TrimmedMeanGD()} to improve convergence instead of clustering error. 
Since $\mathcal{C}_{R+1} = \mathcal{C}_R$, each cluster in $\mathcal{C}_{R+1}$ maps to some cluster in $\mathcal{C}^\star$. 
Without loss of generality, assume that cluster $c \in \range(\mathcal{C}_{R+1})$ maps to the same cluster $c \in \mathcal{C}$. Now, if $\{c_1, c_2, \ldots, c_l\}$ are the clusters in $\mathcal{C}_{R}$ which merged to form cluster $c \in \range(\mathcal{C}_{R+1})$. Then, we can write

\begin{align}
    \norm{\omega_{c,T} - \omega_c^\star} =& \norm{\frac{1}{l}\sum_{j=1}^l (\omega_{c_j,T} - \omega_c^\star)}\\
    \leq & \frac{1}{l}\sum_{j=1}^l \norm{\omega_{c_j,T} - \omega_c^\star}\\
    \leq & \frac{1}{l}\sum_{j=1}^l (\norm{\omega_{c_j,T} - \omega_{c_j}^\star} + \norm{\omega_{c_j}^\star - \omega_c^\star})\\
\end{align}
For the first inequality, we used the definition of $\omega_{c,T}$ from \texttt{MERGE()}. For the second inequality, we used the triangle inequality for the $l$ elements. The third inequality is obtained by using triangle inequality and adding and subtracting $\omega_{c_j}^\star$ as defined in Appendix~\ref{sec:preliminaries}.

Now, consider the set of nodes $\{i_1,i_2,\ldots, i_l\} \subseteq [m]$, such that $i_j \in c_j \forall j \in [l]$ and $\mathcal{C}^\star(i_j) = c \forall j \in [l]$. Therefore, we can split each term of $\norm{\omega_{c_j}^\star - \omega_c^\star}$ as --
\begin{align}
    \norm{\omega_{c,T} - \omega_c^\star} \leq & \frac{1}{l}\sum_{j=1}^l (\norm{\omega_{c_j,T} - \omega_{c_j}^\star} + \norm{\omega_{c_j}^\star -w_{i_j}}  + \norm{w_{i_j} - \omega_c^\star})\\
    \leq & \frac{1}{l}\sum_{j=1}^l \norm{\omega_{c_j,T} - \omega_{c_j}^\star} + 2B\\
\end{align}
From Lemma~\ref{lem:cluster_minima}, since $i_j$ contributes to both clusters $c_j$ and $c^\star$, we can bound the difference from their minima by $B$. Further, we can use Lemma~\ref{lem:cluster_conv} and the Lemma~\ref{lem:trmean}, which is adapted from \citep[Theorem~4]{pmlr-v80-yin18a},to bound the convergence of $\norm{\omega_{c_j,T} - \omega_{c_j}^\star}$. If we set $\delta = \frac{1}{n S_{c_j} \Hat{L} D}$ and
\begin{align*}
    r &= \Hat{L}\max\{\frac{8d}{n S_{c_j}}\log(1 + n S_c \Hat{L} D), \sqrt{\frac{8d}{n S_{c_j}}\log(1 + n S_c \Hat{L} D)}\}\\ 
    s &= \Hat{L}\max\{\frac{4d}{n}(d\log(1 + n S_{c_j} \Hat{L} D) +\log m) , \sqrt{\frac{4d}{n}(d\log(1 + n S_{c_j} \Hat{L} D) +\log m)}\}
\end{align*}
where $S_{c_j}$ is the size of cluster $c_j$, we obtain 
\begin{align}
    \norm{\omega_{c,T} - \omega_c^\star}\leq &  (1 - \kappa^{-1})^{T/2}D+ \Lambda' + 2B\\
\end{align}
where 
\begin{align*}
\Lambda'= \cO\biggl(\frac{\Hat{L}d}{1 - 2\beta}\biggl(\frac{\beta}{\sqrt{n}} + \frac{1}{\sqrt{n c_{\min}}}\biggr)\sqrt{\log(n \max_{j\in[l]}S_{c_j}\Hat{L}D)}\biggr)
\end{align*}
We can further upper bound $\max_{j \in [l] S_{c_j}}$ by $m$.
Now, the probability of error for each cluster $c \in \range(\mathcal{C}_R)$ for given values of $r$ and $s$ is $\frac{4d}{(1 + n c_{\min}\Hat{L}D)^d}$, therefore, we can use union bound and multiply this probability of error by $\range(\mathcal{C}_R) \leq \frac{m}{t}$. Since $t  = \Theta(c_{\min})$, we can upper bound this by $\frac{m u''}{c_{\min}}$ for some positive constant $c_{\min}$.

\section{Additional Technical Lemmas}
\begin{lemma}\label{lem:sum_str_cvx}
    If $f,g:\R^d \to \R$ are two $\mu$-strongly convex functions on a domain $\cW$. Then, $\frac{f + g}{2}$ is also $\mu$-strongly convex on the same domain.
\end{lemma}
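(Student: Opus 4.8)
The plan is to reduce the claim directly to the first-order characterization of strong convexity in Definition~\ref{def:strong_cvx}. Write $h = \frac{f+g}{2}$, so that $\nabla h(w) = \frac{1}{2}\bigl(\nabla f(w) + \nabla g(w)\bigr)$ for every $w \in \cW$. I would apply Definition~\ref{def:strong_cvx} to $f$ and to $g$ at an arbitrary pair $w, w' \in \cW$, obtaining
\begin{align*}
f(w') &\ge f(w) + \langle \nabla f(w), w'-w \rangle + \tfrac{\mu}{2}\norm{w'-w}^2,\\
g(w') &\ge g(w) + \langle \nabla g(w), w'-w \rangle + \tfrac{\mu}{2}\norm{w'-w}^2.
\end{align*}
Averaging these two inequalities and collecting terms gives
\[
h(w') \ge h(w) + \langle \nabla h(w), w'-w \rangle + \tfrac{\mu}{2}\norm{w'-w}^2 ,
\]
which is exactly $\mu$-strong convexity of $h$. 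The point to highlight is that the constant $\mu$ is \emph{preserved} rather than halved: it multiplies the identical quadratic term $\tfrac12\norm{w'-w}^2$ in both inequalities, so forming the average of the two does not degrade that coefficient.

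If one wishes to avoid assuming differentiability, I would instead use the equivalent characterization that $\phi$ is $\mu$-strongly convex iff $\phi - \tfrac{\mu}{2}\norm{\cdot}^2$ is convex. Then $h - \tfrac{\mu}{2}\norm{\cdot}^2 = \tfrac12\bigl(f - \tfrac{\mu}{2}\norm{\cdot}^2\bigr) + \tfrac12\bigl(g - \tfrac{\mu}{2}\norm{\cdot}^2\bigr)$ is a nonnegative combination of two convex functions, hence convex, giving the conclusion. Either route is essentially immediate, so there is no real obstacle; the ``hard part'' is merely being careful not to lose a factor in $\mu$.

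Finally, I would note that the argument extends verbatim—by the same averaging, or by a one-line induction—to any finite convex combination $\sum_i \theta_i f_i$ with $\theta_i \ge 0$ and $\sum_i \theta_i = 1$ of $\mu$-strongly convex functions. This is the form actually invoked downstream: it is what lets us conclude that the per-cluster population loss $\mathcal{F}_c = \frac{1}{\abs{G_c}}\sum_{i \in G_c} F_i$ in Eq.~\eqref{eq:wstar} inherits $\mu$-strong convexity from Assumption~\ref{assumption:str_cvx}, which is used in the convergence analyses of Lemmas~\ref{lem:conv_node} and \ref{lem:cluster_conv}.
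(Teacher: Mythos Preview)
Your proof is correct and follows essentially the same approach as the paper: write the first-order strong-convexity inequality (Definition~\ref{def:strong_cvx}) for $f$ and for $g$, average the two, and read off $\mu$-strong convexity of $\frac{f+g}{2}$. The additional remarks you include (the $\phi - \tfrac{\mu}{2}\norm{\cdot}^2$ characterization and the extension to general convex combinations) are sound but go beyond what the paper records.
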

\begin{proof}
If $f$ and $g$ are $\mu$-strongly convex on a domain $\cW$, then for any $w_1, w_0 \in \cW$
\begin{align*}
    f(w_1) &\geq f(w_0) + \lin{\nabla f(w_0), w_1 - w_0} + \frac{\mu}{2}\norm{w_1 - w_0}^2\\
    g(w_1) &\geq g(w_0) + \lin{\nabla g(w_0), w_1 - w_0} + \frac{\mu}{2}\norm{w_1 - w_0}^2 
\end{align*}
Adding the above equations, we get 
\begin{align*}
    \frac{f(w_1) + g(w_1)}{2}\geq \frac{f(w_0) + g(w_0)}{2}+ \lin{\frac{\nabla f(w_0)+\nabla g(w_0)}{2}, w_1 - w_0} + \frac{\mu}{2}\norm{w_1 - w_0}^2
\end{align*}
Thus, $\frac{f+g}{2}$ is also $\mu$-strongly convex.
\end{proof}

\begin{lemma}\label{lem:sum_smooth}
    If $f,g:\R^d \to \R$ are two $L$-smooth functions on a domain $\cW$. Then, $\frac{f + g}{2}$ is also $L$-smooth on the same domain.
\end{lemma}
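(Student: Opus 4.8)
The plan is to mirror the proof of Lemma~\ref{lem:sum_str_cvx}, but to invoke the gradient–Lipschitz characterization of smoothness (Definition~\ref{def:smooth}) in place of the strong-convexity inequality. First I would observe that by linearity of the gradient, $\nabla\bigl(\tfrac{f+g}{2}\bigr)(w) = \tfrac12\bigl(\nabla f(w) + \nabla g(w)\bigr)$ for every $w \in \cW$ at which the gradients are defined. Then, fixing arbitrary $w, w' \in \cW$, I would write
\begin{align*}
\Bigl\|\nabla\bigl(\tfrac{f+g}{2}\bigr)(w) - \nabla\bigl(\tfrac{f+g}{2}\bigr)(w')\Bigr\|
&= \tfrac12\bigl\|(\nabla f(w) - \nabla f(w')) + (\nabla g(w) - \nabla g(w'))\bigr\| \\
&\leq \tfrac12\bigl\|\nabla f(w) - \nabla f(w')\bigr\| + \tfrac12\bigl\|\nabla g(w) - \nabla g(w')\bigr\|,
\end{align*}
using the triangle inequality, and then apply $L$-smoothness of $f$ and of $g$ separately to bound each of the two terms on the right by $\tfrac{L}{2}\|w - w'\|$, which sums to $L\|w - w'\|$.

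There is essentially no obstacle here: the argument is a one-line application of the triangle inequality together with Definition~\ref{def:smooth}. The only minor point worth stating explicitly is that $\cW$ is the common domain of $f$ and $g$, so the gradient of the average is well defined wherever those of $f$ and $g$ are, and no extra regularity or convexity assumptions beyond those carried by the smoothness definition are required.
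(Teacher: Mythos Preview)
Your proposal is correct and is essentially identical to the paper's own proof: the paper also writes $\nabla\bigl(\tfrac{f+g}{2}\bigr)(w_1)-\nabla\bigl(\tfrac{f+g}{2}\bigr)(w_0)$, applies the triangle inequality, and then bounds each piece by $L\|w_1-w_0\|$ using the $L$-smoothness of $f$ and $g$.
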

\begin{corollary}\label{corr:sum_lipschitz}
    If $f,g:\R^d \to \R$ are two $L$-Lipschitz functions on a domain $\cW$. Then, $\frac{f+g}{2}$ is also $L$-Lipschitz on the same domain.
\end{corollary}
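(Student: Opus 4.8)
The plan is to mimic the (omitted) proof of Lemma~\ref{lem:sum_smooth} almost verbatim: the Lipschitz property is closed under averaging by exactly the same mechanism as smoothness, namely linearity of the (partial) derivative followed by the triangle inequality. In the coordinate-wise sense of Definition~\ref{def:lipschitz} --- which is the notion actually needed later, e.g.\ inside the proof of Lemma~\ref{lem:trmean} --- I would argue as follows. Fix $w \in \cW$ and a coordinate $k \in [d]$. Since differentiation is linear, $\partial_k\!\left(\tfrac{f+g}{2}\right)(w) = \tfrac12 \partial_k f(w) + \tfrac12 \partial_k g(w)$, so
\[
\left|\partial_k\!\left(\tfrac{f+g}{2}\right)(w)\right| \;\le\; \tfrac12 |\partial_k f(w)| + \tfrac12 |\partial_k g(w)| \;\le\; \tfrac12 L_k + \tfrac12 L_k \;=\; L_k,
\]
using the hypotheses $|\partial_k f|\le L_k$ and $|\partial_k g|\le L_k$. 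As this holds for every $k$, the function $\tfrac{f+g}{2}$ is $L_k$-Lipschitz in each coordinate, hence ``$L$-Lipschitz'' in the sense of the assumptions (with the same $\hat{L} = \sqrt{\sum_k L_k^2}$).

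If one instead reads ``$L$-Lipschitz'' in the ordinary global (Euclidean) sense, the argument is the same estimate carried out directly on function values: for $w,w'\in\cW$,
\[
\left|\tfrac{f(w)+g(w)}{2} - \tfrac{f(w')+g(w')}{2}\right| \le \tfrac12|f(w)-f(w')| + \tfrac12|g(w)-g(w')| \le L\|w-w'\|.
\]
In both readings the statement is just convexity of $|\cdot|$ (equivalently, the triangle inequality together with absolute homogeneity), so there is essentially no obstacle here; the proof is a one-liner.

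The only point worth being explicit about is the passage to the use-case: the paper invokes this for the cluster population loss $F_c = \tfrac{1}{|G_c|}\sum_{i\in G_c} F_i$, which is an average of possibly many functions rather than just two. I would therefore add a one-line remark that the two-function statement extends to an arbitrary finite average $\tfrac1N\sum_{\ell=1}^N f_\ell$ either by a trivial induction on $N$ or by the identical triangle-inequality estimate with $N$ summands, and that since all the $F_i$ share the same coordinate-wise constants $L_k$, the average again has constants $L_k$ (so $\hat{L}$ is unchanged). That makes Corollary~\ref{corr:sum_lipschitz} directly invokable at the place where it is cited, with no loss in the constant.
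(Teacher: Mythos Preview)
Your proposal is correct and matches the paper's approach: the paper simply states that the proof is the same as that of Lemma~\ref{lem:sum_smooth} with $\nabla f,\nabla g$ replaced by $f,g$, which is exactly your second (global Lipschitz) computation via the triangle inequality. Your additional coordinate-wise version and the remark on extending to finite averages are not in the paper but are straightforward and consistent with how the corollary is used.
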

\begin{proof}
Consider the following term for  any $w_1, w_0 \in \cW$
\begin{align*}
    \norm{\frac{\nabla f(w_1) + \nabla g(w_1)}{2} - \frac{\nabla f(w_0) + \nabla g(w_0)}{2}}&\leq\frac{1}{2}\norm{(\nabla f(w_1) - \nabla f(w_0)) + (\nabla g(w_1) - \nabla g(w_0))}\\
    &\leq\frac{1}{2}(\norm{\nabla f(w_1) - \nabla f(w_0)} + \norm{\nabla g(w_1) - \nabla g(w_0)})\\
    &\leq \frac{1}{2}(L\norm{w_1 - w_0} + L\norm{w_1 - w_0})\\
    &\leq L\norm{w_1 - w_0}
\end{align*}
In the second inequality, we use the triangle inequality of norms. For the third inequality, we use the $L$-smoothness of $f$ and $g$. Thus, $\frac{f + g}{2}$ is also $L$-smooth
The proof of the corollary is same as above, by replacing terms of $\nabla f$ and $\nabla g$ by $f$ and $g$ respectively. 
\end{proof}

\begin{lemma}\label{lem:coordinate_lipschitz}
If each coordinate of a function $f:\R^d \to \R$ is $L_k$-Lipschitz for $k\in[d]$ on the domain $\cW$, then $f$ is $\Hat{L} = \sqrt{\sum_{k=1}^d L_k^2}$-Lipschitz on the same domain $\cW$.
\end{lemma}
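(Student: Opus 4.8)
The plan is to reduce the statement to a uniform bound on the Euclidean norm of the gradient of $f$, and then recover the global Lipschitz estimate by integrating along line segments inside $\cW$. Recall that by Definition~\ref{def:lipschitz} the hypothesis ``$f$ is $L_k$-Lipschitz in coordinate $k$'' means precisely $|\partial_k f(w)| \le L_k$ for every $k\in[d]$ and every $w\in\cW$, where $\partial_k f(w)$ is the $k$-th entry of $\nabla f(w)$.

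First I would establish the pointwise gradient bound. Since the partial-derivative bounds hold simultaneously at each $w\in\cW$,
\[
\norm{\nabla f(w)}^2 \;=\; \sum_{k=1}^d \bigl(\partial_k f(w)\bigr)^2 \;\le\; \sum_{k=1}^d L_k^2 \;=\; \Hat{L}^2,
\]
so $\norm{\nabla f(w)} \le \Hat{L}$ uniformly over $\cW$. Next, fix $w,w'\in\cW$; since $\cW$ is convex (in the paper it is taken closed and convex with diameter $D$), the segment $w_t := w' + t(w-w')$ lies in $\cW$ for all $t\in[0,1]$, and the fundamental theorem of calculus gives $f(w) - f(w') = \int_0^1 \lin{\nabla f(w_t), w-w'}\,dt$. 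Applying Cauchy--Schwarz inside the integral together with the gradient bound yields
\[
\abs{f(w) - f(w')} \;\le\; \int_0^1 \norm{\nabla f(w_t)}\,\norm{w-w'}\,dt \;\le\; \Hat{L}\,\norm{w-w'},
\]
which is exactly the assertion that $f$ is $\Hat{L}$-Lipschitz on $\cW$.

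There is essentially no obstacle here: the only points requiring care are that the coordinate bounds must hold at every point of $\cW$ (so that Step~1 is a genuine uniform bound) and that $\cW$ be convex (so the connecting segments stay inside the domain), both of which are part of the standing setup. An equivalent, integration-free route that I could use instead is to telescope one coordinate at a time---changing $w'$ into $w$ coordinate by coordinate and using the single-coordinate Lipschitz property at each step---to obtain $\abs{f(w)-f(w')} \le \sum_{k=1}^d L_k\,\abs{w_k - w'_k}$, and then one further application of Cauchy--Schwarz gives $\sum_{k=1}^d L_k\,\abs{w_k - w'_k} \le \Hat{L}\,\norm{w-w'}$, recovering the same conclusion.
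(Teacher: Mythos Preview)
Your proposal is correct. Your primary route---bounding $\norm{\nabla f(w)}$ by $\Hat{L}$ via Definition~\ref{def:lipschitz} and then integrating along the straight segment in the convex set $\cW$---is different from the paper's argument, which instead telescopes coordinate by coordinate through the intermediate points $w[k]=((w_1)_1,\ldots,(w_1)_k,(w_0)_{k+1},\ldots,(w_0)_d)$ to obtain $\abs{f(w_1)-f(w_0)}\le\sum_{k=1}^d L_k\abs{(w_1)_k-(w_0)_k}$ and then applies Cauchy--Schwarz. Interestingly, the telescoping alternative you sketch at the end \emph{is} the paper's proof. Your main argument is arguably the cleaner fit to Definition~\ref{def:lipschitz} (a pointwise partial-derivative bound rather than a one-variable Lipschitz condition) and has the advantage that the connecting path stays in $\cW$ by convexity; the paper's intermediate points $w[k]$ need not lie in a general convex $\cW$, so strictly speaking that route needs either an axis-aligned domain or the coordinate bound to hold on a larger set. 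Both approaches ultimately reduce to the same Cauchy--Schwarz step.
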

\begin{proof}

Consider $w_1,w_0 \in \cW$.Define a sequence of variables $\{w[k] = ((w_1)_1,(w_1)_2\ldots, (w_1)_k, (w_0)_{k+1},\ldots (w_0)_{d})^\intercal\}_{k=0}^d$.
Then, $w_1 = w[d]$ and $w_0 = w[0]$
\begin{align*}
    \abs{f(w_1) - f(w_0)} =&\abs{\sum_{k=1}^d (f(w[k]) -f(w[k-1]))}\\
    =&\sum_{k=1}^d L_k \abs{(w_1)_k -(w_0)_k}
\end{align*}
The second inequality follows by using triangle rule. Then, $f(w[k])$ and $f(w[k-1])$ differ only in the $k^{th}$ coordinate, so we apply $L_k$ coordinate-wise Lipschitzness.
Now, consider a random variable $v\in\R^d$ such that $v_k = L_k \frac{\abs{(w_1)_k -(w_0)_k}}{(w_1)_k -(w_0)_k}$ if $(w_1)_k -(w_0)_k \neq 0$, else $0$.
Then, 
\begin{align}
    \sum_{k=1}^d L_k \abs{(w_1)_k -(w_0)_k} =& \lin{v, w_1 - w_0}\\
    \leq& \norm{v}\norm{w_1 - w_0}\\
    \leq&\sqrt{\sum_{k=1}^d L_k^2} \norm{w_1 -w_0}
\end{align}
Here, we use the Cauchy-Schwartz inequality for the second step. Then, note that each coordinate of $v$ is bounded by $L_k$.
\end{proof}

\end{document}